\algnewcommand{\LComment}[1]{\State \textcolor{blue}{\(\triangleright\) \textit{{#1}}}}
\pgfplotsset{compat=newest}
\renewcommand{\Pr}{\operatorname{Pr}}
\newcommand{\V}[1]{\bm{#1}}
\DeclareMathOperator{\E}{E}
\DeclareMathOperator*{\argmin}{arg\,min}
\algnewcommand{\IfThenElse}[3]{\State \algorithmicif\ #1\ \algorithmicthen\ #2\ \algorithmicelse\ #3}
\algnewcommand{\IfThen}[2]{\State \algorithmicif\ #1 \algorithmicthen\ #2}
\newtheoremstyle{sltheorem}{}{}{\slshape}{}{\bfseries}{.}{ }{}
\theoremstyle{sltheorem}
\newtheorem{theorem}{Theorem}
\newtheorem{proposition}{Proposition}
\newtheorem{lemma}{Lemma}
\newtheorem{definition}{Definition}
\theoremstyle{remark}
\newtheorem{remark}{Remark}
\DeclareMathSymbol{\widehatsym}{\mathord}{largesymbols}{"62}
\DeclareMathSymbol{\widetildesym}{\mathord}{largesymbols}{"65}
\newcommand\lowerwidetildesym{%
  \text{\smash{\raisebox{-1.3ex}{%
    $\widetildesym$}}}}
\newcommand\fixwidetilde[1]{%
  \mathchoice
    {\accentset{\displaystyle\lowerwidetildesym}{#1}}
    {\accentset{\textstyle\lowerwidetildesym}{#1}}
    {\accentset{\scriptstyle\lowerwidetildesym}{#1}}
    {\accentset{\scriptscriptstyle\lowerwidetildesym}{#1}}
}
\title{List-Level Distribution Coupling with Applications to Speculative Decoding and Lossy Compression}
\author{%
    Joseph Rowan\textsuperscript{\textnormal{1}} \quad Buu Phan\textsuperscript{\textnormal{1}} \quad Ashish Khisti\textsuperscript{\textnormal{1}} \\
    \textsuperscript{1}University of Toronto \\
    \texttt{\{joseph.rowan,truong.phan\}@mail.utoronto.ca}, \texttt{akhisti@ece.utoronto.ca}
}
\date{}
\begin{document}

\maketitle
\begin{abstract}
    We study a relaxation of the problem of coupling probability distributions --- a list of samples is generated from one distribution and an \emph{accept} is declared if any one of these samples is identical to the sample generated from the other distribution.
    We propose a novel method for generating samples, which extends the Gumbel-max sampling suggested in \citet{daliri25} for coupling probability distributions. We also establish a corresponding lower bound on the acceptance probability, which we call the \emph{list matching lemma}.
    We next discuss two applications of our setup.
    First, we develop a new mechanism for multi-draft speculative sampling that is simple to implement and achieves performance competitive with baselines such as SpecTr \cite{sun23} and SpecInfer \cite{miao24} across a range of language tasks. 
    Our method also guarantees a certain degree of \emph{drafter invariance} with respect to the output tokens which is not supported by existing schemes.
    We also provide a theoretical lower bound on the token level acceptance probability.
    As our second application, we consider distributed lossy compression with side information in a setting where a source sample is compressed and available to multiple decoders, each with independent side information.
    We propose a compression technique that is based on our generalization of Gumbel-max sampling and show that it provides significant gains in experiments involving synthetic Gaussian sources and the MNIST image dataset.
\end{abstract}

\section{Introduction}\label{sec:introduction}
Coordinated sampling, where samples are drawn from two distributions in such a way that the probability of the samples being equal is maximized, is a fundamental problem in probability \cite{den12,jacob20,thorisson00,roch24} with applications to machine learning \cite{daliri25,leviathan23,sun23}, data compression \cite{theis22, flamich20} and information theory \cite{cuff08, li19}.
The general problem is as follows.
Consider two parties, Alice and Bob, who wish to generate samples $X$ and $Y$ from distributions $p_X$ and $q_Y$ respectively.
For now, we limit our attention to the case where $p_X$ and $q_Y$ are discrete; importance sampling will later allow us to extend the discussion to approximate sampling from continuous distributions as well.
A first goal is to construct a scheme that Alice and Bob can follow to maximize the matching or acceptance probability $\Pr[X = Y]$ while ensuring each sample follows the appropriate marginal distribution.
If they both have access to $p_X$ and $q_Y$, the problem reduces to that of finding the best joint distribution for $X$ and $Y$ subject to the marginal constraints, which is called a \emph{maximal coupling} between $p_X$ and $q_Y$ \cite{thorisson00}.
For discrete distributions, such an coupling can be found and the resulting optimal matching probability is $\Pr[X = Y] = 1 - d_{\mathrm{TV}}(p_X, q_Y)$, where $d_{\mathrm{TV}}$ is the total variation distance \cite{thorisson00, roch24, den12}.

However, the requirement that both parties can access $p_X$ and $q_Y$ precludes the use of maximal couplings in many settings where it is desirable to limit communication between Alice and Bob, meaning each can no longer sample with full knowledge of the other's distribution.
Sampling methods based on common randomness offer a convenient solution, and have been shown to achieve matching probabilities close to that of the maximal coupling despite their relative simplicity \cite{daliri25}.
In particular, if Alice and Bob both sample from $p_X$ and $q_Y$ by applying the Gumbel-max trick to shared random numbers, it is possible to achieve $\Pr[X = Y] \geq (1 - d_{\mathrm{TV}}(p_X, q_Y)) / (1 + d_{\mathrm{TV}}(p_X, q_Y))$, which is a lower bound in the communication-free setting \cite{bavarian20,daliri25}.

In this paper, we are interested in an extension of the communication-free coupling problem where one of the parties, say Alice, generates $K$ independent samples from $p_X$.
Letting the set of Alice's samples be $\{ X^{(1)}, \ldots, X^{(K)} \}$, the new matching probability is taken to be $\Pr[Y \in \{ X^{(1)}, \ldots, X^{(K)} \}]$; intuitively, Alice's output is said to match Bob's if at least one sample agrees.
However, it is not immediately obvious how $Y$ and $X^{(1)}, \ldots, X^{(K)}$ should be sampled in this new setting if we wish to maximize the matching probability.
As a solution, we introduce a simple algorithm that we call \emph{Gumbel-max list sampling} (GLS) to generate coupled samples, together with a corresponding lower bound on the acceptance probability. 
On the practical side, we apply GLS to derive a new algorithm for multi-draft speculative decoding \cite{miao24,sun23}, a popular technique for accelerating inference from large language models (LLMs).
Later, we also demonstrate an application to distributed lossy compression when independent side information is available at each of $K$ separate decoders, extending the classical formulation of \citet{wyner76}.
In summary, our contributions are:
\begin{enumerate}
    \item We present GLS as a conceptually simple framework for coordinated sampling from discrete distributions when one party produces multiple proposals, extending the single-proposal Gumbel-max coupling technique in \citet{daliri25}, and establish a theoretical lower bound on the matching probability.
    \item Based on GLS, we describe a novel  multi-draft speculative decoding scheme. Our scheme differs from prior approaches \cite{miao24,sun23}, as it does not involve rejection sampling and satisfies a certain notion of \emph{drafter invariance} with respect to the output tokens, for which we give a formal definition. 
    \item We use GLS to devise a compression technique for distributed lossy source coding where a sample is compressed and sent simultaneously to several decoders, each having access to independent side information.
    We conduct experiments and show improved rate-distortion performance on Gaussian sources and the MNIST image dataset.
\end{enumerate}

\section{Related work}\label{sec:related_work}
\paragraph{Couplings and coordinated sampling.}
From a theoretical perspective, couplings between probability distributions have been used to prove results in probability theory, including limit theorems, convergence results and various inequalities \cite{thorisson00,den12,jacob20}.
A reference on these mathematical techniques can be found in \citet{thorisson00}, including algorithms for constructing maximal couplings and their relationship to the total variation distance.
More relevant to this paper are techniques that create couplings via common random numbers, which can be applied in very general contexts \cite{gal84,glasserman92}.
In particular, applications to categorical distributions have been studied under the name of weighted coordinate sampling \cite{manasse10,bavarian20}.
Recently, \citet{daliri25} demonstrated how Gumbel-max sampling can be used to generate such couplings and simultaneously introduced the idea of drafter-invariant speculative decoding, though their results and theory were limited to the single-draft case.
Part of our contribution is an extension of their work to settings with multiple proposals.

\paragraph{Lossy source coding via channel simulation.}
Lossy compression schemes based on channel simulation rely on similar probabilistic tools to those examined in this paper.
While perfect reconstruction can be achieved with an unbounded number of samples, as established by \citet{li18} through the Poisson functional representation lemma (PFRL), practical methods often resort to importance sampling to communicate approximate samples from continuous distributions \cite{havasi19,theis22}.
Extending this to the classical problem of source coding with side information at the decoder as posed by \citet{wyner76}, \citet{li19} introduced the Poisson matching lemma (PML), which they used to prove a one-shot version of the Wyner-Ziv theorem along with non-asymptotic variants of other standard results from information theory. 
However, the PML construction still requires access to an infinite number of samples.
\citet{phan24} adapted the PML to practical settings through importance sampling at the cost of allowing a bounded error probability, introducing a framework called the importance matching lemma (IML).
We use GLS to derive an extension where there are several independent decoders with independent side information.

\paragraph{Speculative decoding.}
Speculative decoding, concurrently introduced by \citet{leviathan23} and \citet{chen23}, is a popular technique for accelerating inference from LLMs.
The main idea is to use a small draft model to parallelize autoregressive decoding from a larger target model, adopting a draft-then-verify approach.
Follow-up works have focused on aligning the drafter more closely with the target \cite{zhou23,liu24}, decreasing the cost of running the draft model \cite{cai24,li24a} or optimizing the length of speculative generation \cite{liu25}.
Most pertinent to our work are multi-draft extensions of speculative decoding where several draft tokens are selected as candidates for verification to increase the expected number of accepted tokens \cite{khisti25}.
While maximal couplings are generally used for single-draft methods, they prove intractable in the multi-draft case, where methods based on heuristics or approximations of optimal transport are often used instead \cite{miao24,sun23}.
Directly relevant to our contribution is the single-draft drafter-invariant speculative decoding technique recently proposed by \citet{daliri25}, which demonstrates a scheme based on common random numbers instead of rejection sampling.
Our work extends this idea to the multi-draft setting.

\section{Gumbel-max List Sampling}\label{sec:lm_lemma}
Recalling the setup from \cref{sec:introduction}, where Bob samples $Y$ from $q_Y$ and Alice samples $X^{(1)}, \ldots, X^{(K)}$ from $p_X$ without communication, how should these samples be generated to maximize the acceptance probability?
As a motivating example, take $K = 2$ and let the support of $p_X$ and $q_Y$ be $\{ 1, 2 \}$.
Using the Gumbel-max approach from \citet{daliri25}, we could start by choosing shared i.i.d.\ random numbers $S_1, S_2 \sim \operatorname{Exp}(1)$ and sampling
\begin{equation}\label{eqn:gls_idea_1}
    Y = \argmin\left\{ \frac{S_1}{q_Y(1)}, \frac{S_2}{q_Y(2)} \right\} \text{ and }  X^{(1)} = \argmin\left\{ \frac{S_1}{p_X(1)}, \frac{S_2}{p_X(2)} \right\},
\end{equation}
which would ensure that $\Pr[X^{(1)} = Y] \geq (1 - d_{\mathrm{TV}}(p_X, q_Y)) / (1 + d_{\mathrm{TV}}(p_X, q_Y))$ \cite{daliri25}.
But, how should we choose $X^{(2)}$?
Unfortunately, if we use $S_1$ and $S_2$ again to sample $X^{(2)}$, we would always get $X^{(1)} = X^{(2)}$.
Hence, the matching probability would not increase with the number of samples from Alice.
An alternative is to create independent random numbers $S_3, S_4 \sim \operatorname{Exp}(1)$ and take
\begin{equation}\label{eqn:gls_idea_2}
    X^{(2)} = \argmin\left\{ \frac{S_3}{p_X(1)}, \frac{S_4}{p_X(2)} \right\}.
\end{equation}
This \emph{does} increase the acceptance probability, but there is now no coupling between $X^{(2)}$ and $Y$.
As a result $P[X^{(2)} = Y]$ in general can be very small
and we cannot expect a large gain compared to when Alice only generates $X^{(1)}$.
The key idea of our GLS algorithm is to instead couple $X^{(1)}$ and $X^{(2)}$ with $Y$ simultaneously using a minimum operation over the shared exponential random variables.
More precisely, we choose
\[
    Y = \argmin\left\{ \frac{\min\{ S_1, S_3 \}}{q_Y(1)}, \frac{\min\{ S_2, S_4 \}}{q_Y(2)} \right\},
\]
while $X^{(1)}$ and $X^{(2)}$ are sampled as in \eqref{eqn:gls_idea_1} and \eqref{eqn:gls_idea_2}.
Intuitively, our approach coordinates $X^{(1)}$ and $X^{(2)}$ symmetrically with Bob's choice of $Y$, increasing the probability that at least one matches.
We now describe GLS for arbitrary $N$ and $K$ and provide a lower bound on the acceptance probability.

We assume without loss of generality that $p_X$ and $q_Y$ are both on the alphabet $\Omega = \{ 1, \ldots, N \}$.
In what follows we will call $p_X$ the proposal or draft distribution and $q_Y$ the target distribution; to simplify the notation we define $p_i \coloneq p_X(i)$ and $q_i \coloneq q_Y(i)$.
Further let $\{ \{ S_i^{(k)} \}_{i=1}^{N} \}_{k=1}^{K}$ be $K$ sets of $N$ i.i.d.\ random variables, with $S_i^{(k)} \sim \operatorname{Exp}(1)$ for all $i$ and $k$.
In practice, we can easily generate the $S_i^{(k)}$'s given a source of uniform random numbers by taking $S_i^{(k)} = -\ln U_i^{(k)}$ where each $U_i^{(k)} \sim \operatorname{Unif}[0, 1]$.
The GLS procedure is as follows, and is also summarized in \cref{alg:gls}.
\begin{enumerate}
    \item Select $Y = \argmin_{1 \leq i \leq N} \min_{1 \leq k \leq K} S_i^{(k)} / q_i$ to generate a sample from $q_Y$.
    \item Select $X^{(k)} = \argmin_{1 \leq i \leq N} S_i^{(k)} / p_i$, $k = 1, \ldots, K$, to generate i.i.d.\ samples from $p_X$.
\end{enumerate}
The acceptance probability is $\Pr[Y \in \{ X^{(1)}, \ldots, X^{(K)} \}]$.
Before stating our main theorem, which will give a lower bound on this quantity, the following proposition is needed to establish that GLS returns valid samples from $p_X$ and $q_Y$.
The proof can be found in \cref{sec:distribution_proof}.
\vspace{1ex}
\begin{proposition}\label{prop:distribution}
    The procedure described above (GLS) generates samples such that:
    \begin{enumerate}[label=\normalfont\arabic*.]
        \item $\Pr[X^{(k)} = j] = p_j$ for all $k \in \{1, \ldots, K \}$ and $j \in \{ 1, \ldots, N \}$.
        \item $\Pr[Y = j] = q_j$ for all $j \in \{ 1, \ldots, N \}$.
    \end{enumerate}
\end{proposition}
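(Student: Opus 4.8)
The plan is to deduce both parts from one elementary fact about independent exponential variables, the \emph{exponential race} lemma: if $Z_1, \dots, Z_N$ are independent with $Z_i \sim \operatorname{Exp}(\lambda_i)$ and $\lambda_i > 0$, then $\Pr\bigl[\argmin_i Z_i = j\bigr] = \lambda_j / \sum_{i=1}^N \lambda_i$. (One also has $\min_i Z_i \sim \operatorname{Exp}(\sum_i \lambda_i)$ independently of the argmin, though that is not needed here.) Since the exponential law is atomless, ties occur with probability zero and the argmin is almost surely well defined; indices with $p_i = 0$ or $q_i = 0$ may simply be removed from the alphabet, as the corresponding ratio is $+\infty$. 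The lemma itself is a one-line computation: conditioning on $Z_j = t$, one gets $\Pr[\argmin_i Z_i = j] = \int_0^\infty \lambda_j e^{-\lambda_j t}\prod_{i\neq j} e^{-\lambda_i t}\,dt = \lambda_j/\sum_i \lambda_i$.

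For Part~1, fix $k$. The variables $\{S_i^{(k)}\}_{i=1}^N$ are i.i.d.\ $\operatorname{Exp}(1)$, so by the scaling property of the exponential distribution $S_i^{(k)}/p_i \sim \operatorname{Exp}(p_i)$, independently across $i$. Applying the exponential race lemma with $\lambda_i = p_i$ and using $\sum_i p_i = 1$ gives $\Pr[X^{(k)} = j] = p_j$.

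For Part~2, set $T_i = \min_{1 \le k \le K} S_i^{(k)}$. Each $T_i$ is a minimum of $K$ i.i.d.\ $\operatorname{Exp}(1)$ variables, hence $T_i \sim \operatorname{Exp}(K)$; moreover the blocks $\{S_i^{(k)}\}_{k=1}^K$ for distinct $i$ are disjoint, so $T_1, \dots, T_N$ are mutually independent. Therefore $T_i/q_i \sim \operatorname{Exp}(K q_i)$ independently across $i$, and since $Y = \argmin_i T_i/q_i$, the exponential race lemma with $\lambda_i = K q_i$ yields $\Pr[Y = j] = K q_j/\sum_i K q_i = q_j$.

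There is no substantive obstacle; the only point requiring care is the independence bookkeeping in Part~2 — namely, that collapsing each block $\{S_i^{(k)}\}_k$ to its minimum $T_i$ preserves independence across $i$ precisely because distinct blocks share no randomness — together with the continuity remark guaranteeing that the argmin is almost surely unique.
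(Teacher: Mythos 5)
Your proof is correct and follows essentially the same route as the paper's: both arguments reduce Part~1 to a race among independent exponentials with rates $p_i$ (via the scaling property), and Part~2 to the same race after collapsing each block $\{S_i^{(k)}\}_{k=1}^{K}$ to its minimum $T_i \sim \operatorname{Exp}(K)$, exactly the paper's $S_i^\ast$. The only cosmetic difference is that you invoke the general $\Pr[\argmin_i Z_i = j] = \lambda_j/\sum_i \lambda_i$ form of the lemma, while the paper phrases it as the two-variable comparison of $S_j$ against the minimum of the remaining scaled exponentials; the computations are identical.
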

With these preliminaries out of the way, we can state our lower bound, which we call the \emph{list matching lemma} (LML).
Again, the proof is deferred to \cref{sec:matching_probability_proof}.

\begin{theorem}[List matching lemma]\label{thm:lm_lemma}
    The matching probability is bounded below as
    \begin{equation}\label{eqn:thm1_unconditional}
        \Pr[Y \in \{ X^{(1)}, \ldots, X^{(K)} \}] \geq \sum_{j=1}^{N} \frac{K}{\sum_{i=1}^{N} [\max\{ q_i / q_j, p_i / p_j \} + (K - 1) q_i / q_j]}.
    \end{equation}
    Furthermore, conditioned on $Y = j$, we have
    \begin{equation}\label{eqn:thm1_conditional}
        \Pr[Y \in \{ X^{(1)}, \ldots, X^{(K)} \} \mid Y = j] \geq \left( 1 + \frac{q_j}{K p_j} \right)^{-1}.
    \end{equation}
\end{theorem}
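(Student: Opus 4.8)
The plan is to reduce both inequalities to a single conditional estimate. Set $C_j := \sum_{i \neq j} \bigl(q_j p_i / p_j - q_i\bigr)_+$, writing $(x)_+ := \max\{x,0\}$. I would first prove
\[
    \Pr\bigl[Y \in \{X^{(1)}, \ldots, X^{(K)}\} \,\big|\, Y = j\bigr] \;\geq\; \frac{K}{K + C_j},
\]
and then derive the theorem from it. For \eqref{eqn:thm1_conditional} this needs only the crude bound $C_j \leq \sum_{i \neq j} q_j p_i / p_j = (q_j/p_j)(1-p_j) \leq q_j/p_j$, which turns $K/(K+C_j)$ into $(1 + q_j/(Kp_j))^{-1}$. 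For \eqref{eqn:thm1_unconditional} I would average over $Y$ using $\Pr[Y=j]=q_j$ from \cref{prop:distribution}, giving the lower bound $\sum_j q_j\, K/(K+C_j)$, and then rewrite each summand: using $\max\{q_i/q_j, p_i/p_j\} = q_i/q_j + (p_i/p_j - q_i/q_j)_+$ and $\sum_i q_i/q_j = 1/q_j$, one finds $\sum_i[\max\{q_i/q_j, p_i/p_j\} + (K-1)q_i/q_j] = (K + C_j)/q_j$, so $q_j\, K/(K+C_j)$ is exactly the $j$-th term on the right-hand side of \eqref{eqn:thm1_unconditional}.

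To establish the conditional estimate, the key move is to expose the conditional law of the exponentials once the winner is revealed. Let $G_i := \min_{1 \leq k \leq K} S_i^{(k)} \sim \operatorname{Exp}(K)$, so that $Y = \argmin_i G_i/q_i$; let $W := \min_i G_i/q_i$ be the winning value, and let $k^\star := \argmin_{1 \leq k \leq K} S_j^{(k)}$ be the index of Alice's copy achieving the minimum in the winning coordinate. On $\{Y=j\}$ the event $\{X^{(k^\star)} = j\}$ is contained in $\{Y \in \{X^{(1)}, \ldots, X^{(K)}\}\}$, so it suffices to lower-bound $\Pr[X^{(k^\star)} = j \mid Y = j]$. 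I would compute this by conditioning further on $W = v$: an exponential-race / memorylessness argument (done via the joint density of the $G_i$ to handle the measure-zero conditioning) shows that $W \sim \operatorname{Exp}(K)$ independently of $Y$, and that, given $\{Y=j, W=v\}$, (i) $S_j^{(k^\star)} = v q_j$; (ii) for each $i \neq j$ all of $S_i^{(1)}, \ldots, S_i^{(K)}$ exceed $vq_i$ with excesses i.i.d.\ $\operatorname{Exp}(1)$; and (iii) $k^\star$ depends only on coordinate $j$, hence is conditionally independent of all coordinates $i \neq j$. Items (ii) and (iii) together make $E_i := S_i^{(k^\star)} - vq_i$, $i \neq j$, i.i.d.\ $\operatorname{Exp}(1)$.

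With this in hand the computation is routine: $X^{(k^\star)} = j$ holds iff $S_j^{(k^\star)}/p_j < S_i^{(k^\star)}/p_i$ for all $i \neq j$, i.e.\ iff $E_i > v(q_j p_i/p_j - q_i)$ for all $i \neq j$, which has conditional probability $\prod_{i\neq j} e^{-v(q_j p_i/p_j - q_i)_+} = e^{-vC_j}$; integrating against the $\operatorname{Exp}(K)$ law of $W$ gives $\Pr[X^{(k^\star)} = j \mid Y=j] = K/(K+C_j)$, which yields the desired estimate. Note this only uses the single copy $k^\star$; retaining the whole list can only help, so the stated bound need not be tight, but it is clean.

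I expect the main obstacle to be a rigorous treatment of the conditional-distribution claims (i)--(iii): one must condition on the pair $(Y, W)$ — a zero-probability event for $W$ — and argue simultaneously that the losing coordinates "reset" to fresh exponentials and that the tie-breaking index $k^\star$ inside the winning coordinate carries no information about the losing coordinates, so that $S_i^{(k^\star)}$ behaves like a generic member of coordinate $i$'s batch. This is most safely handled by working with an explicit joint density (or a disintegration / limiting argument) rather than informal conditioning. Everything downstream — the exponential integral, the crude bound for \eqref{eqn:thm1_conditional}, and the algebraic identity for \eqref{eqn:thm1_unconditional} — is then a direct calculation.
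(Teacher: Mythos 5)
Your proposal is correct, and the bound it produces is exactly the paper's: your conditional estimate $K/(K+C_j)$ with $C_j=\sum_{i\neq j}(q_jp_i/p_j-q_i)_+$ coincides, after the algebraic identity you state, with the right-hand side of \eqref{eqn:thm1_unconditional} term by term, and your crude bound $C_j\le q_j/p_j$ reproduces \eqref{eqn:thm1_conditional}. The core idea is also the same as the paper's: both arguments lower-bound the matching event by insisting that the particular draft copy $k^\star$ whose exponential achieves the winning minimum also selects $j$ (in the paper this appears as the event $\{\tilde Y=j,\,X^{(1)}=j\}$ on the flattened alphabet, which by symmetry over copies is exactly $\{Y=j,\,X^{(k^\star)}=j\}$). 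Where you differ is in execution. The paper flattens the $NK$ exponentials, endows them with the augmented target $(\,q_i/K\,)$, and computes $\Pr[\tilde Y=j,\,X^{(1)}=j]$ in closed form as a single race among independent exponentials, with rate parameter $\sum_i[\max\{q_i/q_j,p_i/p_j\}+(K-1)q_i/q_j]$; this sidesteps entirely the measure-zero conditioning on the winning value $W$ that you correctly flag as the main technical burden of your route (and which you would indeed need to handle by an explicit density or disintegration argument). In exchange, your route gives an exact conditional formula $\Pr[X^{(k^\star)}=j\mid Y=j]=K/(K+C_j)$ and a cleaner logical order (conditional bound first, unconditional by averaging against $\Pr[Y=j]=q_j$), whereas the paper computes the joint probability first and divides by $q_j$. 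If you want to keep your conditioning-based presentation but minimize the rigor overhead, note that the paper's flattening trick is precisely a way to evaluate the same probability without ever conditioning on $\{W=v\}$.
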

\begin{remark}\label{remark:lml_tightness}
    The bound in \Cref{eqn:thm1_unconditional} recovers the exact matching probability in three important cases, namely for degenerate distributions where $p_X$ has all its mass on one element, when $K = 1$ for any pair of distributions, and when $p_X$ and $q_Y$ are identical.
    These special cases are considered further and proven in \cref{sec:lml_tightness_proof}.
\end{remark}

\Cref{thm:lm_lemma} can be seen as a direct extension to the discrete case of the PML in \citet[Lemma~1 on p.~3]{li19} and is in fact identical when there is a single proposal.
\citeauthor{li19} did not however consider any precise counterpart to \cref{thm:lm_lemma}, though they did discuss a different list decoding setting with multiple samples from one decoder \cite[Remark~10 on p.~10]{li19}.
From \eqref{eqn:thm1_conditional}, we see that, for any $j$ such that $q_j > 0$ and $p_j > 0$, the matching probability achieved by GLS approaches $1$ for large $K$.
Moreover, an analog to \cref{prop:distribution} holds if we instead sample independently from $K$ potentially different draft distributions $p_X^{(1)}, \ldots, p_X^{(K)}$.
Since the notation becomes somewhat cumbersome, the formal statement and proof of this extension are relegated to \cref{sec:distribution_extension_proof}.

\begin{algorithm}[t]
    \caption{Gumbel-max List Sampling}\label{alg:gls}
    \begin{algorithmic}[1]\small
        \Function{SampleGLS}{$p_X$, $q_Y$}
            \State Choose i.i.d. uniform random variables $\{ \{ U_i^{(k)} \}_{i=1}^{N} \}_{k=1}^{K}$ on $[0, 1]$.
            \State $Y \gets \argmin_{1 \leq i \leq N} \min_{1 \leq k \leq K} [-\ln U_i^{(k)} / q_{i}]$
            \State $X^{(k)} \gets \argmin_{1 \leq i \leq N} [-\ln U_i^{(k)} / p_i]$, $1 \leq k \leq K$
            \IfThenElse{$Y \in \{X^{(1)}, \ldots, X^{(K)}\}$}{\algorithmicreturn\ accept}{\algorithmicreturn\ reject}
        \EndFunction
    \end{algorithmic}
\end{algorithm}

\section{Application to drafter-invariant multi-draft speculative decoding}
Using GLS to sample from the output distribution of an LLM suggests a simple but novel algorithm for drafter-invariant multi-draft speculative decoding, analogous to the single-draft procedure from \citet{daliri25}, which uses standard Gumbel-max sampling.
In this section, we review the mathematical formulation of multi-draft speculative decoding, define what exactly we mean by drafter invariance and present our new algorithm along with a lower bound on the token acceptance probability.

\subsection{Problem setup and drafter invariance}

Let $\mathcal{M}_b$ and $\mathcal{M}_s^{(k)}$ be the target and draft LLMs respectively, where $1 \leq k \leq K$.
$\mathcal{M}_b$ and $\mathcal{M}_s^{(k)}$ take the form of conditional distributions $\mathcal{M}_b(\cdot \mid x_{1:t})$ and $\mathcal{M}_s^{(k)}(\cdot \mid x_{1:t})$, which give the probability of a token appearing at position $t + 1$ given the context $x_{1:t} \coloneq (x_1, \ldots, x_t)$.
For convenience, we will refer to the context as $\V{c}$ and assume the alphabet is $\Omega = \{ 1, \ldots, N \}$.
In multi-draft speculative decoding \cite{miao24,sun23}, $K$ independent drafts of length $L$, which we denote $X_{1:L}^{(1)}, \ldots, X_{1:L}^{(K)}$, are generated using either batching or tree attention \cite{miao24} from $\mathcal{M}_s^{(1)}, \ldots, \mathcal{M}_s^{(K)}$ and then verified in parallel by the target model.
In practice, the draft tokens are often i.i.d.\ and only a single draft model $\mathcal{M}_s$ is used \cite{sun23}.
If at least one of the $K$ candidate tokens is accepted at each step, the first such token is appended to the output sequence.
If all are rejected, the verification procedure stops and an extra token is sampled from an appropriately chosen residual distribution.
The final output sequence is $Y_{1:\tau}$, where $\tau$ is the number of accepted tokens plus one.

We propose the following notion of drafter invariance, which we later show empirically can be accommodated without decreasing the inference speed.
Intuitively, our notion requires that a given set of draft sequences must always induce the same output distribution regardless of the draft models that generated them.
To state this condition formally, we write $X_{1:L}^{(k)} = X_{1:L}(\mathcal{M}_s^{(k)})$ in what follows to show the dependence between each draft sequence and the language model that produced it.
In \cref{sec:strong_invariant}, we further connect our notion to one introduced in \citet{daliri25} for single-draft speculative decoding only.

\begin{definition}[Conditional drafter invariance]\label{def:weak_drafter_invariance}
    A multi-draft speculative decoding algorithm is \emph{conditionally drafter invariant} if, for all $1 \leq j \leq \tau$,
    \begin{multline*}
        \Pr[Y_{1:j} = y_{1:j} \mid \mathcal{R}, \; \V{c}, \; X_{1:L}(\mathcal{M}_s^{(1)}) = x_{1:L}^{(1)}, \ldots, X_{1:L}(\mathcal{M}_s^{(K)}) = x_{1:L}^{(K)}] \\
        = \Pr[Y_{1:j} = y_{1:j} \mid \mathcal{R}, \; \V{c}, \; X_{1:L}(\fixwidetilde{\mathcal{M}}_s^{(1)}) = x_{1:L}^{(1)}, \ldots, X_{1:L}(\fixwidetilde{\mathcal{M}}_s^{(K)}) = x_{1:L}^{(K)}]
    \end{multline*}
    for any choice of language models $\mathcal{M}_s^{(1)}, \ldots, \mathcal{M}_s^{(K)}$ and $\fixwidetilde{\mathcal{M}}_s^{(1)}, \ldots, \fixwidetilde{\mathcal{M}}_s^{(K)}$.
\end{definition}

In the following section, we will present a conditionally drafter-invariant speculative decoding algorithm based on GLS.
Existing schemes, including SpecTr \cite{sun23} and SpecInfer \cite{miao24}, do not satisfy conditional drafter invariance since their token verification procedures depend explicitly on the draft model's logits.
Consequently, modifications affecting the draft model will propagate to the output even if the draft tokens themselves remain unchanged.

\subsection{An algorithm for drafter-invariant multi-draft speculative decoding}

Our approach involves coupled sampling from the draft and target models via GLS.
Supposing we wish to use $K$ i.i.d.\ drafts from a single model $\mathcal{M}_s$, at each decoding step we use the proposal distribution $p_X = \mathcal{M}_s(\cdot \mid \V{c})$ and the target distribution $q_Y = \mathcal{M}_b(\cdot \mid \V{c})$, then draw coupled samples according to \cref{alg:gls}.
This procedure is detailed in \cref{alg:multi_draft_invariant}, which also keeps track of the set of currently viable drafts and accounts for the fact that, in practical speculative decoding implementations, all the draft tokens must be generated autoregressively ahead of time.

The results developed in \cref{sec:lm_lemma} allow us to state some important properties of our method.
Using the LML directly with the proposal and target distributions as above immediately gives the following lower bound on the token-level acceptance probability.
\begin{proposition}\label{prop:acceptance_probability}
    Let $\mathcal{M}_s(\cdot \mid \V{c})$ be $p_X$ and $\mathcal{M}_b(\cdot \mid \V{c})$ be $q_Y$.
    Then, the probability of accepting at least one token at the current step with $K$ active drafts and context $\V{c}$ satisfies
    \begin{equation*}\label{eqn:acceptance_rate}
        \Pr[\mathrm{accept}] \geq \sum_{j=1}^{N} \frac{K}{\sum_{i=1}^{N} [\max\{ q_i / q_j, p_i / p_j \} + (K - 1) q_i / q_j]}.
    \end{equation*}
\end{proposition}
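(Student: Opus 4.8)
The plan is to obtain \cref{prop:acceptance_probability} as an immediate corollary of the list matching lemma (\cref{thm:lm_lemma}). The first step is to make precise the correspondence between a single decoding step of our scheme and one run of GLS. At the current step, with context $\V{c}$ and $K$ active drafts, \cref{alg:multi_draft_invariant} produces the target token as $Y = \argmin_{1 \leq i \leq N} \min_{1 \leq k \leq K} S_i^{(k)}/q_i$ and the $k$-th candidate token as $X^{(k)} = \argmin_{1 \leq i \leq N} S_i^{(k)}/p_i$, with $p_X = \mathcal{M}_s(\cdot \mid \V{c})$, $q_Y = \mathcal{M}_b(\cdot \mid \V{c})$, $N$ the vocabulary size, and the $S_i^{(k)} = -\ln U_i^{(k)}$ the exponential variables attached to the active drafts at this position. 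Hence the joint law of $(Y, X^{(1)}, \ldots, X^{(K)})$ at this step is exactly that of GLS applied to the pair $(p_X, q_Y)$, and by \cref{prop:distribution} the marginals are the intended ones.

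Second, I would identify the event $\{\mathrm{accept}\}$ --- that at least one of the $K$ candidate tokens equals the token sampled from the target model at this step --- with the matching event $\{Y \in \{X^{(1)}, \ldots, X^{(K)}\}\}$ in the notation of \cref{sec:lm_lemma}. Feeding this $p_X$, $q_Y$, and $N$ into the unconditional bound \eqref{eqn:thm1_unconditional} of \cref{thm:lm_lemma} then reproduces the claimed inequality verbatim; no further estimation is needed, since all of it is carried out in the proof of \cref{thm:lm_lemma}.

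The only point that genuinely requires care --- and the reason the bookkeeping in \cref{alg:multi_draft_invariant} matters --- is verifying that this step really is a clean, self-contained instance of GLS. The draft tokens at the current position were generated autoregressively, so conditioned on $\V{c}$ and on a draft still being active, its next token is distributed as $\mathcal{M}_s(\cdot \mid \V{c})$, independently across the $K$ active drafts; and the uniforms consumed in producing those tokens are exactly the ones reused to sample $Y$ at verification time, which realizes the shared-randomness coupling underlying GLS. Once this reduction is in place, the proposition follows directly from \cref{thm:lm_lemma} with the substitutions above, so I expect essentially all of the effort to be in setting up this reduction rather than in any new analysis.
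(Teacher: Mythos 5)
Your proposal matches the paper's own treatment: the paper derives \cref{prop:acceptance_probability} by applying the list matching lemma (\cref{thm:lm_lemma}) directly with $p_X = \mathcal{M}_s(\cdot \mid \V{c})$ and $q_Y = \mathcal{M}_b(\cdot \mid \V{c})$, identifying the accept event at the current step with the GLS matching event, exactly as you do. Your additional care in checking that the current decoding step (fresh uniforms for the active drafts, shared context after previous matches) is a genuine instance of GLS is correct and consistent with the paper's reasoning, so the proof is sound and essentially identical.
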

Moreover, \cref{prop:distribution} leads to a guarantee of sequence-level correctness, such that the distribution over all sequences of output tokens matches that of autoregressive inference from the target model.
We can also show conditional drafter invariance.
Proofs can be found in \cref{sec:sequence_level_proof}.

\begin{proposition}\label{prop:sequence_level_correctness}
    For any given $\tau$ and for all $1 \leq j \leq \tau$, the output of \cref{alg:multi_draft_invariant} satisfies $\Pr[Y_{1:j} = y_{1:j}] = \mathcal{M}_b(y_{1:j} \mid \V{c})$.
    Also, \cref{alg:multi_draft_invariant} is conditionally drafter invariant in the sense of \cref{def:weak_drafter_invariance}.
\end{proposition}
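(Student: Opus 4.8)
The plan is to prove the two assertions of \cref{prop:sequence_level_correctness} separately: sequence-level correctness by an induction over decoding steps that invokes \cref{prop:distribution}, and conditional drafter invariance by observing that the verification procedure in \cref{alg:multi_draft_invariant} never references the draft models.

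For sequence-level correctness I would work with the natural filtration $\{\mathcal{F}_t\}$ generated by the per-step common random numbers, the accepted tokens, and the evolving set $A_t$ of drafts still consistent with the output prefix. Fix a decoding step $t$ that the algorithm actually reaches, so $A_t \neq \emptyset$ and the current context is $\V{c}_t = (\V{c}, Y_{1:t-1})$. Conditioned on $\mathcal{F}_{t-1}$, the step-$t$ exponentials are fresh i.i.d.\ $\operatorname{Exp}(1)$ variables (the drafts were generated ahead of time and each step uses its own common randomness), and the token returned by the GLS rule is $Y_t = \argmin_{1\le i\le N} \min_{k\in A_t} S_i^{(t,k)}/\mathcal{M}_b(i\mid\V{c}_t)$; since $\min_{k\in A_t} S_i^{(t,k)} \sim \operatorname{Exp}(|A_t|)$ and the scale factor $|A_t|$ is common to all $i$, this is exactly the $Y$-generation step of GLS over the currently active proposal slots, so \cref{prop:distribution} gives $Y_t \mid \mathcal{F}_{t-1} \sim \mathcal{M}_b(\cdot\mid\V{c}_t)$. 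It matters here that when no active draft token equals $Y_t$ the algorithm emits this same $Y_t$ as the final token rather than resampling from a draft-dependent residual, so no step introduces a bias. Applying the chain rule over $t=1,\ldots,j$ then yields $\Pr[Y_{1:j}=y_{1:j}] = \prod_{t=1}^{j}\mathcal{M}_b(y_t\mid\V{c},y_{1:t-1}) = \mathcal{M}_b(y_{1:j}\mid\V{c})$ on the event that step $j$ is reached; since $\tau$ is a stopping time with respect to $\{\mathcal{F}_t\}$, this is the precise sense in which the claim holds for every realized $\tau$ and every $j\le\tau$.

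For conditional drafter invariance I would show that, once the common randomness $\mathcal{R}$, the context $\V{c}$, and the draft sequences $x_{1:L}^{(1)},\ldots,x_{1:L}^{(K)}$ are fixed, the whole transcript of \cref{alg:multi_draft_invariant} is determined with no reference to any draft model. Indeed, $A_1=\{1,\ldots,K\}$; given $A_t$ and $Y_{1:t-1}$, the token $Y_t$ is the GLS $\argmin$ above, which depends only on the exponentials in $\mathcal{R}$ and on $\mathcal{M}_b$; the updated set $A_{t+1}=\{k\in A_t : x_t^{(k)}=Y_t\}$, the accept/reject (hence stop) decision, and $\V{c}_{t+1}=(\V{c}_t,Y_t)$ depend only on $Y_t$, $A_t$, and the given draft tokens. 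By induction on $t$, both $\tau$ and $Y_{1:\tau}$ equal a fixed deterministic function $\Phi(\mathcal{R},\V{c},x_{1:L}^{(1)},\ldots,x_{1:L}^{(K)})$ of data that does not involve $\mathcal{M}_s^{(1)},\ldots,\mathcal{M}_s^{(K)}$. Consequently each side of the identity in \cref{def:weak_drafter_invariance} is the point mass on the first $j$ coordinates of $\Phi(\mathcal{R},\V{c},x_{1:L}^{(1)},\ldots,x_{1:L}^{(K)})$, and the two conditional distributions coincide.

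The main obstacle is the bookkeeping around the data-dependent stopping time $\tau$ and the set of active drafts: one has to check that conditioning on reaching step $t$ does not disturb the freshness and exchangeability of the step-$t$ common randomness, and that the ``extra token on rejection'' is the already-drawn GLS sample $Y_t$ rather than an independent draw from a $p_X$-dependent residual, which would break both correctness and invariance. Once each step is reduced to \cref{prop:distribution} in this way, the remaining arguments are routine.
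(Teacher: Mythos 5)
Your proposal is correct and follows essentially the same route as the paper: a step-by-step induction/chain-rule argument that reduces each decoding step to \cref{prop:distribution} (using that every active draft's prefix equals $Y_{1:t-1}$, so all active targets coincide with $\mathcal{M}_b(\cdot\mid\V{c},Y_{1:t-1})$), and invariance from the fact that the verification depends on the drafts only through their token values via the active set. Your filtration bookkeeping and the observation that the whole transcript is a deterministic function of $(\mathcal{R},\V{c},x_{1:L}^{(1)},\ldots,x_{1:L}^{(K)})$ are just a slightly more compact packaging of the paper's explicit induction on conditional probabilities.
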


\begin{algorithm}[b]
    \caption{Drafter-invariant multi-draft speculative decoding}\label{alg:multi_draft_invariant}
    \begin{algorithmic}[1]\small
        \State Choose $L+1$ sets of i.i.d.\ uniform random variables $\{ \{ U_{i}^{(j,k)} \}_{i=1}^{N} \}_{k=1}^{K}$, where $1 \leq j \leq L+1$.
        \For{$j = 1, \ldots, L$}
            \ParFor[$k = 1, \ldots, K$]
                \State $p_X^{(j,k)} \gets \mathcal{M}_s(\cdot \mid X_{1:(j-1)}^{(k)}, \V{c})$, $X_j^{(k)} \gets \argmin_{1 \leq i \leq N} [-\ln U_{i}^{(j,k)} / p_{i}^{(j,k)}]$
        \EndFor
        \ParFor[$j = 1, \ldots, L + 1$ and $k = 1, \ldots, K$]
            \State $q_Y^{(j,k)} \gets \mathcal{M}_b(\cdot \mid X_{1:(j-1)}^{(k)}, \V{c})$
        
        \State Let the set of active drafts be $\mathcal{S} = \{ 1, \ldots, K \}$.
        \For{$j = 1, \ldots, L$}
            \State $Y_j \gets \argmin_{1 \leq i \leq N} \min_{k \in \mathcal{S}} [-\ln U_{i}^{(j,k)} / q_{i}^{(j,k)}]$
            \For{$k \in \mathcal{S}$}
                \IfThen{$X_j^{(k)} \neq Y_j$}{$\mathcal{S} \gets \mathcal{S} \setminus \{ k \}$}
            \EndFor
            \IfThen{$\mathcal{S} = \emptyset$}{\algorithmicreturn\ $Y_1, \ldots, Y_j$}
        \EndFor
        \State $Y_{L+1} \gets \argmin_{1 \leq i \leq N} \min_{k \in \mathcal{S}} [-\ln U_{i}^{(L+1,k)} / q_{i}^{(L+1,k)}]$
        \State \algorithmicreturn\ $Y_1, \ldots, Y_{L+1}$
    \end{algorithmic}
\end{algorithm}

\subsection{Experiments}

\paragraph{LLM inference with i.i.d.\ drafts.}\label{sec:llm_inference_iid}
Our first experiment evaluates our drafter-invariant multi-draft scheme in a typical LLM inference setting with i.i.d.\ draws from one draft model.
The target model is Qwen 2.5-\SI{7}{B} \cite{yang25} and, following \citet{leviathan23}, we use a smaller model from the same series, Qwen 2.5-\SI{0.5}{B}, as the drafter.
To measure performance across a range of language tasks, prompts are executed from the GSM8K \cite{cobbe21}, HumanEval \cite{chen21} and NaturalReasoning \cite{yuan25} datasets.
All experiments are run on a single Nvidia RTX6000 Ada GPU with \SI{48}{\giga\byte} of memory.
Results are summarized in \cref{tab:llm_inference_iid}, where we show the number of accepted tokens for each call to the large model, which is referred to as the block efficiency (BE), along with the percentage speedup in token rate (TR) relative to single-draft speculative decoding \cite{leviathan23}.
Further experimental results and details are provided in \cref{sec:speculative_decoding_details}.
Our method's token-rate performance matches that of SpecInfer \cite{miao24} and SpecTr \cite{sun23} within one standard error of the mean across all cases, even though these schemes do not offer drafter invariance, and exceeds that of the single-draft invariant scheme in \citet{daliri25}.
Our observations agree with recent results in \citet{khisti25} showing that existing multi-draft algorithms perform almost equally well when i.i.d.\ drafts are used.

\begin{table}[t]
    \centering
    \setlength\tabcolsep{6pt}
    \footnotesize
    \begin{tabular}{@{}l
        S[table-alignment-mode=format, table-format=1.2(1), table-text-alignment=center, table-number-alignment=right, uncertainty-mode=separate]
        S[table-alignment-mode=format, table-format=1.2(1), table-text-alignment=center, table-number-alignment=right, uncertainty-mode=separate]
        S[table-alignment-mode=format, table-format=1.2(1), table-text-alignment=center, table-number-alignment=right, uncertainty-mode=separate]
        S[table-alignment-mode=format, table-format=1.2(1), table-text-alignment=center, table-number-alignment=right, uncertainty-mode=separate]
        S[table-alignment-mode=format, table-format=1.2(1), table-text-alignment=center, table-number-alignment=right, uncertainty-mode=separate]
        S[table-alignment-mode=format, table-format=+1.2(1), table-text-alignment=center, table-number-alignment=right, uncertainty-mode = separate]
    @{}}
    \toprule
    & \multicolumn{2}{c}{GSM8K} & \multicolumn{2}{c}{HumanEval} & \multicolumn{2}{c}{NaturalReasoning} \\
    Strategy & {BE} & {TR (\%)} & {BE} & {TR (\%)} & {BE} & {TR (\%)} \\ 
    \midrule
    SpecInfer \cite{miao24} & 4.75(0) & 4.56(20) & 4.54(1) & 7.89(85) & 4.19(1) & 12.49(56) \\ 
    SpecTr \cite{sun23} & 4.78(0) & 3.75(17) & 4.56(1) & 7.09(77) & 4.22(1) & \color{red}12.89(96) \\ 
    Our scheme & 4.78(0) & \color{red}4.83(24) & 4.55(1) & \color{red}8.03(97) & 4.18(0) & 12.75(114) \\
    \citet{daliri25} & 4.16(1) & 0.63(24) & 3.69(1) & 0.04(30) & 3.32(0) & -2.23(29) \\ 
    \bottomrule
\end{tabular}
    \vspace{1ex}
    \caption{LLM inference with i.i.d.\ drafts; we use $L=4$ and $K = 8$ for the multi-draft methods. Token rates (TR) are shown as percentage speedups relative to single-draft speculative decoding, which has mean block efficiency (BE) $4.18$, $3.75$ and $3.43$ on GSM8K, HumanEval and NaturalReasoning respectively.}
    \label{tab:llm_inference_iid}
\end{table}

\paragraph{Guidelines on the choice of $K$.}
The choice of $K$ in practical speculative decoding applications presents a crucial tradeoff.
While increasing $K$ boosts the acceptance probability in line with \cref{prop:acceptance_probability}, it requires more computation to generate and verify the additional drafts, especially when large models are used.
Adding more drafts offers performance gains so long as these operations can be done in parallel, yet there comes a point when available resources become oversubscribed and slowdowns ensue if $K$ is made too large.
Furthermore, as memory requirements scale linearly with $K$, VRAM capacity can be become a limiting factor, all the more so when KV caching is enabled.
We find that increasing $K$ up to $6$ and in many cases $8$ gives consistent wall-clock speedups, consistent with previous work on multi-draft speculative decoding \cite{sun23, jeon24,miao24}; concrete measurements for different values of $K$ may be found in \cref{sec:speculative_decoding_details}.

\paragraph{LLM inference with diverse drafts.}
Our second experiment examines a more challenging scenario where several different drafters are used, each misaligned with the target.
Using a collection of draft models, each exhibiting some degree of diversity during the training process, can naturally lead to improved efficiency. 
For example, training or fine-tuning the draft models on different tasks may alleviate the challenge of finding a single general-purpose, fast drafter.
In our present paper we introduce diversity across different models as well as potential misalignment with the target model by varying the sampling temperature.

Specifically, we focus on experiments with $K = 2$ drafts where the temperature of each drafter is changed independently and is mismatched to the target, which itself has temperature $2.0$.
SpecTr verification can no longer be used because it is specialized to identically distributed proposals; comparisons to SpecInfer, which is the dominant verification strategy in concurrent empirical work \cite{li24a,li24b}, remain possible.
As shown in \cref{tab:llm_inference_non_iid}, our method outperforms SpecInfer with respect to token rates on GSM8K \cite{cobbe21}, HumanEval \cite{chen21} and MBPP \cite{austin21} in the mismatched draft setting.
Moreover, our scheme is less sensitive to the order in which the drafts appear, whereas SpecInfer's recursive rejection scheme \cite{miao24} favors coupling with the first proposal --- comparing the first and second rows of \cref{tab:llm_inference_non_iid}, SpecInfer often exhibits slower token rates than single-draft speculative decoding when the first drafter's temperature deviates the most from the target in the $0.5/1.0$ configuration, yet this weakness disappears when the drafts' order is swapped.
Again, additional experimental results can be found in \cref{sec:speculative_decoding_details}.

\begin{table}[t]
    \centering
    \setlength\tabcolsep{3pt}
    \footnotesize
    \begin{tabular}{@{}lr
        S[table-alignment-mode=format, table-format=1.2(1), table-text-alignment=center, table-number-alignment=right, uncertainty-mode=separate]
        S[table-alignment-mode=format, table-format=+1.2(1), table-text-alignment=center, table-number-alignment=right, uncertainty-mode=separate]
        S[table-alignment-mode=format, table-format=1.2(1), table-text-alignment=center, table-number-alignment=right, uncertainty-mode=separate]
        S[table-alignment-mode=format, table-format=+1.2(1), table-text-alignment=center, table-number-alignment=right, uncertainty-mode=separate]
        S[table-alignment-mode=format, table-format=1.2(1), table-text-alignment=center, table-number-alignment=right, uncertainty-mode=separate]
        S[table-alignment-mode=format, table-format=+1.2(1), table-text-alignment=center, table-number-alignment=right, uncertainty-mode = separate]
    @{}}
    \toprule
    & & \multicolumn{2}{c}{GSM8K} & \multicolumn{2}{c}{HumanEval} & \multicolumn{2}{c}{MBPP} \\
    Strategy & Tmp. 1/2 & {BE} & {TR (\%)} & {BE} & {TR (\%)} & {BE} & {TR (\%)} \\ 
    \midrule
    SpecInfer & $0.5/1.0$ & 4.26(2) & 0.06(102) & 3.57(2) & -1.96(67) & 3.66(1) & -1.87(79) \\ 
    \cite{miao24} & $1.0/0.5$ & 4.44(3) & 4.57(180) & 3.80(3) & 4.13(160) & 3.90(2) & 4.77(55) \\
    & $1.0/1.0$ & 4.51(2) & 6.02(135) & 3.87(2) & 6.32(36) & 3.95(2) & 5.17(1.13) \\
    \midrule
    Our & $0.5/1.0$ & 4.75(2) & \color{red}11.50(178) & 4.00(1) & \color{red}9.80(82) & 3.94(2) & \color{red}5.64(66) \\ 
    scheme & $1.0/0.5$ & 4.75(2) & \color{red}11.40(158) & 3.96(2) & \color{red}8.77(99) & 3.96(2) & \color{red}5.99(101) \\
    & $1.0/1.0$ & 4.83(2) & \color{red}13.68(167) & 4.08(2) & \color{red}12.15(83) & 4.08(1) & \color{red}8.57(60) \\
    \bottomrule
\end{tabular}
    \vspace{1ex}
    \caption{LLM inference with diverse drafts; we use $L = 5$, $K = 2$ and the target temperature is $2.0$. The temperatures of drafters 1 and 2 vary and are reported in the second column. Token rates (TR) are shown as percentage speedups relative to single-draft speculative decoding with drafter temperature $1.0$, which has mean block efficiency (BE) $4.28$, $3.65$ and $3.71$ on GSM8K, HumanEval and MBPP respectively.}
    \label{tab:llm_inference_non_iid}
\end{table}

\subsection{Limitations}
While our GLS-based speculative decoding algorithm as presented in \cref{alg:multi_draft_invariant} mostly functions as a drop-in replacement for standard speculative decoding \cite{leviathan23}, care must be taken when the vocabulary size is large.
As in other multi-draft algorithms \cite{miao24,sun23}, memory requirements to store the draft and target logits for each step scale as $\mathcal{O}(NK)$, while we also need to generate a similarly large quantity of shared random numbers and compute the $\argmin$ across these sets. 
In practice, enabling top-K sampling significantly reduces this burden by allowing us to throw away all but $M \ll N$ high-probability tokens.
Our experiments use the full Qwen 2.5 set of \num{151936} possible tokens, demonstrating that large vocabularies are well-handled by GLS in a real-world inference setting.
On the other hand, speculative decoding may fail to enhance performance with a poorly aligned draft model or extreme temperature mismatch; our approach can benefit from the proliferation of complementary techniques that encourage better draft-target alignment to obtain more robust performance \cite{xia24,zhou23}.

\section{Application to lossy compression with side information}
We now present an application of GLS to a distributed lossy compression problem with one encoder and $K$ decoders, where each has access to independent side information, thereby extending the compression technique presented in \citet{phan24} to a list decoding setting.
Our setup is distinct from a more common multi-decoder extension of Wyner-Ziv coding where the decoders access side information with different levels of statistical correlation with the source, and are required to produce reconstructions of varying fidelity \cite{roy11}.
In contrast we allow multiple chances to decode the compressed source representation using independent side information samples, related to list decoding approaches in information theory \cite{elias57}.
We are motivated by practical scenarios where shared data is compressed and sent to each decoder for later use in a downstream task, and overall success is predicated on at least one decoder completing the task correctly.
Conceptually, this mirrors the set-membership definition of matching probability used in \cref{thm:lm_lemma}.

As a concrete example, consider a decentralized vision-based aircraft detection system where a base station captures a large-scale image of the sky, while several remote sub-stations obtain images with a narrower field of view from particular locations.
After a compressed representation of the base station's image is broadcast, each sub-station uses its private image as side information to reconstruct the global view, then uses this reconstruction along with the local image as input to a detection algorithm.
The system as a whole is declared successful if at least one sub-station detects the aircraft overhead, reflecting the standard notion of detection probability in distributed detection theory \cite{viswanathan97}.
In this paper, we limit our discussion to the intermediate reconstruction error, leaving the mechanics of any downstream detection to future work as these will depend on the specifics of the application.

\subsection{Problem setup and coding scheme}\label{sec:coding_scheme}

\Cref{fig:lsc_setup} gives an overview of our coding scheme.
The encoder observes a realization of $A \sim p_A$ from the source, which should be broadcast identically to $K$ decoders at a rate of $R$ bits per sample via a message $M$.
Decoder $k$, where $1 \leq k \leq K$, further observes the side information $T_k \sim p_{T \mid A}$ and combines this with the message sent by the encoder to produce an output $W_k$ that aims to follow the conditional distribution $p_{W \mid A}$ and marginal $p_W$.
The $T_k$'s are taken to be i.i.d.\ and $p_{W \mid A}$ is usually chosen to satisfy an expected distortion constraint $\E[d(A, \hat{A})]$, where $d$ is a distortion metric and $\hat{A}$ is the final reconstruction, calculated as $\hat{A} = g(W, T)$ for some decoding function $g$.
We limit our attention here to discrete probability distributions, although we further show in \cref{sec:importance_sampling} how our scheme can be extended to continuous distributions through importance sampling.

Following \citet{phan24}, we let $p_{W \mid T}$ be the conditional distribution of $W$ given the side information $T$, which can readily be calculated as $p_{W \mid T}(w \mid t) = \sum_{a'} p_{W \mid A}(w \mid a') p_{A \mid T}(a' \mid t)$.
Assuming $W \in \{ 1, \ldots, N \}$, we generate $N$ uniform integers $\ell_1, \ldots, \ell_N$ i.i.d.\ on $\{ 1, \ldots, L_{\mathrm{max}} \}$, where $L_{\mathrm{max}}$ is a positive integer.
We define the random tuple $B = (W, \ell)$ distributed according to $p_B(w, \ell) = p_W(w) / L{_\mathrm{max}}$ and the associated samples $B_i = (i, \ell_i)$ for $1 \leq i \leq N$, thereby covering the full sample space of $W$.
Using the procedure described in \cref{sec:lm_lemma}, we also let $\{ \{ S_i^{(k)} \}_{i=1}^{N} \}_{k=1}^{K}$ be $K$ sets of $N$ i.i.d.\ $\operatorname{Exp}(1)$ random variables.
To sample from $p_{B \mid A}$, given that $A = a$ is observed, the encoder selects an index $Y$ given by
\[
    Y = \adjustlimits\argmin_{1 \leq i \leq N} \min_{1 \leq k \leq K} \frac{S_i^{(k)}}{p_{B \mid A}(B_i \mid a)} = \adjustlimits\argmin_{1 \leq i \leq N} \min_{1 \leq k \leq K} \frac{S_i^{(k)}}{p_{W \mid A}(i \mid a)}.
\]
Supposing $Y = j$, the encoder transmits the message $M = \ell_j$ so that decoder $k$ has access to the tuple $Z_k = (T_k, \ell_j)$.
The rate is therefore $R = \log L_{\mathrm{max}}$ bits, as this is the bit-width of $M$.
The target distribution used by each decoder is taken to be $p_{B \mid Z}(w, \ell \mid t, \ell_j) = p_{W \mid T}(w \mid t) \mathds{1}\{ \ell = \ell_j \}$.
Finally, given $T_k = t_k$, decoder $k$ selects the index $X^{(k)}$ according to
\[
    X^{(k)} = \argmin_{1 \leq i \leq N} \frac{S_i^{(k)}}{p_{B \mid Z}(B_i \mid t_k, \ell_j)} = \argmin_{1 \leq i \leq N} \frac{S_i^{(k)}}{p_{W \mid T}(i \mid t_k) \mathds{1}\{ \ell_i = \ell_j \}}.
\]

\begin{figure}
    \centering
    \includegraphics[width=0.95\linewidth]{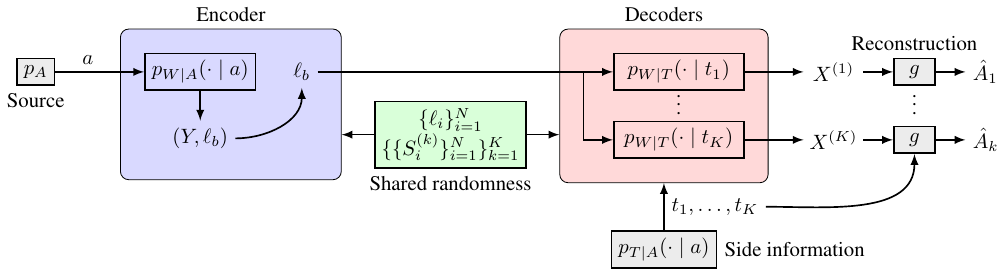}
    \caption{Problem setting for lossy compression with side information at the decoder.}
    \label{fig:lsc_setup}
\end{figure}

\subsection{Bound on the error probability}\label{sec:error_probability_bound}

To apply GLS in the compression setting, our analysis must include the source, side information and message sent by the encoder.
We therefore start by outlining a modified GLS procedure that allows the encoder to condition its output on the source $A$, while the decoder's output is conditioned on a set of related random variables $Z_1^K \coloneq \{ Z_k \}_{k=1}^{K}$.
As in \cref{sec:lm_lemma}, we consider discrete marginal distributions $p_X$ and $q_Y$ on $\Omega = \{ 1, \ldots, N \}$, but now assume they are related by arbitrary conditional distributions $q_{Y \mid A}$, $p_{Z \mid Y, A}$ and $p_{X \mid Z}$.
The common randomness $\{ \{ S_i^{(k)} \}_{i=1}^{N} \}_{k=1}^{K}$ is as before.
We also define $q_j(a) \coloneq q_{Y \mid A}(j \mid a)$ and $p_j(z) \coloneq p_{X \mid Z}(j \mid z)$.
The new sampling strategy is as follows:
\begin{enumerate}
    \item Upon observing $A = a$, the encoder selects $Y = \argmin_{1 \leq i \leq N} \min_{1 \leq k \leq K} S_i^{(k)} / q_i(a)$ to generate a sample from $q_{Y \mid A}$.
    \item Given $Y = j$, we sample $Z_1, \ldots, Z_K$ i.i.d.\ according to $p_{Z \mid Y, A}(\cdot \mid j, a)$.
    \item Given $Z_k = z_k$, decoder $k$ selects $X^{(k)} = \argmin_{1 \leq i \leq N} S_i^{(k)} / p_i(z_k)$.
\end{enumerate}
We also derive a corresponding extension of the LML to bound the matching probability of the conditional GLS scheme. 
Intuitively, while the LML in \cref{thm:lm_lemma} deals with the case where no communication is permitted, here we control the amount of communication by choosing $p_{Z \mid Y, A}$.

\begin{theorem}[Conditional LML]\label{thm:conditional_lm_lemma}
    Using the strategy above, the error probability satisfies
    \[
        \Pr[Y \in \{ X^{(1)}, \ldots, X^{(K)} \} \mid Y = j, \; A = a, \; Z_1^K = z_1^K] \geq \sum_{k=1}^{K} \left( K + \frac{q_j(a)}{p_j(z_k)} \right)^{-1}.
    \]
\end{theorem}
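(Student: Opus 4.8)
The strategy is to mimic the proof of the unconditional list matching lemma (Theorem 1), but now everything is conditioned on the triple $(Y = j,\, A = a,\, Z_1^K = z_1^K)$. Fix these values. The event $\{Y \in \{X^{(1)}, \ldots, X^{(K)}\}\}$ is the union over $k$ of the events $\{X^{(k)} = j\}$, so by inclusion–exclusion — or more cheaply by a union-of-events lower bound via the inequality $\Pr[\bigcup_k E_k] \ge \sum_k \Pr[E_k] / (\text{something})$ — I would like to reduce to controlling each $\Pr[X^{(k)} = j \mid Y = j, A = a, Z_1^K = z_1^K]$ together with the correlations coming from the shared exponentials. The cleanest route, and the one I expect the authors took, is to condition further on the "winning" exponential value for $Y$: given $Y = j$, the quantity $G \coloneq \min_{1 \le k \le K} S_j^{(k)} / q_j(a)$ is the minimum of the $N$ competing ratios $\min_k S_i^{(k)}/q_i(a)$, and standard properties of exponentials give that $G$ is itself exponential and independent of the identity of the argmin. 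Conditioned on $Y = j$ and on $G$, the relevant per-coordinate statistics $\min_k S_i^{(k)}$ for $i \ne j$ are constrained to exceed $q_i(a) G$, which is exactly the structure exploited in the single-sample Poisson/Gumbel matching arguments.

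Concretely, I would proceed as follows. First, write $\Pr[Y \in \{X^{(1)},\ldots,X^{(K)}\} \mid \cdots] = \sum_{k=1}^{K} \Pr[X^{(k)} = j,\ X^{(k')} \ne j \text{ for } k' < k \mid \cdots]$, partitioning the union by the smallest matching index; bounding this sum below by dropping cross terms appropriately, or by directly applying the harmonic-type bound $\Pr[\bigcup E_k] \ge \sum_k \Pr[E_k]/(1 + \sum_{k' \ne k} \Pr[E_{k'} \mid E_k])$ won't be needed if the exponential conditioning is done right. Second, for a fixed $k$, compute $\Pr[X^{(k)} = j \mid Y = j, A = a, Z_1^K = z_1^K]$. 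Here $X^{(k)} = j$ iff $S_j^{(k)}/p_j(z_k) < S_i^{(k)}/p_i(z_k)$ for all $i \ne j$. Conditioning on $Y = j$ forces, for each $i \ne j$, that $S_i^{(k)} \ge q_i(a)\, G$ where $G = \min_{k'} S_j^{(k')}/q_j(a)$ — but I only have a lower bound on $S_i^{(k)}$, not its exact law, so I should instead track the joint law of $(S_j^{(k)}, \{S_i^{(k)}\}_{i\ne j})$ conditioned on $Y = j$. The key simplification: by memorylessness, conditioned on $\{S_i^{(k')} : i \ne j, k'\}$ and on $Y=j$, the variable $S_j^{(k)}$ is exponential shifted/truncated in a way that makes the probability $X^{(k)}=j$ computable in closed form, yielding a term of the shape $\bigl(K + q_j(a)/p_j(z_k)\bigr)^{-1}$ after integrating out $G$.

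The main obstacle I anticipate is handling the dependence structure carefully: the $K$ events $\{X^{(k)} = j\}$ are coupled only through the shared $S_j^{(k)}$'s appearing in the definition of $Y$ (via the $\min_k$), while the $S_i^{(k)}$ for $i \ne j$ feed into exactly one $X^{(k)}$ each. So the right move is to condition on $G$ (the realized winning ratio for coordinate $j$), note that given $G$ the events $\{X^{(k)} = j\}$ become conditionally independent across $k$ — each depends on its own fresh exponentials $\{S_i^{(k)}\}_{i \ne j}$ plus the common $G$ — and then lower-bound $\Pr[\bigcup_k \{X^{(k)}=j\} \mid G] = 1 - \prod_k (1 - \pi_k(G))$ where $\pi_k(G) = \Pr[X^{(k)} = j \mid G, \ldots]$. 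Using $1 - \prod_k(1-\pi_k) \ge \sum_k \pi_k/(1 + \sum_{k'}\pi_{k'}) $ or a convexity argument, integrate over the exponential law of $G$, and identify the resulting expression with $\sum_k (K + q_j(a)/p_j(z_k))^{-1}$. I'd expect the integral $\int_0^\infty e^{-g} \cdot (\text{product of per-}i\text{ survival terms}) \, dg$ to collapse nicely because $\sum_i q_i(a) = 1$, exactly as in the single-draft PML computation in \citet{li19}; the bookkeeping to get the constant $K$ in the denominator (rather than $1$) is the one place where the multi-sample structure really enters, and it is where I'd be most careful.
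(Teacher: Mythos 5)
Your plan has two concrete gaps. First, you never justify discarding the conditioning on $Z_1^K$. After "fix these values" you treat the shared exponentials as i.i.d.\ $\operatorname{Exp}(1)$ subject only to the constraint $Y=j$, but $Z_k \sim p_{Z \mid Y, A}(\cdot \mid j, a)$ is generated from $Y$, which is a function of those exponentials, so their conditional law given $Z_1^K = z_1^K$ is not a priori the unconditional one. The paper's proof makes this a separate, explicit step: it observes that $\{S_i^{(k)}\} \to (Y,A) \to Z_1^K$ (equivalently $(\tilde{Y},A) \to Z_1^K$) is a Markov chain, so the exponentials are conditionally independent of $Z_1^K$ given $(Y,A)$, and it also needs this same Markov property a second time, via Bayes' rule, to show $\Pr[\tilde{Y} = j+(k-1)N \mid A=a, Z_1^K=z_1^K] = \Pr[Y=j \mid A=a, Z_1^K=z_1^K]/K$. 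Without some version of this argument the quantities $p_j(z_k)$ cannot legitimately be treated as fixed parameters in an exponential-race computation.

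Second, your conditional-independence claim given $G = \min_k S_j^{(k)}/q_j(a)$ is wrong as stated: the event $\{X^{(k)}=j\}$ depends on the individual $S_j^{(k)}$, not only on $G$. Conditioning on $G$ pins down only the minimum over $k$; to decouple the drafts you must additionally condition on \emph{which} $k$ attains that minimum, and then the winning draft and the $K-1$ losing drafts have different conditional laws (for a losing draft the numerator is a shifted exponential $q_j(a)G + \operatorname{Exp}(1)$), a computation your plan does not carry out. The subsequent step --- bounding $1-\prod_k(1-\pi_k(G))$ by a harmonic-type inequality and hoping the integral over $G$ produces $\sum_k (K + q_j(a)/p_j(z_k))^{-1}$ --- is left unverified and is not how the constant $K$ arises. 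The paper avoids the union/inclusion--exclusion issue entirely: it introduces the augmented target $\tilde{q}_{\tilde{Y}\mid A}$ on $\{1,\ldots,KN\}$ with masses $q_i(a)/K$, lower-bounds $\Pr[Y=j,\, j\in\{X^{(1)},\ldots,X^{(K)}\}]$ by $\sum_k \Pr[\tilde{Y}=j+(k-1)N,\, X^{(k)}=j]$ (i.e.\ it keeps only the match with the draft whose exponential achieved the coordinate-$j$ minimum), computes each term by a single two-way exponential race with rate $\sum_i[\max\{q_i(a)/q_j(a),\, p_i(z_k)/p_j(z_k)\} + (K-1)q_i(a)/q_j(a)]$, and converts to the conditional statement by the symmetry identity above. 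Your instinct to condition on the winning ratio and reduce to a single-draft race, with the factor $K$ coming from the rate of the coordinate-$j$ minimum, is the right spirit and can be made rigorous along exactly those lines (restrict to the winning draft $k^\ast$, which is uniform on $\{1,\ldots,K\}$ and conditionally independent of $Z_1^K$), but as written the plan neither handles the $Z_1^K$ conditioning nor completes the per-draft computation.
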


\Cref{thm:conditional_lm_lemma} follows from the LML by realizing that $\{ \{ S_i^{(k)} \}_{i=1}^{N} \}_{k=1}^{K} \to (Y, A) \to Z_1^K$ forms a Markov chain, and therefore the $Z_k$'s are conditionally independent of the shared randomness given $Y$ and $A$.
The full proof is given in \cref{sec:conditional_lm_lemma_proof}.
Using \cref{thm:conditional_lm_lemma}, we now have the following bound on the error probability of our coding scheme.
The proof can be found in \cref{sec:coding_error_probability_proof}.
\begin{proposition}\label{prop:coding_error_probability}
    For the coding scheme in \cref{sec:coding_scheme}, the error probability is bounded above as
    \begin{equation}\label{eqn:error_prob}
        \Pr[Y \notin \{ X^{(1)}, \ldots, X^{(K)} \}] \leq 1 - \E_{A, W, T} \left[ \left( 1 + \frac{2^{i(W; A \mid T)}}{KL_{\mathrm{max}}} \right)^{-1} \right]
    \end{equation}
    where $i_{W, A \mid T}(w; a \mid t) = \log(p_{W \mid A}(w \mid a) / p_{W \mid T}(w \mid t))$ is the conditional information density.
\end{proposition}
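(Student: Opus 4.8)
The plan is to obtain \Cref{prop:coding_error_probability} from \Cref{thm:conditional_lm_lemma}, by recognizing the coding scheme of \Cref{sec:coding_scheme} as an instance of the conditional GLS procedure, run on the $B$-space $\{1,\dots,N\}\times\{1,\dots,L_{\mathrm{max}}\}$ with the conditionals $p_{B\mid A}$ and $p_{B\mid Z}$ rather than on $\{1,\dots,N\}$. I would condition on the realization of the shared labels $\ell_{1:N}$ (which are independent of the $S_i^{(k)}$'s and of $A$) throughout; within this conditioning the correspondence is: the encoder's source is $A$; its per-index weight is $q_i(a)=p_{B\mid A}(B_i\mid a)=p_{W\mid A}(i\mid a)/L_{\mathrm{max}}$; the message $M=\ell_Y$ is a deterministic function of $Y$; decoder $k$ observes $Z_k=(T_k,M)$; and decoder $k$'s per-index weight is $p_i(z_k)=p_{B\mid Z}(B_i\mid z_k)$, which equals $p_{W\mid T}(i\mid t_k)$ at $i=Y$.

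Before invoking \Cref{thm:conditional_lm_lemma} I would check its hypotheses hold here: given $(Y,A)$ and the labels, each $Z_k=(T_k,\ell_Y)$ depends only on the independent side-information draw $T_k\sim p_{T\mid A}$ and the (now deterministic) label $\ell_Y$, so the $Z_k$'s are conditionally i.i.d.\ and $\{\{S_i^{(k)}\}\}\to(Y,A,\ell_{1:N})\to Z_1^K$ is a Markov chain. In addition, applying \Cref{prop:distribution} conditionally on $A=a$ gives $Y\mid A=a\sim p_{W\mid A}(\cdot\mid a)$, and $Y$ is independent of $(T_1,\dots,T_K)$ given $A$ because it depends on $A$ only through the exponentials; these two facts are also what I need for the final averaging.

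The crux --- and the step I expect to require the most care --- is the density-ratio computation, together with the observation that it must be carried out with the $B$-space conditionals (which do not sum to one over $\{1,\dots,N\}$), exactly the importance-sampling refinement of \citet{phan24}. At $i=j$ with $Y=j$, so that $M=\ell_j$ and the indicator in $p_{B\mid Z}(B_j\mid z_k)$ equals $1$,
\[
    \frac{q_j(a)}{p_j(z_k)}=\frac{p_{B\mid A}(B_j\mid a)}{p_{B\mid Z}(B_j\mid z_k)}=\frac{p_{W\mid A}(j\mid a)/L_{\mathrm{max}}}{p_{W\mid T}(j\mid t_k)}=\frac{2^{i_{W,A\mid T}(j;a\mid t_k)}}{L_{\mathrm{max}}},
\]
using $2^{i_{W,A\mid T}(w;a\mid t)}=p_{W\mid A}(w\mid a)/p_{W\mid T}(w\mid t)$. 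The factor $1/L_{\mathrm{max}}$ survives precisely because $p_{B\mid A}(B_j\mid a)$ carries it (the prior $p_B$ spreads mass uniformly over the $L_{\mathrm{max}}$ bins) while $p_{B\mid Z}(B_j\mid z_k)$ does not (the decoder has received the bin index $M$, concentrating its conditional on one bin). Plugging this into \Cref{thm:conditional_lm_lemma} gives, for every realization,
\[
    \Pr[Y\in\{X^{(1)},\dots,X^{(K)}\}\mid Y=j,\,A=a,\,\ell_{1:N}=\ell,\,Z_1^K=z_1^K]\ \ge\ \sum_{k=1}^{K}\left(K+\frac{2^{i_{W,A\mid T}(j;a\mid t_k)}}{L_{\mathrm{max}}}\right)^{-1}.
\]

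Finally I would take expectations. The right-hand side depends only on $(Y,A,T_1,\dots,T_K)$, and by the checks above $(A,Y,T_k)$ has the reference law $p_A\,p_{W\mid A}\,p_{T\mid A}$ with the $T_k$'s i.i.d.\ given $A$, so linearity of expectation and symmetry over $k$ give
\[
    \Pr[Y\in\{X^{(1)},\dots,X^{(K)}\}]\ \ge\ K\,\E_{A,W,T}\!\left[\left(K+\frac{2^{i(W;A\mid T)}}{L_{\mathrm{max}}}\right)^{-1}\right]=\E_{A,W,T}\!\left[\left(1+\frac{2^{i(W;A\mid T)}}{KL_{\mathrm{max}}}\right)^{-1}\right],
\]
and passing to the complementary event yields \eqref{eqn:error_prob}. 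What remains is routine bookkeeping: the conditional-independence/Markov structure needed for \Cref{thm:conditional_lm_lemma}, and the marginal correctness of the GLS encoder.
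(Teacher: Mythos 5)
Your proposal is correct and follows essentially the same route as the paper's proof: instantiate \Cref{thm:conditional_lm_lemma} with $q_{Y\mid A}(j\mid a)=p_{B\mid A}(B_j\mid a)=p_{W\mid A}(j\mid a)/L_{\mathrm{max}}$ and $p_{X\mid Z}(j\mid t_k,\ell_j)=p_{B\mid Z}(B_j\mid t_k,\ell_j)=p_{W\mid T}(j\mid t_k)$, identify the ratio as $2^{i_{W,A\mid T}(j;a\mid t_k)}/L_{\mathrm{max}}$, average over $(A,W,T)$ using symmetry in $k$, and take the complement. The extra bookkeeping you include (conditioning on $\ell_{1:N}$, verifying the Markov-chain hypothesis and the encoder's marginal) is consistent with, and if anything slightly more careful than, the paper's argument.
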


\begin{figure}[t]
    \centering
    \includegraphics[width=\textwidth]{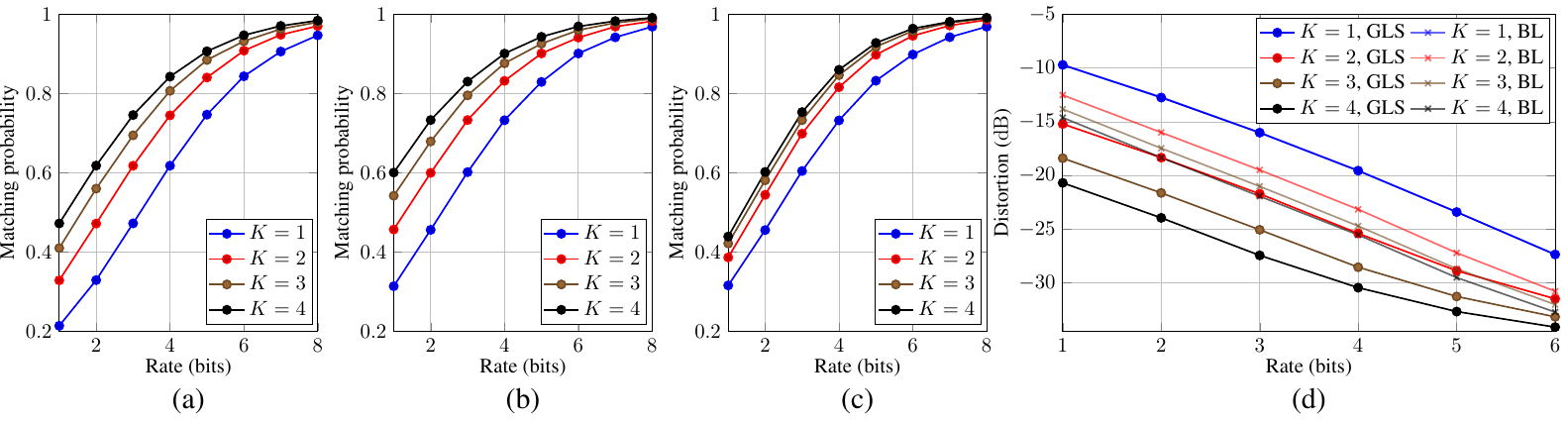}
    \caption{Experiments on a Gaussian source. (a)--(c): Matching probability, from left to right: GLS without side information, GLS with side information, baseline with side information.
    The baseline does not benefit from multiple decoders without side information. (d): Rate-distortion curves for GLS and baseline (BL) schemes.}
    \label{fig:rd_performance}
\end{figure}

\paragraph{Synthetic Gaussian source.}
As a canonical example, we consider a Gaussian source $A \sim \mathcal{N}(0, 1)$.
The side information is $T_k = A + \zeta_k$, where $\zeta_k \sim \mathcal{N}(0, \sigma^2_{T \mid A})$ and $\sigma^2_{T \mid A} = 0.5$.
The encoder's target distribution follows $p_{W \mid A}(\cdot \mid a) = \mathcal{N}(a, \sigma^2_{W \mid A})$, with $\sigma^2_{W \mid A}$ interpreted as the distortion permitted by the compression scheme.
Note that since the target distribution is now continuous, it is not possible to achieve perfect reconstruction with a finite number of samples.
However, we can get arbitrarily close by using importance sampling and then applying GLS to an empirical distribution defined by the importance weights, as detailed in \cref{sec:importance_sampling}.
Moreover, a closed form for the decoder's target distribution $p_{W \mid T}$ exists in the Gaussian case, and turns out to be $p_{W \mid T}(\cdot \mid t) = \mathcal{N}(t / \sigma^2_T, \sigma^2_W - 1 / \sigma^2_T)$.
Concerning the reconstruction function $g$, decoder $k$ forms the minimum mean squared error (MMSE) estimate of $A$ given $T_k$ and $W_k$, and we then choose the estimate with the least distortion among all decoders.
The target distribution and MMSE estimator are both derived in \cref{sec:gaussian_source_details}, along with other experimental details.

\Cref{fig:rd_performance} illustrates how the matching probability increases in both the rate and the number of decoders, while the observed distortion decreases.
We also show comparisons to a baseline scheme where all decoders share the same set of random numbers; we are not aware of any other competitive baseline for the list decoding setting considered here.
GLS-based decoding improves substantially over the baseline when $K > 1$, particularly at low rates, as the probability of matching with the encoder at least once is increased.
For $K = 1$, both methods are equivalent to the single-decoder technique described in \citet[p.~6]{phan24}.

\paragraph{Lossy compression on MNIST.}
We now apply our technique to distributed image compression \cite{whang24,mital22} using the MNIST dataset \cite{lecun98}.
In our experiments, the right half of each image is the source while the side information is a randomly selected $7 \times 7$ crop from the left half.
We adopt a neural compression technique similar to that in \citet{phan24} using a $\beta$-variational autoencoder ($\beta$-VAE) \cite{higgins17}.
Details of our setup are given in \cref{sec:image_compression_details}, where we also describe how the $\beta$-VAE architecture fits into our coding scheme as laid out in \cref{sec:coding_scheme}.

\Cref{fig:mnist_results_img} shows some representative examples from our image compression pipeline.
Errors may occur at the encoder, decoder or both; cases where the encoder's output is correct but the decoder's is not are caused by error events of the type dealt with in \eqref{eqn:error_prob}.
Our scheme's rate-distortion performance improves with the number of decoders as shown in \cref{fig:mnist_results_rd}; comparisons to the same baseline used in the previous experiment, with only one set of random numbers, demonstrate consistent improvements particularly at lower rates.

\begin{figure}[t]
    \centering
    \begin{minipage}[t]{0.44\textwidth}
        \centering
        \includegraphics[height=4.5cm]{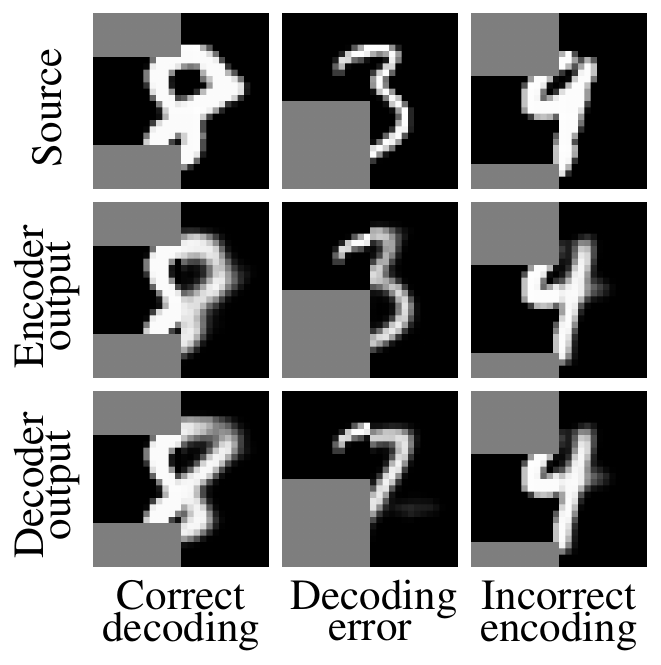}
        \caption{Examples showing success and failure modes of our compression scheme on MNIST.}
        \label{fig:mnist_results_img}
    \end{minipage}\hfill
    \begin{minipage}[t]{0.53\textwidth}
        \centering
        \includegraphics[height=4.5cm]{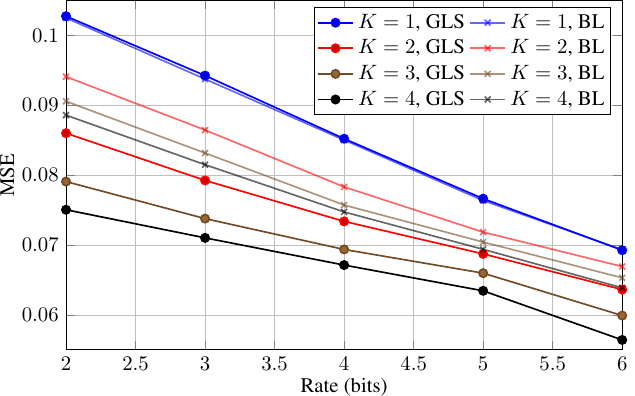}
        \caption{Rate-distortion curves on MNIST for GLS and baseline (BL) schemes.}
        \label{fig:mnist_results_rd}
    \end{minipage}
\end{figure}

\subsection{Limitations}
Unlike speculative decoding applications, which most often operate in single-GPU or homogeneous multi-GPU environments \cite{li24a,cai24}, lossy compression algorithms may conceivably see deployment in distributed systems and we therefore must consider how to communicate the shared randomness between the encoder and decoder.
This is usually done by sharing or agreeing upon a random seed before starting the compression procedure, the cost of which may be amortized across many transmissions \cite{theis22}.
Furthermore, our error probability analysis assumes that floating-point operations have the same deterministic behavior at both the encoder and decoder, which may be violated in practice.
We emphasize that these limitations are not unique to GLS but rather applicable to methods based on Gumbel-max sampling or common randomness more generally, which benefit from numerous implementation techniques within the machine learning \cite{kool19}, compression \cite{theis22,phan24} and coordinated sampling \cite{cuff08,bavarian20} literature.

\section{Conclusion}
We studied the problem of coupling probability distributions without communication when several samples are available from one of the distributions, and introduced the GLS algorithm to draw coupled samples in this setting along with a lower bound on the resulting acceptance probability.
These results were then used to derive novel algorithms for drafter-invariant multi-draft speculative decoding and lossy compression with side information.
Avenues for future work include applying GLS with importance sampling to generalize speculative decoding to models with continuous sample spaces such as diffusion models \cite{debortoli25}.
Furthermore, an alternative relaxation of distribution coupling might allow both parties to generate a list and declare an \emph{accept} if the intersection between the lists is nonempty.
Finding relevant practical applications as well as efficient sampling techniques for such a generalization is another interesting direction for further research.
We have made the code for all our experiments available at \url{https://github.com/jsrowan/MultiDraftSpeculativeDecoding}.

\section{Acknowledgements}
We thank Hitachi Solutions for funding the present research.
Resources used in preparing this research were moreover provided, in part, by the Province of Ontario, the Government of Canada through CIFAR, and companies sponsoring the Vector Institute (\url{www.vectorinstitute.ai/partnerships/}).

\bibliographystyle{abbrvnat}
\bibliography{bibliography}


\appendix
\appendixpage

\section{Proofs}\label{sec:proofs}
In this section we prove the results in the main paper.
For clarity, each theorem is restated and then a proof is given.

\subsection{Proof of proposition \ref{prop:distribution}}\label{sec:distribution_proof}
\begingroup
\def\theproposition{\ref{prop:distribution}}
\begin{proposition}
    The GLS procedure, as described in \cref{sec:lm_lemma}, generates samples such that:
    \begin{enumerate}[label=\normalfont\arabic*.]
        \item $\Pr[X^{(k)} = j] = p_j$ for all $k \in \{1, \ldots, K \}$ and $j \in \{ 1, \ldots, N \}$.
        \item $\Pr[Y = j] = q_j$ for all $j \in \{ 1, \ldots, N \}$.
    \end{enumerate}
\end{proposition}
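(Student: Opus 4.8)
The plan is to reduce both claims to the standard ``exponential race'' fact: if $E_1, \dots, E_N$ are independent with $E_i \sim \operatorname{Exp}(\lambda_i)$ and $\sum_i \lambda_i > 0$, then $\argmin_{1 \le i \le N} E_i$ is almost surely unique and $\Pr[\argmin_{1 \le i \le N} E_i = j] = \lambda_j / \sum_{i=1}^N \lambda_i$. I would first record two elementary properties of the exponential distribution that let me put the GLS quantities into this form: (i) \emph{scaling} --- if $S \sim \operatorname{Exp}(1)$ and $c > 0$ then $S/c \sim \operatorname{Exp}(c)$, with the convention $S/0 = +\infty$ corresponding to rate $0$; and (ii) \emph{minima} --- if $E_1, \dots, E_K$ are independent $\operatorname{Exp}(1)$ random variables then $\min_{1 \le k \le K} E_k \sim \operatorname{Exp}(K)$. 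Both follow immediately from the survival function $\Pr[S > t] = e^{-t}$, and I would also include the one-line derivation of the race identity for completeness (integrate $\lambda_j e^{-\lambda_j t} \prod_{i \ne j} e^{-\lambda_i t}$ over $t \ge 0$).

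For claim 1, I would fix $k \in \{1, \dots, K\}$ and set $E_i \coloneq S_i^{(k)}/p_i$ for $1 \le i \le N$. By property (i), $E_i \sim \operatorname{Exp}(p_i)$, and the $E_i$ are independent because the $S_i^{(k)}$ are independent across $i$. Since $X^{(k)} = \argmin_{1 \le i \le N} E_i$ and $\sum_{i=1}^N p_i = 1$, the race identity gives $\Pr[X^{(k)} = j] = p_j / \sum_{i=1}^N p_i = p_j$.

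For claim 2, I would set $E_i \coloneq \bigl(\min_{1 \le k \le K} S_i^{(k)}\bigr)/q_i$ for $1 \le i \le N$. By property (ii), $\min_{1 \le k \le K} S_i^{(k)} \sim \operatorname{Exp}(K)$, and then by property (i), $E_i \sim \operatorname{Exp}(K q_i)$; these are independent across $i$ since the inner minima depend on disjoint blocks $\{S_i^{(k)}\}_{k=1}^K$ of the common randomness. As $Y = \argmin_{1 \le i \le N} E_i$ and $\sum_{i=1}^N K q_i = K$, the race identity gives $\Pr[Y = j] = K q_j / \sum_{i=1}^N K q_i = q_j$.

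I do not anticipate a real obstacle: the only points that merit a word of care are indices $i$ with $p_i = 0$ or $q_i = 0$ (for which the corresponding $E_i = +\infty$ almost surely and hence is never the $\argmin$, consistent with assigning it probability $0$) and the almost-sure uniqueness of the $\argmin$, which holds because the $E_i$ with positive rate have continuous distributions so ties occur with probability zero. Everything else is a direct substitution into the exponential race identity.
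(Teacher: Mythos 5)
Your proof is correct and follows essentially the same route as the paper's: both reduce the claims to the probability that one exponential wins a race against independent exponentials, with the paper using the two-variable comparison (a rate-$1$ or rate-$K$ exponential versus the consolidated minimum of the competitors) and you invoking the equivalent $N$-way race identity directly. Your extra care about indices with $p_i = 0$ or $q_i = 0$ and the almost-sure uniqueness of the $\argmin$ is a welcome tightening but does not change the substance of the argument.
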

\addtocounter{proposition}{-1}
\endgroup
\begin{proof}
Since the $S_i^{(k)}$'s are i.i.d., the $X^{(k)}$'s will be also and it therefore suffices to check the distribution of $X^{(1)}$.
Note that
\begin{align*}
    X^{(1)} = j &\implies \frac{S^{(1)}_j}{p_j} \leq \frac{S^{(1)}_i}{p_i} \ \forall \ i \neq j \\
    &\implies S^{(1)}_j \leq \min_{i \neq j} \frac{S^{(1)}_i}{p_i / p_j}.
\end{align*}
The left-hand side is an exponential random variable with parameter $\lambda = 1$, and the right-hand side is an independent exponential random variable with parameter $\lambda = \sum_{i \neq j} p_i / p_j$.
So,
\[
    \Pr[X^{(1)} = j] = \frac{1}{1 + \sum_{i \neq j} p_i / p_j} = p_j
\]
as required.
Next, we look at the distribution of $Y$.
Define $S_j^\ast = \min_{1 \leq k \leq K} S_j^{(k)}$.
Then,
\begin{align*}
    Y = j &\implies \frac{S_j^\ast}{q_j} \leq \frac{S_i^\ast}{q_i} \ \forall \ i \neq j \\
    &\implies S_j^\ast \leq \min_{i \neq j} \frac{S_i^\ast}{q_i / q_j}.
\end{align*}
On the left-hand side, $S_j^\ast$ is an exponential random variable with parameter $\lambda = K$, while the right-hand side is an independent exponential random variable with parameter $\lambda = K \sum_{i \neq j} q_i / q_j$.
Finally,
\[
    \Pr[Y = j] = \frac{K}{K + K \sum_{i \neq j} q_i / q_j} = q_j.
\]
\end{proof}

\subsection{Proof of theorem \ref{thm:lm_lemma}}\label{sec:matching_probability_proof}
\begingroup
\def\thetheorem{\ref{thm:lm_lemma}}
\begin{theorem}[List matching lemma]
    The matching probability is bounded below as
    \begin{equation*}
        \Pr[Y \in \{ X^{(1)}, \ldots, X^{(K)} \}] \geq \sum_{j=1}^{N} \frac{K}{\sum_{i=1}^{N} [\max\{ q_i / q_j, p_i / p_j \} + (K - 1) q_i / q_j]}. \tag{\ref{eqn:thm1_unconditional}}
    \end{equation*}
    Furthermore, conditioned on $Y = j$, we have
    \begin{equation*}
        \Pr[Y \in \{ X^{(1)}, \ldots, X^{(K)} \} \mid Y = j] \geq \left( 1 + \frac{q_j}{K p_j} \right)^{-1}. \tag{\ref{eqn:thm1_conditional}}
    \end{equation*}
\end{theorem}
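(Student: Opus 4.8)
The plan is to fix $j$ and lower-bound the per-index matching probability $\Pr[Y = j \text{ and } X^{(k)} = j \text{ for some } k]$, then sum over $j$ for \eqref{eqn:thm1_unconditional} and divide by $\Pr[Y = j] = q_j$ (from \cref{prop:distribution}) for \eqref{eqn:thm1_conditional}. First I would condition on the $\sigma$-algebra $\mathcal F$ generated by all exponentials outside coordinate $j$, namely $\{S_i^{(k)} : i \ne j,\ 1 \le k \le K\}$, which leaves the i.i.d.\ variables $s_k := S_j^{(k)} \sim \operatorname{Exp}(1)$ free. Writing $\rho_i := q_i/q_j$, $\pi_i := p_i/p_j$, $\beta_k := \min_{i \ne j} S_i^{(k)}/\rho_i$, $\gamma_k := \min_{i \ne j} S_i^{(k)}/\pi_i$ and $\beta := \min_k \beta_k$, unfolding the argmin definitions shows that, given $\mathcal F$, the event $\{Y = j\}$ is exactly $\{\min_k s_k < \beta\}$ and, for each $k$, the event $\{X^{(k)} = j\}$ is exactly $\{s_k < \gamma_k\}$. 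Thus the target quantity is $\E_{\mathcal F}[h]$ with $h := \Pr[\min_k s_k < \beta,\ \exists k : s_k < \gamma_k \mid \mathcal F]$.

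The key step is a clean lower bound for $h$. Let $k^\ast := \argmin_k s_k$. The sub-event $\{s_{k^\ast} < \min(\beta, \gamma_{k^\ast})\}$ forces both $\min_k s_k = s_{k^\ast} < \beta$ and $s_{k^\ast} < \gamma_{k^\ast}$, so it lies inside the event defining $h$; the point is that it pairs the fixed threshold $\beta$ with the draw-dependent threshold $\gamma_{k^\ast}$ indexed by whichever $s_k$ happens to be smallest, rather than with a fixed draw. Since $s_1, \dots, s_K$ are independent of $\mathcal F$, for any $\mathcal F$-measurable constant $c$ one has $\Pr[k^\ast = k_0,\ s_{k_0} < c \mid \mathcal F] = \int_0^c e^{-(K-1)x} e^{-x}\, dx = (1 - e^{-Kc})/K$ (using $\min_{k \ne k_0} s_k \sim \operatorname{Exp}(K-1)$, independent of $s_{k_0}$). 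Summing over $k_0$ with $c = \min(\beta, \gamma_{k_0})$ gives $h \ge \frac1K \sum_{k=1}^K (1 - e^{-K\min(\beta, \gamma_k)})$.

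Taking $\E_{\mathcal F}$ and using exchangeability of the i.i.d.\ pairs $(\beta_k, \gamma_k)$ gives $\Pr[Y = j,\ X^{(k)} = j \text{ some } k] \ge 1 - \E[e^{-K\min(\beta, \gamma_1)}]$. Now $\min(\beta, \gamma_1) = \min(\beta_1, \gamma_1, \beta_2, \dots, \beta_K)$, and the factors separate: $\Pr[\beta_1 > t,\ \gamma_1 > t] = \prod_{i \ne j}\Pr[S_i^{(1)} > t\max(\rho_i, \pi_i)] = e^{-t D_j}$ with $D_j := \sum_{i \ne j}\max\{q_i/q_j,\ p_i/p_j\}$, while $\beta_2, \dots, \beta_K$ are i.i.d.\ $\operatorname{Exp}(Q_j)$, $Q_j := \sum_{i \ne j} q_i/q_j$, independent of $(\beta_1, \gamma_1)$. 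Hence $\min(\beta, \gamma_1) \sim \operatorname{Exp}(D_j + (K-1)Q_j)$, so $1 - \E[e^{-K\min(\beta, \gamma_1)}] = K/(K + D_j + (K-1)Q_j)$. Since the $i = j$ term of $\sum_{i=1}^N[\max\{q_i/q_j, p_i/p_j\} + (K-1)q_i/q_j]$ equals $1 + (K-1) = K$ and the remaining terms sum to $D_j + (K-1)Q_j$, summing the per-index bound over $j$ yields \eqref{eqn:thm1_unconditional}. For \eqref{eqn:thm1_conditional}, divide by $\Pr[Y = j] = q_j$ and simplify $q_j(K + D_j + (K-1)Q_j) = q_j + q_j D_j + (K - 1)$ via $q_j Q_j = 1 - q_j$; then $D_j \le Q_j + \sum_{i \ne j} p_i/p_j$ gives $q_j D_j \le (1 - q_j) + q_j(1 - p_j)/p_j$, so this denominator is at most $K + q_j/p_j$, which is the claimed bound.

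The step I expect to be the main obstacle is identifying the correct sub-event in the second paragraph. Tracking a single fixed draw — e.g.\ the one realizing $\min_k S_j^{(k)}$ — only produces a bound lacking the numerator $K$ and the $(K-1)Q_j$ term, and such a bound does not tend to $1$ as $K \to \infty$; pairing $\beta$ with $\gamma_{k^\ast}$ for the random minimizing index $k^\ast$ is precisely what recovers the full strength of the lemma. The remaining ingredients — the event rewrites under $\mathcal F$ and the exponential-race computations — are routine.
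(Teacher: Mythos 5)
Your proposal is correct, and at its core it is the same argument as the paper's, just packaged differently. The sub-event you isolate --- $\{Y=j\}$ together with $\{X^{(k^\ast)}=j\}$ for the random index $k^\ast=\argmin_k S_j^{(k)}$ --- is exactly what the paper's proof encodes via the augmented target distribution $\tilde q$ on the extended alphabet $\{1,\ldots,KN\}$: the event $\{\tilde Y=j\}$ (with $j$ in the first block) is $\{Y=j,\ k^\ast=1\}$, and the factor-$K$ symmetry step in \eqref{eqn:thm1_intermediate_inequality_1} converts $K\Pr[\tilde Y=j,\ X^{(1)}=j]$ into precisely your $\Pr[Y=j,\ X^{(k^\ast)}=j]$. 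Likewise your exponential race $\min(\beta,\gamma_1)\sim\operatorname{Exp}(D_j+(K-1)Q_j)$ reproduces the paper's computation $1+\lambda=\sum_{i}[\max\{q_i/q_j,p_i/p_j\}+(K-1)q_i/q_j]$, since the $i=j$ term contributes the $K$ in the numerator/denominator. What your route buys is a self-contained conditional calculation (condition on the off-coordinate exponentials, integrate over the $K$ free draws, use exchangeability) that avoids introducing the augmented alphabet; what the paper's route buys is that a single fixed-draft race plus one symmetry identity does all the work. Your remark that coupling with a fixed draft would lose the factor $K$ is exactly the insight the augmented-distribution construction formalizes. The passage to \eqref{eqn:thm1_conditional} also matches: you bound $\max\{a,b\}\le a+b$ off the diagonal while the paper writes $\max\{a,b\}=a+\max\{0,b-a\}$ and bounds $\sum_i\max\{0,p_i/p_j-q_i/q_j\}\le 1/p_j$; both give a denominator at most $K+q_j/p_j$, hence the stated bound.
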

\addtocounter{theorem}{-1}
\endgroup
\begin{proof}
To analyze the matching probability, we conceptualize the scheme slightly differently as follows.
Instead of taking the minimum over $1 \leq k \leq K$ to obtain $\{ S_i^\ast \}_{i=1}^{N}$ like in the proof of \cref{prop:distribution}, we form a flattened sequence of the $S_i^{(k)}$'s by defining
\[
    \{ T_i \}_{i=1}^{NK} = \{ S_1^{(1)}, \ldots, S_N^{(1)}, S_1^{(2)}, \ldots, S_N^{(2)}, \ldots, S_1^{(K)}, \ldots, S_N^{(K)} \}.
\]
We also introduce an augmented target distribution $\tilde{q}_{\tilde{Y}}$ on the extended alphabet $\tilde{\Omega} =  \{ 1, \ldots, KN \}$ defined by the probabilities
\begin{equation}\label{eqn:thm1_augmented_dist}
    (\tilde{q}_1, \ldots, \tilde{q}_{KN}) = \left( \frac{q_1}{K}, \ldots, \frac{q_N}{K}, \ldots, \frac{q_1}{K}, \ldots, \frac{q_N}{K} \right)
\end{equation}
and corresponding output $\tilde{Y}$.
The setup is visualized in \cref{fig:thm1_diagram} for the simplest case of $N = 2$ and $K = 2$.
There, $Y = 1$ for example corresponds to either $\tilde{Y} = 1$ or $\tilde{Y} = 3$.
In general,
\[
    Y = j \iff \tilde{Y} = j + (k-1) N \text{ for some } k \in \{ 1, \ldots, K \}.
\]
By symmetry of the construction,
\begin{align}
    \Pr[Y \in \{ X^{(1)}, \ldots, X^{(K)} \}] &= \sum_{j=1}^{N} \Pr[Y = j, \; j \in \{ X^{(1)}, \ldots, X^{(K)} \}] \nonumber \\
    &= \sum_{j=1}^{N} K \Pr[\tilde{Y} = j, \; j \in \{ X^{(1)}, \ldots, X^{(K)} \}] \nonumber \\
    &\geq \sum_{j = 1}^{N} K \Pr[\tilde{Y} = j, \; X^{(1)} = j]. \label{eqn:thm1_intermediate_inequality_1}
\end{align}
\begin{figure}
    \centering
    \begin{subfigure}[b]{0.49\linewidth}
        \centering
        \includegraphics[width=\linewidth]{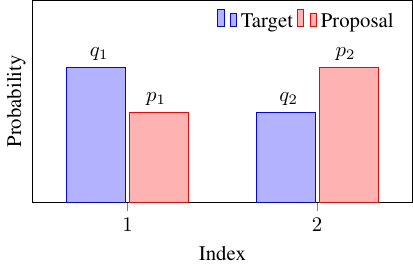}
        \caption{Target and proposal distributions.}
    \end{subfigure}
    \hfill
    \begin{subfigure}[b]{0.49\linewidth}
        \centering
        \includegraphics[width=\linewidth]{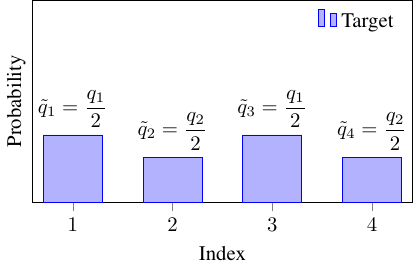}
        \caption{Augmented target distribution.}
    \end{subfigure}
    \caption{Distributions used in the proof of \cref{thm:lm_lemma} in the case of $K=2$ and $N=2$.}
    \label{fig:thm1_diagram}
\end{figure}
Since the analysis will be the same for any $j$, we now focus on finding the probability of the event
\begin{align*}
    &\tilde{Y} = 1 \text{ and } X^{(1)} = 1 \\
    &\quad\implies \frac{T_1}{\tilde{q}_1} \leq \min_{2 \leq i \leq KN} \frac{T_i}{\tilde{q}_i} \text{ and } \frac{T_1}{p_1} \leq \min_{2 \leq i \leq N} \frac{T_i}{p_i} \\
    &\quad\implies T_1 \leq \min\left\{ \frac{T_2}{\max\{ \tilde{q}_2 / \tilde{q}_1, p_2 / p_1 \}}, \ldots, \frac{T_N}{\max\{ \tilde{q}_N / \tilde{q}_1, p_N / p_1 \}}, \frac{T_{N+1}}{\tilde{q}_{N+1} / \tilde{q}_1}, \ldots, \frac{T_{KN}}{\tilde{q}_{KN} / \tilde{q}_1} \right\}.
\end{align*}
The right-hand side is an exponential random variable independent of the left-hand side.
If its parameter is $\lambda$ then, taking into account the definition of the $\tilde{q}_i$'s, we have
\begin{align*}
    1 + \lambda &= \sum_{i=1}^{N} \max\left\{ \frac{q_i}{q_1}, \frac{p_i}{p_1} \right\} + \sum_{k=2}^{K} \sum_{i=1}^{N} \frac{q_i}{q_1} \\
    &=\sum_{i=1}^{N} \left[ \max\left\{ \frac{q_i}{q_1}, \frac{p_i}{p_1} \right\} + (K-1) \frac{q_i}{q_1} \right].
\end{align*}
Therefore, by the properties of independent exponential random variables,
\begin{equation}\label{eqn:thm1_intermediate_inequality_2}
    \Pr[\tilde{Y} = 1, \; X^{(1)} = 1] = \frac{1}{\sum_{i=1}^{N} [\max\{ q_i / q_1, p_i / p_1 \} + (K - 1) q_i / q_1]}.
\end{equation}
Combining with \eqref{eqn:thm1_intermediate_inequality_1} establishes the bound in \eqref{eqn:thm1_unconditional}.
We now turn our attention to \eqref{eqn:thm1_conditional} and find
\begin{align}
    \Pr[Y = j, \; Y \in \{ X^{(1)}, \ldots, X^{(K)} \}] &= \Pr[Y = j, \; j \in \{ X^{(1)}, \ldots, X^{(K)} \}] \nonumber \\
    &\geq K \Pr[\tilde{Y} = j, \; X^{(1)} = j] \label{eqn:thm1_intermediate_inequality_3}
\end{align}
by the same symmetry argument used to show \eqref{eqn:thm1_intermediate_inequality_1}.
Since our choice of $j = 1$ in establishing \eqref{eqn:thm1_intermediate_inequality_2} was arbitrary, we can apply that result for each $j$ to get
\[
    \Pr[Y = j, \; Y \in \{ X^{(1)}, \ldots, X^{(K)} \}] \geq \frac{K}{\sum_{i=1}^{N} [\max\{ q_i / q_j, p_i / p_j \} + (K - 1) q_i / q_j]}.
\]
Finally,
\begin{align*}
    \Pr[Y \in \{ X^{(1)}, \ldots, X^{(K)} \} \mid Y = j] &= \Pr[Y = j, \; Y \in \{ a^{(1)}, \ldots, a^{(K)} \}] / \Pr[Y = j] \\
    &\geq \frac{K}{q_j \sum_{i=1}^{N} [\max\{ q_i / q_j, p_i / p_j \} + (K - 1) q_i / q_j]} \\
    &= \frac{K}{K + q_j \sum_{i=1}^{N} \max\{ 0, p_i / p_j - q_i / q_j \}} \\
    &\geq \frac{K}{K + q_j / p_j} \\
    &= \left( 1 + \frac{q_j}{K p_j} \right)^{-1}
\end{align*}
\end{proof}
As an aside, we can now easily find a related relaxed version of \eqref{eqn:thm1_unconditional} by means of the relationship
\begin{align*}
    \Pr[Y \in \{ X^{(1)}, \ldots, X^{(2)} \}] &= \sum_{j=1}^{N} \Pr[Y = j] \Pr[Y \in \{ X^{(1)}, \ldots, X^{(2)} \} \mid Y = j] \\
    &\geq \sum_{j=1}^{N} q_j \left( 1 + \frac{q_j}{K p_j} \right)^{-1}
\end{align*}
since $\Pr[Y = j] = q_j$ by \cref{prop:distribution}.

\subsection{Extension of proposition \ref{prop:distribution} to non-identically distributed proposals}\label{sec:distribution_extension_proof}
We briefly consider the case where the proposals are drawn independently from $K$ different distributions.
Let these distributions be $p_X^{(1)}, \ldots, p_X^{(K)}$ and define $p_i^{(K)} \coloneq p_X^{(K)}(i)$ for convenience.
In this setting, $X^{(k)}$ is sampled from the corresponding $p_X^{(k)}$, but the sampling procedure and the common randomness are otherwise identical to the setup in \cref{prop:distribution}.
We then have the following extension of that result.

\begin{proposition}
    The procedure described above generates samples such that:
    \begin{enumerate}
        \item $\Pr[X^{(k)} = j] = p_j^{(k)}$ for all $k \in \{1, \ldots, K \}$ and $j \in \{ 1, \ldots, N \}$.
        \item $\Pr[Y = j] = q_j$ for all $j \in \{ 1, \ldots, N \}$.
    \end{enumerate}
\end{proposition}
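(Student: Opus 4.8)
The plan is to note that the proof of \cref{prop:distribution} essentially already covers this case, so only routine bookkeeping is needed.

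First I would dispose of the second claim. The random variable $Y$ is defined purely in terms of the common randomness $\{\{S_i^{(k)}\}_{i=1}^N\}_{k=1}^K$ and the target probabilities $q_1, \ldots, q_N$; its definition makes no reference to the proposal distributions. Consequently the calculation of $\Pr[Y = j]$ from the proof of \cref{prop:distribution} carries over without any change: one sets $S_j^\ast = \min_{1 \leq k \leq K} S_j^{(k)} \sim \operatorname{Exp}(K)$, observes that $Y = j$ is equivalent to $S_j^\ast < \min_{i \neq j} S_i^\ast/(q_i/q_j)$ with the right-hand side an independent $\operatorname{Exp}(K \sum_{i \neq j} q_i/q_j)$ variable, and concludes $\Pr[Y = j] = K / (K + K \sum_{i \neq j} q_i/q_j) = q_j$.

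Next I would handle the first claim by fixing an index $k \in \{1, \ldots, K\}$ and repeating the argument used for $X^{(1)}$ in \cref{prop:distribution}, now with $p_i$ replaced by $p_i^{(k)}$. Since $X^{(k)} = \argmin_{1 \leq i \leq N} S_i^{(k)}/p_i^{(k)}$ depends only on the block $\{S_i^{(k)}\}_{i=1}^N$ of i.i.d.\ $\operatorname{Exp}(1)$ variables, the event $X^{(k)} = j$ is equivalent to $S_j^{(k)} < \min_{i \neq j} S_i^{(k)}/(p_i^{(k)}/p_j^{(k)})$, in which the left side is $\operatorname{Exp}(1)$ and the right side is an independent $\operatorname{Exp}(\sum_{i \neq j} p_i^{(k)}/p_j^{(k)})$ variable. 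This gives $\Pr[X^{(k)} = j] = 1 / (1 + \sum_{i \neq j} p_i^{(k)}/p_j^{(k)}) = p_j^{(k)}$, as claimed.

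The one place where the original argument cannot be quoted verbatim is the reduction ``it suffices to check $X^{(1)}$'': the $X^{(k)}$'s are no longer identically distributed, so each index $k$ must be treated on its own. However, the blocks $\{S_i^{(k)}\}_i$ are disjoint across $k$, so the $X^{(k)}$'s are still mutually independent, and the per-$k$ computation above is identical in form each time. I therefore do not expect any genuine obstacle --- the extension is purely mechanical.
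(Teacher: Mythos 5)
Your proposal is correct and follows essentially the same route as the paper's own proof: treat each $X^{(k)}$ separately with the exponential-race argument using $p_i^{(k)}$ in place of $p_i$, and observe that the selection of $Y$ never involves the proposal distributions, so its marginal is unchanged from \cref{prop:distribution}. The only cosmetic difference is that the paper cites \cref{prop:distribution} for the distribution of $Y$ rather than rewriting the $S_j^\ast \sim \operatorname{Exp}(K)$ calculation, which you reproduce correctly.
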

\begin{proof}
The proof is very similar to that of \cref{prop:distribution}.
We start by checking the distribution of any $X^{(k)}$ for $1 \leq k \leq K$.
Note that
\begin{align*}
    X^{(k)} = j &\implies \frac{S^{(k)}_j}{p_j^{(k)}} \leq  \frac{S^{(k)}_i}{p_i^{(k)}} \ \forall \ i \neq j \\
    &\implies S^{(k)}_j \leq \min_{i \neq j} \frac{S^{(k)}_i}{p_i^{(k)} / p_j^{(k)}}.
\end{align*}
The left-hand side is an exponential random variable with parameter $\lambda = 1$, and the right-hand side is an independent exponential random variable with parameter $\lambda = \sum_{i \neq j} p_i^{(k)} / p_j^{(k)}$.
So,
\[
    \Pr[X^{(k)} = j] = \frac{1}{1 + \sum_{i \neq j} p_i^{(k)} / p_j^{(k)}} = p_j^{(k)}
\]
as required.
Next, note that the selection procedure for $Y$ does not involve the $p_i^{(k)}$'s at all.
Because the introduction of these probabilities is the only deviation from the setting of \cref{prop:distribution}, the correctness of $Y$'s distribution follows immediately from that result.
\end{proof}

\subsection{Proof of proposition \ref{prop:sequence_level_correctness}}\label{sec:sequence_level_proof}
\begingroup
\def\theproposition{\ref{prop:sequence_level_correctness}}
\begin{proposition}
    For any given $\tau$ and for all $1 \leq j \leq \tau$, the output of \cref{alg:multi_draft_invariant} satisfies $\Pr[Y_{1:j} = y_{1:j}] = \mathcal{M}_b(y_{1:j} \mid \V{c})$.
    Also, \cref{alg:multi_draft_invariant} is conditionally drafter invariant in the sense of \cref{def:weak_drafter_invariance}.
\end{proposition}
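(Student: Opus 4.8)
The plan is to prove the two assertions separately, in each case reducing a single decoding step of \cref{alg:multi_draft_invariant} to one invocation of GLS and then quoting \cref{prop:distribution}. For sequence-level correctness I would induct on $j$, showing that conditioned on $Y_{1:j-1}=y_{1:j-1}$ the token $Y_j$ has law $\mathcal{M}_b(\cdot\mid y_{1:j-1},\V{c})$; multiplying these conditionals yields $\Pr[Y_{1:j}=y_{1:j}]=\mathcal{M}_b(y_{1:j}\mid\V{c})$. The crux is a structural fact about the active set: the pruning rule in the main loop forces $\mathcal{S}=\{k:X_{1:j-1}^{(k)}=Y_{1:j-1}\}$ at the start of step $j$, so every surviving draft carries the \emph{same} context $Y_{1:j-1}$, whence $p_i^{(j,k)}=\mathcal{M}_s(i\mid Y_{1:j-1},\V{c})$ and $q_i^{(j,k)}=\mathcal{M}_b(i\mid Y_{1:j-1},\V{c})$ for all $k\in\mathcal{S}$. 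Thus the pair $\big(Y_j,\{X_j^{(k)}\}_{k\in\mathcal{S}}\big)$ is produced exactly by GLS (\cref{alg:gls}) run with $|\mathcal{S}|$ i.i.d.\ proposals from $\mathcal{M}_s(\cdot\mid Y_{1:j-1},\V{c})$, target $\mathcal{M}_b(\cdot\mid Y_{1:j-1},\V{c})$, and the fresh randomness $\{U_i^{(j,k)}\}$; part~2 of \cref{prop:distribution} then gives $Y_j\sim\mathcal{M}_b(\cdot\mid Y_{1:j-1},\V{c})$ conditionally on $Y_{1:j-1}$ and $\mathcal{S}$, and averaging over $\mathcal{S}$ closes the induction. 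When $\mathcal{S}$ has already emptied (the block terminated at some $\tau<j$), one reads the remaining $Y_j$ as continued autoregressive sampling from $\mathcal{M}_b$, so the identity holds for every $j\le L+1$ and the algorithm's actual output is the prefix $Y_{1:\tau}$.

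For conditional drafter invariance the idea is that, after the appropriate conditioning, the output ceases to be random and never consults the draft models. I would condition on the shared numbers $\mathcal{R}=\{\{U_i^{(j,k)}\}_i\}_{j,k}$, the context $\V{c}$, and the draft strings $X_{1:L}(\mathcal{M}_s^{(k)})=x_{1:L}^{(k)}$ for all $k$, then observe that the verification loop of \cref{alg:multi_draft_invariant} reads only the $U_i^{(j,k)}$, the target probabilities $q_i^{(j,k)}=\mathcal{M}_b(i\mid x_{1:j-1}^{(k)},\V{c})$, and the fixed draft tokens $x_j^{(k)}$ (used only to prune $\mathcal{S}$). Since none of these involves the models $\mathcal{M}_s^{(k)}$, the entire trajectory --- the sequence of active sets, the stopping time $\tau$, and $Y_{1:\tau}$ --- is, after conditioning, a deterministic function of $(\mathcal{R},\V{c},\{x_{1:L}^{(k)}\},\mathcal{M}_b)$. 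Hence for each $j\le\tau$ both conditional probabilities in \cref{def:weak_drafter_invariance} equal the indicator that $y_{1:j}$ agrees with the first $j$ coordinates of that function's value, which does not change if $\mathcal{M}_s^{(1)},\ldots,\mathcal{M}_s^{(K)}$ are replaced by any other models compatible with the conditioned drafts.

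The step I expect to be most delicate is the independence bookkeeping in the induction: one must check that conditioning on $Y_{1:j-1}$ and on the step-$j$ active set $\mathcal{S}$ does not perturb the i.i.d.\ uniform law of $\{U_i^{(j,k)}\}_i$, so that \cref{prop:distribution} applies unchanged with $|\mathcal{S}|$ proposals. This holds because the common randomness is indexed by step while $Y_{1:j-1}$ and $\mathcal{S}$ are measurable with respect to the strictly earlier blocks $\{U_i^{(j',k)}:j'<j\}$ (the model outputs being deterministic given their conditioning contexts), but the write-up should make this explicit. A minor accompanying point is fixing the meaning of $\Pr[Y_{1:j}=y_{1:j}]$ beyond the random termination time, which the autoregressive-continuation convention above handles cleanly.
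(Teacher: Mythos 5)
Your proposal is correct and follows essentially the same route as the paper: an induction that uses the structural fact $\mathcal{S}=\{k: X_{1:j-1}^{(k)}=Y_{1:j-1}\}$ to reduce each decoding step to a single GLS invocation with target $\mathcal{M}_b(\cdot\mid Y_{1:j-1},\V{c})$ and then invokes \cref{prop:distribution}, together with the observation that the verification loop consults the drafts only through their token values (via $\mathcal{S}$), never through the draft models. The only difference is presentational --- you argue conditional drafter invariance by noting the whole trajectory is a deterministic function of $(\mathcal{R},\V{c},\{x_{1:L}^{(k)}\},\mathcal{M}_b)$, whereas the paper runs a step-by-step induction on the conditional probabilities --- and your explicit remark that the step-$j$ randomness block is independent of $Y_{1:j-1}$ and $\mathcal{S}$ makes precise a point the paper leaves implicit.
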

\addtocounter{proposition}{-1}
\endgroup
\begin{proof}
We start by proving the first part of the proposition, which establishes sequence-level correctness.
For convenience and to connect the setup more easily to GLS, we first abstract the details of Gumbel-max sampling by defining sets of independent random variables $\{ \{ S_i^{(j, k) }\}_{i=1}^{N} \}_{k=1}^{K}$ for all $1 \leq j \leq L$, where $S_i^{(j, k)} = -\ln U_i^{(j, k)}$.
We then have that each $S_i^{(j, k)} \sim \operatorname{Exp}(1)$.
Further let $\mathcal{S}_{j-1}$ be the set of viable candidate drafts immediately before sampling $Y_j$, which allows us to keep track of changes to $\mathcal{S}$ in \cref{alg:multi_draft_invariant} during the proof.
When the maximum draft length is $L$, we have $\tau \in \{ 1, \ldots, L + 1 \}$ due to the possibility of selecting an extra token if the full draft sequence is accepted.
For any $\tau$, we need to verify the distribution of $Y_1, \ldots, Y_\tau$, which we do by induction.
First, consider $Y_1$.
Before the first step, $\mathcal{S}_0 = \{ 1, \ldots, K \}$, so \cref{alg:multi_draft_invariant} makes the selection
\[
    Y_1 = \adjustlimits\argmin_{1 \leq i \leq N} \min_{1 \leq k \leq K} \frac{S_i^{(1, k)}}{q_i^{(1, k)}} = \adjustlimits\argmin_{1 \leq i \leq N} \min_{1 \leq k \leq K} \frac{S_i^{(1, k)}}{\mathcal{M}_b(i \mid \V{c})}.
\]
The second equality follows because the algorithm assigns $q_i^{(1, k)} = \mathcal{M}_b(i \mid \V{c})$ for all $k$.
Then, $\Pr[Y_1 = y_1] = \mathcal{M}_b(y_1 \mid \V{c})$ for all $y_1 \in \Omega$, by \cref{prop:distribution}.
Next, the rejection loop keeps only the drafts $k$ satisfying $X_1^{(k)} = Y_1$.
Hence, $X_1^{(k)} = Y_1$ for any $k \in \mathcal{S}_1$.

Next we look at the distribution of $Y_2$.
Here, we get
\[
    Y_2 = \adjustlimits\argmin_{1 \leq i \leq N} \min_{k \in \mathcal{S}_1} \frac{S_i^{(2, k)}}{q_i^{(2, k)}} = \adjustlimits\argmin_{1 \leq i \leq N} \min_{k \in \mathcal{S}_1} \frac{S_i^{(2, k)}}{\mathcal{M}_b(i \mid X_1^{(k)}, \V{c})}.
\]
However, from the previous step we know that $X_1^{(k)} = Y_1$ for any $k \in \mathcal{S}_1$.
Therefore, $\Pr[Y_2 = y_2 \mid Y_1 = y_1] = \mathcal{M}_b(y_2 \mid y_1, \V{c})$ by \cref{prop:distribution} and consequently
\[
    \Pr[Y_{1:2} = y_{1:2}] = \mathcal{M}_b(y_2 \mid y_1, \V{c}) \mathcal{M}_b(y_1 \mid \V{c}) = \mathcal{M}_b(y_{1:2} \mid \V{c}).
\]
Moreover, if $L > 1$, the subsequent rejection stage removes from the set of viable candidates any draft not satisfying $X_2^{(k)} = Y_2$, and so $\mathcal{S}_2 = \{ k \mid X_1^{(k)} = Y_1, \;  X_2^{(k)} = Y_2 \} = \{ k \mid X_{1:2}^{(k)} = Y_{1:2} \}$.

In general, assume that $\Pr[Y_{1:j} = y_{1:j}] = \mathcal{M}_b(y_{1:j} \mid \V{c})$ and $X_{1:j}^{(k)} = Y_{1:j}$ for all $k \in \mathcal{S}_{j}$, for some $1 \leq j < \tau$.
Implicitly, $j < L + 1$, otherwise the generation would have already been completed.
The next token is selected according to
\[
    Y_{j+1} = \adjustlimits\argmin_{1 \leq i \leq N} \min_{k \in \mathcal{S}_j} \frac{S_i^{(j+1, k)}}{q_i^{(j+1, k)}} = \adjustlimits\argmin_{1 \leq i \leq N} \min_{k \in \mathcal{S}_j} \frac{S_i^{(j+1, k)}}{\mathcal{M}_b(i \mid X_{1:j}^{(k)}, \V{c})} = \adjustlimits\argmin_{1 \leq i \leq N} \min_{k \in \mathcal{S}_j} \frac{S_i^{(j+1, k)}}{\mathcal{M}_b(i \mid Y_{1:j}, \V{c})}
\]
and hence, $\Pr[Y_{j+1} = y_{j+1} \mid Y_{1:j} = y_{1:j}] = \mathcal{M}_b(y_{j+1} \mid y_{1:j}, \V{c})$.
Also,
\[
    \Pr[Y_{1:(j+1)} = y_{1:(j+1)}] = \mathcal{M}_b(y_{j+1} \mid y_{1:j}, \V{c}) \mathcal{M}_b(y_{1:j} \mid \V{c}) = \mathcal{M}_b(y_{1:(j+1)} \mid \V{c}).
\]
If $j < L$, the rejection step enures that
\[
    \mathcal{S}_{j+1} = \mathcal{S}_{j} \cap \{ k \mid X_{j+1}^{(k)} = Y_{j+1} \} = \{ k \mid X_{1:j}^{(k)} = Y_{1:j} \} \cap \{ k \mid X_{j+1}^{(k)} = Y_{j+1} \} = \{ k \mid X_{1:j}^{(k)} = Y_{1:j} \}.
\]
On the other had, if $j = L$, the entire sequence up to $\tau$ is now complete because $\tau \leq L + 1$, and there are no more sampling or rejection steps.
This concludes the inductive step which, along with the case for $j = 1$, makes up the proof for any $1 \leq j \leq \tau$.

We next prove our claim of conditional drafter invariance.
The randomness is encapsulated by $\mathcal{R} = \{ \{ S_i^{(j,k)} \}_{i=1}^{N} \}_{k=1}^{K}$.
Above, as an intermediate step to proving sequence-level correctness, we showed by induction that for any $0 \leq j < \tau$, any $k$ in $\mathcal{S}_j$ satisfies $X^{(k)}_{1:j} = Y_{1:j}$.
Then, looking at the selection of $Y_{j+1}$, we have
\[
    Y_{j+1} = \adjustlimits\argmin_{1 \leq i \leq N} \min_{k \in \mathcal{S}_j} \frac{S_i^{(j+1, k)}}{q_i^{(j+1, k)}} = \adjustlimits\argmin_{1 \leq i \leq N} \min_{k \in \mathcal{S}_j} \frac{S_i^{(j+1, k)}}{\mathcal{M}_b(i \mid X_{1:j}^{(k)}, \V{c})} = \adjustlimits\argmin_{1 \leq i \leq N} \min_{k \in \mathcal{S}_j} \frac{S_i^{(j+1, k)}}{\mathcal{M}_b(i \mid Y_{1:j}, \V{c})}
\]
as seen previously.
From this, given $\mathcal{R}$, $\V{c}$ and $Y_{1:j}$, $Y_{j+1}$ only depends on the draft sequences through $\mathcal{S}_j$.
Starting from $Y_1$, we see that since $\mathcal{S}_0 = \{ 1, \ldots, K \}$,
\[
    Y_1 = \adjustlimits\argmin_{1 \leq i \leq N} \min_{1 \leq k \leq K} \frac{S_i^{(1, k)}}{\mathcal{M}_b(i \mid \V{c})}.
\]
As the draft tokens do not play any role in this expression, we get
\[
    \Pr[Y_1 = y_1 \mid \mathcal{R}, \; \V{c}, \; \{ X_{1:L}^{(k)} \}_{k=1}^{K} = \{ x_{1:L}^{(k)} \}_{k=1}^{K}] = \Pr[Y_1 = y_1 \mid \mathcal{R}, \; \V{c}]
\]
which proves drafter invariance for $Y_1$.
Note that we have written $\{ X_{1:L}^{(k)} \}_{k=1}^{K}$ instead of $X_{1:L}^{(1)}, \ldots, X_{1:L}^{(K)}$ to simplify the notation somewhat.
Now $Y_2$ is chosen according to
\[
    Y_2 = \adjustlimits\argmin_{1 \leq i \leq N} \min_{k \in \mathcal{S}_1} \frac{S_i^{(2, k)}}{q_i^{(2, k)}} = \adjustlimits\argmin_{1 \leq i \leq N} \min_{k \in \mathcal{S}_1} \frac{S_i^{(2, k)}}{\mathcal{M}_b(i \mid X_1^{(k)}, \V{c})}.
\]
Explicitly, $\mathcal{S}_1 = \{ k \mid X_1^{(k)} = Y_1 \}$.
Hence, the choice of $Y_2$ depends only on the values of the draft tokens $\{ X_{1:L}^{(k)} \}_{k=1}^{K}$ and not on the language models used to generate them.
We can then write
\begin{multline*}
    \Pr[Y_2 = y_2 \mid Y_1 = y_1, \;  \mathcal{R}, \; \V{c}, \; \{ X_{1:L}(\mathcal{M}_s^{(k)}) \}_{k=1}^{K} = \{ x_{1:L}^{(k)} \}_{k=1}^{K}] \\
    = \Pr[Y_2 = y_2 \mid Y_1 = y_1, \;  \mathcal{R}, \; \V{c}, \; \{ X_{1:L}(\fixwidetilde{\mathcal{M}}_s^{(k)}) \}_{k=1}^{K} = \{ x_{1:L}^{(k)} \}_{k=1}^{K}]
\end{multline*}
for any choice of $\mathcal{M}_s^{(1)}, \ldots, \mathcal{M}_s^{(K)}$ and $\fixwidetilde{\mathcal{M}}_s^{(1)}, \ldots, \fixwidetilde{\mathcal{M}}_s^{(K)}$.
Also,
\begin{align*}
    &\Pr[Y_{1:2} = y_{1:2} \mid \mathcal{R}, \; \V{c}, \; \{ X_{1:L}(\mathcal{M}_s^{(k)}) \}_{k=1}^{K} = \{ x_{1:L}^{(k)} \}_{k=1}^{K}] \\
    &\qquad= \Pr[Y_2 = y_2 \mid Y_1 = y_1, \;  \mathcal{R}, \; \V{c}, \; \{ X_{1:L}(\mathcal{M}_s^{(k)}) \}_{k=1}^{K} = \{ x_{1:L}^{(k)} \}_{k=1}^{K}] \\
    &\qquad\qquad\times \Pr[Y_1 = y_1 \mid \mathcal{R}, \; \V{c}, \; \{ X_{1:L}(\mathcal{M}_s^{(k)}) \}_{k=1}^{K} = \{ x_{1:L}^{(k)} \}_{k=1}^{K}] \\
    &\qquad= \Pr[Y_2 = y_2 \mid Y_1 = y_1, \;  \mathcal{R}, \; \V{c}, \; \{ X_{1:L}(\fixwidetilde{\mathcal{M}}_s^{(k)}) \}_{k=1}^{K} = \{ x_{1:L}^{(k)} \}_{k=1}^{K}] \\
    &\qquad\qquad\times \Pr[Y_1 = y_1 \mid \mathcal{R}, \; \V{c}, \; \{ X_{1:L}(\fixwidetilde{\mathcal{M}}_s^{(k)}) \}_{k=1}^{K} = \{ x_{1:L}^{(k)} \}_{k=1}^{K}] \\
    &\qquad= \Pr[Y_{1:2} = y_{1:2} \mid \mathcal{R}, \; \V{c}, \; \{ X_{1:L}(\fixwidetilde{\mathcal{M}}_s^{(k)}) \}_{k=1}^{K} = \{ x_{1:L}^{(k)} \}_{k=1}^{K}]
\end{align*}
Therefore, conditional drafter invariance is satisfied for $Y_{1:2}$.
To extend the general case and thus complete the proof, we assume that $Y_{1:j}$ satisfies conditional drafter invariance, where $1 \leq j < \tau$.
That is,
\begin{multline*}
    \Pr[Y_{1:j} = y_{1:j} \mid \mathcal{R}, \; \V{c}, \; \{ X_{1:L}(\mathcal{M}_s^{(k)}) \}_{k=1}^{K} = \{ x_{1:L}^{(k)} \}_{k=1}^{K}] \\
    = \Pr[Y_{1:j} = y_{1:j} \mid \mathcal{R}, \; \V{c}, \; \{ X_{1:L}(\fixwidetilde{\mathcal{M}}_s^{(k)}) \}_{k=1}^{K} = \{ x_{1:L}^{(k)} \}_{k=1}^{K}]
\end{multline*}
for any choice of $\mathcal{M}(s)^{(1)}, \ldots, \mathcal{M}_s^{(K)}$ and $\fixwidetilde{\mathcal{M}}(s)^{(1)}, \ldots, \fixwidetilde{\mathcal{M}}_s^{(K)}$.
Since
\[
    Y_{j+1} = \adjustlimits\argmin_{1 \leq i \leq N} \min_{k \in \mathcal{S}_j} \frac{S_i^{(j+1, k)}}{\mathcal{M}_b(i \mid Y_{1:j}, \V{c})}
\]
and $\mathcal{S}_j = \{ k \mid X^{(k)}_{1:j} = Y_{1:j} \}$, we again see that $Y_{j+1}$ only depends on the values of the draft tokens $\{ X_{1:L}^{(k)} \}_{k=1}^{K}$ and not on the underlying probability model.
Therefore,
\begin{multline*}
    \Pr[Y_{j+1} = y_{j+1} \mid Y_{1:j} = y_{1:j}, \;  \mathcal{R}, \; \V{c}, \; \{ X_{1:L}(\mathcal{M}_s^{(k)}) \}_{k=1}^{K} = \{ x_{1:L}^{(k)} \}_{k=1}^{K}] \\
    = \Pr[Y_{j+1} = y_{j+1} \mid Y_{1:j} = y_{1:j}, \;  \mathcal{R}, \; \V{c}, \; \{ X_{1:L}(\fixwidetilde{\mathcal{M}}_s^{(k)}) \}_{k=1}^{K} = \{ x_{1:L}^{(k)} \}_{k=1}^{K}]
\end{multline*}
By the induction hypothesis,
\begin{align*}
    &\Pr[Y_{1:(j+1)} = y_{1:(j+1)} \mid \mathcal{R}, \; \V{c}, \; \{ X_{1:L}(\mathcal{M}_s^{(k)}) \}_{k=1}^{K} = \{ x_{1:L}^{(k)} \}_{k=1}^{K}] \\
    &\qquad= \Pr[Y_{j+1} = y_{j+1} \mid Y_{1:j} = y_{1:j}, \;  \mathcal{R}, \; \V{c}, \; \{ X_{1:L}(\mathcal{M}_s^{(k)}) \}_{k=1}^{K} = \{ x_{1:L}^{(k)} \}_{k=1}^{K}] \\
    &\qquad\qquad\times \Pr[Y_{1:j} = y_{1:j} \mid \mathcal{R}, \; \V{c}, \; \{ X_{1:L}(\mathcal{M}_s^{(k)}) \}_{k=1}^{K} = \{ x_{1:L}^{(k)} \}_{k=1}^{K}] \\
    &\qquad= \Pr[Y_{j+1} = y_{j+1} \mid Y_{1:j} = y_{1:j}, \;  \mathcal{R}, \; \V{c}, \; \{ X_{1:L}(\fixwidetilde{\mathcal{M}}_s^{(k)}) \}_{k=1}^{K} = \{ x_{1:L}^{(k)} \}_{k=1}^{K}] \\
    &\qquad\qquad\times \Pr[Y_{1:j} = y_{1:j} \mid \mathcal{R}, \; \V{c}, \; \{ X_{1:L}(\fixwidetilde{\mathcal{M}}_s^{(k)}) \}_{k=1}^{K} = \{ x_{1:L}^{(k)} \}_{k=1}^{K}] \\
    &\qquad= Pr[Y_{1:(j+1)} = y_{1:(j+1)} \mid \mathcal{R}, \; \V{c}, \; \{ X_{1:L}(\fixwidetilde{\mathcal{M}}_s^{(k)}) \}_{k=1}^{K} = \{ x_{1:L}^{(k)} \}_{k=1}^{K}]
\end{align*}
Since we have already shown that conditional drafter invariance holds for $Y_1$, it then holds for all sequences $Y_{1:j}$, where $1 \leq j \leq \tau$.
\end{proof}

\subsection{Proof of theorem \ref{thm:conditional_lm_lemma}}\label{sec:conditional_lm_lemma_proof}
\begingroup
\def\thetheorem{\ref{thm:conditional_lm_lemma}}
\begin{theorem}[Conditional LML]
    Using the strategy above, the error probability satisfies
    \[
        \Pr[Y \in \{ X^{(1)}, \ldots, X^{(K)} \} \mid Y = j, \; A = a, \; Z_1^K = z_1^K] \geq \sum_{k=1}^{K} \left( K + \frac{q_j(a)}{p_j(z_k)} \right)^{-1}.
    \]
\end{theorem}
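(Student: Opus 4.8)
The plan is to mirror the proof of \cref{thm:lm_lemma}, but to first remove the extra conditioning using the stated Markov property and then to split the ``at least one match'' event into one disjoint contribution per decoder.

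First I would argue that the conditioning on $Z_1^K = z_1^K$ can be dropped. Since the source $A$ is generated independently of the common randomness $\{\{S_i^{(k)}\}_{i}\}_{k}$, and since $Z_1,\dots,Z_K$ are drawn from $p_{Z\mid Y,A}(\cdot\mid j,a)$ using fresh randomness after $Y$ and $A$ are fixed, we have the Markov chain $\{\{S_i^{(k)}\}\}\to(Y,A)\to Z_1^K$. Consequently the conditional law of the shared randomness given $\{Y=j,\,A=a,\,Z_1^K=z_1^K\}$ coincides with its law given $\{Y=j,\,A=a\}$, i.e.\ with the law of $\{S_i^{(k)}\}$ conditioned on $\argmin_i\min_k S_i^{(k)}/q_i(a)=j$ (here $a$ is a parameter). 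Once $A=a$ and the $z_k$'s are fixed, each $X^{(k)}=\argmin_i S_i^{(k)}/p_i(z_k)$ is a deterministic function of the $S_i^{(k)}$'s, so, writing $q_i\coloneq q_i(a)$ and $p_i^{(k)}\coloneq p_i(z_k)$ and recalling $\Pr[Y=j]=q_j$ from \cref{prop:distribution}, it suffices to lower bound $\Pr[\,\exists k:\argmin_i S_i^{(k)}/p_i^{(k)}=j \mid Y=j\,]$, where $Y=\argmin_i\min_k S_i^{(k)}/q_i$.

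Next, for each $k$ let $E_k$ be the event that coordinate $j$ of block $k$ wins the $q$-weighted exponential race over all $NK$ variables, $S_j^{(k)}/q_j=\min_{i,k'}S_i^{(k')}/q_i$, and simultaneously $X^{(k)}=j$, i.e.\ $S_j^{(k)}/p_j^{(k)}\le S_i^{(k)}/p_i^{(k)}$ for all $i$. The first condition forces $Y=j$, so $E_k\subseteq\{Y=j,\ \exists k': X^{(k')}=j\}$, and the $E_k$ are pairwise disjoint because the minimizing pair is a.s.\ unique; hence $\Pr[Y=j,\ \exists k':X^{(k')}=j]\ge\sum_{k=1}^K\Pr[E_k]$. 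On $E_k$, combining the within-block constraints $S_j^{(k)}\le S_i^{(k)}/\max\{q_i/q_j,\,p_i^{(k)}/p_j^{(k)}\}$ for $i\ne j$ with the across-block constraints $S_j^{(k)}\le S_i^{(k')}/(q_i/q_j)$ for all $k'\ne k$ and all $i$, the event becomes $S_j^{(k)}\le M_k$ with $M_k$ a minimum of independent scaled exponentials independent of $S_j^{(k)}$; the same computation that gave \eqref{eqn:thm1_intermediate_inequality_2} (with $p^{(k)}$ in place of $p$) yields $\Pr[E_k]=\bigl(\sum_{i=1}^N[\max\{q_i/q_j,\,p_i^{(k)}/p_j^{(k)}\}+(K-1)q_i/q_j]\bigr)^{-1}$.

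Finally I would divide by $\Pr[Y=j]=q_j$ and simplify each denominator via $q_j\max\{q_i/q_j,\,p_i^{(k)}/p_j^{(k)}\}=q_i+q_j\max\{0,\,p_i^{(k)}/p_j^{(k)}-q_i/q_j\}$ together with $\sum_i q_i=\sum_i p_i^{(k)}=1$, obtaining $q_j\sum_i[\max\{q_i/q_j,\,p_i^{(k)}/p_j^{(k)}\}+(K-1)q_i/q_j]=K+q_j\sum_i\max\{0,\,p_i^{(k)}/p_j^{(k)}-q_i/q_j\}\le K+q_j/p_j^{(k)}$, so that $\Pr[\exists k':X^{(k')}=j\mid Y=j]\ge\sum_{k=1}^K(K+q_j/p_j^{(k)})^{-1}$, which is the claim after restoring $q_j=q_j(a)$ and $p_j^{(k)}=p_j(z_k)$. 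The one genuinely new ingredient over \cref{thm:lm_lemma} is the ``which block wins'' decomposition in the third paragraph: because the decoder targets $p_i(z_k)$ now differ across $k$, one can no longer collapse to a single term by symmetry, and this disjoint-event splitting is what converts the union event into an honest sum over the $K$ decoders; everything else is conditional-independence bookkeeping plus the exponential-race identity already established for the LML.
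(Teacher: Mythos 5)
Your proof is correct and takes essentially the same route as the paper: your disjoint events $E_k$ are exactly the paper's events $\{\tilde{Y} = j+(k-1)N,\ X^{(k)}=j\}$ in the flattened $KN$-way race with augmented weights $q_i(a)/K$, and your per-block exponential-rate computation and final simplification to $\bigl(K+q_j(a)/p_j(z_k)\bigr)^{-1}$ coincide with the paper's. The only organizational difference is that you discharge the conditioning on $Z_1^K$ at the outset via the Markov chain $\{S_i^{(k)}\}\to(Y,A)\to Z_1^K$, so the denominator is simply $\Pr[Y=j\mid A=a]=q_j(a)$, whereas the paper keeps the conditioning throughout and instead shows $\Pr[\tilde{Y}=j+(k-1)N\mid A=a,\ Z_1^K=z_1^K]=\Pr[Y=j\mid A=a,\ Z_1^K=z_1^K]/K$ by Bayes' rule; this is a harmless, slightly streamlined variant of the same argument.
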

\addtocounter{theorem}{-1}
\endgroup
\begin{proof}
We begin by following a similar approach to the proof of \cref{thm:lm_lemma}, but we condition on $A$ and the $Z_k$'s where necessary.
Following the setup introduced in \cref{sec:matching_probability_proof}, we define
\[
    \{ T_i \}_{i=1}^{KN} = \{ S_1^{(1)}, \ldots, S_N^{(1)}, S_1^{(2)}, \ldots, S_N^{(2)}, \ldots, S_1^{(K)}, \ldots, S_N^{(K)} \}
\]
and this time use a conditional augmented target distribution $\tilde{q}_{\tilde{Y} \mid A}$ on $\tilde{\Omega} =  \{ 1, \ldots, KN \}$ with output $\tilde{Y}$.
Given $A = a$, this distribution is defined by the probabilities
\[
    (\tilde{q}_1(a), \ldots, \tilde{q}_{KN}(a)) = \left( \frac{q_1(a)}{K}, \ldots, \frac{q_N(a)}{K}, \ldots, \frac{q_1(a)}{K}, \ldots, \frac{q_N(a)}{K} \right).
\]
With this setup, we find
\begin{align}
    &\Pr[Y = j, \; j \in \{ X^{(1)}, \ldots, X^{(K)} \} \mid A = a, \; Z_1^K = z_1^K] \nonumber \\
    &\qquad= \sum_{k=1}^{K} \Pr[\tilde{Y} = j + (k - 1) N, \; j \in \{ X^{(1)}, \ldots, X^{(K)} \} \mid A = a, \; Z_1^K = z_1^K] \nonumber \\
    &\qquad\geq \sum_{k=1}^{K} \Pr[\tilde{Y} = j + (k - 1) N, \; X^{(k)} = j \mid A = a, \; Z_1^K = z_1^K]. \label{eqn:thm2_1}
\end{align}
As before, the analysis proceeds in the same manner regardless of the values of $j$ and $k$.
For simplicity, we therefore consider the probability
\begin{multline}\label{eqn:thm2_2}
    \Pr[\tilde{Y} = 1, \; X^{(1)} = 1 \mid A = a, \; Z_1^K = z_1^K] \\
    \qquad= \Pr[X^{(1)} = 1 \mid \tilde{Y} = 1, \; A = a, \; Z_1^K = z_1^K] \Pr[\tilde{Y} = 1 \mid A = a, \; Z_1^K = z_1^K]. 
\end{multline}
We start by computing
\begin{align}
    &\Pr[X^{(1)} = 1 \mid \tilde{Y} = 1, \; A = a, \; Z_1^K = z_1^K] \nonumber \\
    &\qquad= \Pr\left[ \frac{T_1}{p_1(z_1)} \leq \min_{2 \leq i \leq N} \frac{T_i}{p_i(z_1)} \; \middle| \; \tilde{Y} = 1, \; A = a, \; Z_1^K = z_1^K \right] \nonumber \\
    &\qquad= \Pr\left[ \frac{T_1}{p_1(z_1)} \leq \min_{2 \leq i \leq N} \frac{T_i}{p_i(z_1)} \; \middle| \; \tilde{Y} = 1, \; A = a \right]. \label{eqn:thm2_3}
\end{align}
To get the second equality above, we note that by construction $\{ T_i \}_{i=1}^{KN} \to (Y, A) \to Z_1^K$ forms a Markov chain, as noted in \cref{sec:error_probability_bound}.
Since $Y$ is a deterministic function of $\tilde{Y}$, $\{ T_i \}_{i=1}^{KN} \to (\tilde{Y}, A) \to Z_1^K$ is also a Markov chain and hence the $T_i$'s are conditionally independent of $Z_1^K$ given $\tilde{Y}$ and $A$.
We can now leverage a similar analysis to that in \cref{sec:matching_probability_proof} to compute the required probability by first noting that
\begin{multline}\label{eqn:thm2_4}
    \Pr\left[ \frac{T_1}{p_1(z_1)} \leq \min_{2 \leq i \leq N} \frac{T_i}{p_i(z_1)} \; \middle| \; \tilde{Y} = 1, \; A = a \right] \\
    = \Pr\left[ \frac{T_1}{p_1(z_1)} \leq \min_{2 \leq i \leq N} \frac{T_i}{p_i(z_1)}, \; \tilde{Y} = 1 \; \middle| \; A = a \right] / \Pr[\tilde{Y} = 1 \mid A = a].
\end{multline} 
Now, conditioned on $A = a$,
\begin{align*}
    &\frac{T_1}{p_1(z_1)} \leq \min_{2 \leq i \leq N} \frac{T_i}{p_i(z_1)} \text{ and } \tilde{Y} = 1\\
    &\qquad \implies \frac{T_1}{p_1(z_1)} \leq \min_{2 \leq i \leq N} \frac{T_i}{p_i(z_1)} \text{ and } \frac{T_1}{\tilde{q}_1(a)} \leq \min_{2 \leq i \leq KN} \frac{T_i}{\tilde{q}_i(a)} \\
    &\qquad \begin{multlined} 
        \implies T_1 \leq \min\biggl\{ \frac{T_2}{\max\{ \tilde{q}_2(a) / \tilde{q}_1(a), p_2(z_1) / p_1(z_1) \}}, \ldots, \\
        \frac{T_N}{\max\{ \tilde{q}_N(a) / \tilde{q}_1(a), p_N(z_1) / p_1(z_1) \}}, \frac{T_{N+1}}{\tilde{q}_{N+1}(a) / \tilde{q}_1(a)}, \ldots, \frac{T_{KN}}{\tilde{q}_{KN}(a) / \tilde{q}_1(a)} \biggr\}.
    \end{multlined}
\end{align*}
We now note that the $T_i$'s are generated independently of the source $A$, and therefore remain i.i.d.\ $\operatorname{Exp}(1)$ random variables after conditioning on $A = a$.
We can then follow our earlier approach leveraging the properties of independent exponential random variables to see that the right-hand side is an exponential random variable and is independent of the left-hand side.
Let its parameter be $\lambda$.
Then, using the definition of the $\tilde{q}_i(a)$'s to simplify the result, we have
\begin{align*}
    1 + \lambda &= \sum_{i=1}^{N} \max\left\{ \frac{q_i(a)}{q_1(a)}, \frac{p_i(z_1)}{p_1(z_1)} \right\} + \sum_{k=2}^{K} \sum_{i=1}^{N} \frac{q_i(a)}{q_1(a)} \\
    &=\sum_{i=1}^{N} \left[ \max\left\{ \frac{q_i(a)}{q_1(a)}, \frac{p_i(z_1)}{p_1(z_1)} \right\} + (K-1) \frac{q_i(a)}{q_1(a)} \right].
\end{align*}
As before, we then get
\begin{multline}\label{eqn:thm2_5}
    \Pr\left[ \frac{T_1}{p_1(z_1)} \leq \min_{2 \leq i \leq N} \frac{T_i}{p_i(z_1)}, \; \tilde{Y} = 1 \; \middle| \; A = a \right] \\ 
    = \frac{1}{\sum_{i=1}^{N} [\max\{ q_i(a) / q_1(a), p_i(z_1) / p_1(z_1) \} + (K - 1) q_i(a) / q_1(a)]}. 
\end{multline}
Putting together \eqref{eqn:thm2_2}--\eqref{eqn:thm2_5} gives
\begingroup
\allowdisplaybreaks
\begin{align*}
    &\Pr[\tilde{Y} = 1, \; X^{(1)} = 1 \mid A = a, \; Z_1^K = z_1^K] \\
    &\qquad= \frac{\Pr[\tilde{Y} = 1 \mid A = a, \; Z_1^K = z_1^K] / \Pr[\tilde{Y} = 1 \mid A = a]}{\sum_{i=1}^{N} [\max\{ q_i(a) / q_1(a), p_i(z_1) / p_1(z_1) \} + (K - 1) q_i(a) / q_1(a)]} \\
    &\qquad = \frac{q_1(a) \Pr[\tilde{Y} = 1 \mid A = a, \; Z_1^K = z_1^K] / \Pr[\tilde{Y} = 1 \mid A = a]}{K + q_1(a) \sum_{i=1}^{N} \max\{ 0, p_i(z_1) / p_1(z_1) - q_i(a) / q_1(a) \}} \\
    &\qquad\geq \frac{q_1(a)}{K + q_1(a) / p_1(z_1)} \frac{\Pr[\tilde{Y} = 1 \mid A = a, \; Z_1^K = z_1^K]}{\Pr[\tilde{Y} = 1 \mid A = a]} \\
    &\qquad= q_1(a) \left( K + \frac{q_1(a)}{p_1(z_1)} \right)^{-1} \frac{\Pr[\tilde{Y} = 1 \mid A = a, \; Z_1^K = z_1^K]}{\Pr[\tilde{Y} = 1 \mid A = a]}.
\end{align*}%
\endgroup
We next note that, given $A = a$, the selection procedure for $Y$ is
\[
    Y = \adjustlimits\argmin_{1 \leq i \leq N} \min_{1 \leq k \leq K} \frac{S_i^{(k)}}{q_i(a)}
\]
and the $S_i^{(k)}$'s are i.i.d.\ $\operatorname{Exp}(1)$ random variables independent of $A$.
As a result, the distribution guarantee of \cref{prop:distribution} still holds for $Y$ such that $Y$ is sampled exactly from $q_{Y \mid A}$ and $\Pr[Y = 1 \mid A = a] = q_1(a)$.
Then, 
\begin{equation}\label{eqn:prob_split}
    \Pr[\tilde{Y} = 1 \mid A = a] = q_1(a) / K
\end{equation}
by symmetry and so
\[
    \Pr[\tilde{Y} = 1, \; X^{(1)} = 1 \mid A = a, \; Z_1^K = z_1^K] = \left( 1 + \frac{q_1(a)}{K p_1(z_1)} \right)^{-1} \Pr[\tilde{Y} = 1 \mid A = a, \; Z_1^K = z_1^K]
\]
or, generalizing to any $j$ and $k$,
\begin{multline*}
    \Pr[\tilde{Y} = j + (k-1)N, \; X^{(k)} = j \mid A = a, \; Z_1^K = z_1^K] \\
    = \left( 1 + \frac{q_j(a)}{K p_j(z_k)} \right)^{-1} \Pr[\tilde{Y} = j + (k-1) N \mid A = a, \; Z_1^K = z_1^K].
\end{multline*}
Going back to \eqref{eqn:thm2_1}, we see that
\begin{multline*}
    \Pr[Y = j, \; j \in \{ X^{(1)}, \ldots, X^{(K)} \} \mid A = a, \; Z_1^K = z_1^K] \\
    \geq \sum_{k=1}^{K} \left( 1 + \frac{q_j(a)}{K p_j(z_k)} \right)^{-1} \Pr[\tilde{Y} = j + (k-1) N \mid A = a, \; Z_1^K = z_1^K].
\end{multline*}
Then, going from the joint to the conditional probability,
\begin{align}
    &\Pr[Y \in \{ X^{(1)}, \ldots, X^{(K)} \} \mid Y = j, \; A = a, \; Z_1^K = z_1^K] \nonumber \\
    &\qquad= \frac{\Pr[Y = j, \; j \in \{ X^{(1)}, \ldots, X^{(K)} \} \mid A = a, \; Z_1^K = z_1^K]}{\Pr[Y = j \mid A = a, \; Z_1^K = z_1^K]} \nonumber \\
    &\qquad\geq \sum_{k=1}^{K} \left( 1 + \frac{q_j(a)}{K p_j(z_k)} \right)^{-1} \frac{\Pr[\tilde{Y} = j + (k-1) N \mid A = a, \; Z_1^K = z_1^K]}{\Pr[Y = j \mid A = a, \; Z_1^K = z_1^K]}. \label{eqn:thm2_6}
\end{align}
We now claim that
\begin{equation}\label{eqn:thm2_7}
    \Pr[\tilde{Y} = j + (k-1) N \mid A = a, \; Z_1^K = z_1^K] = \Pr[Y = j \mid A = a, \; Z_1^K = z_1^K] / K.
\end{equation}
Without loss of generality, assume $j = 1$ and $k = 1$.
Applying Bayes' rule, we see that
\begin{align*}
    \Pr[\tilde{Y} = 1 \mid A = a, \; Z_1^K = z_1^K] = \frac{\Pr[Z_1^K = z_1^K \mid \tilde{Y} = 1, \; A = a] \Pr[\tilde{Y} = 1 \mid A = a]}{\Pr[Z_1^K = z_1^K \mid A = a]}.
\end{align*}
As seen earlier in \eqref{eqn:prob_split}, $\Pr[\tilde{Y} = 1 \mid A = a] = \Pr[Y = 1 \mid A = a] / K$ by symmetry.
Furthermore, since $Y$ is a deterministic function of $\tilde{Y}$ and $\tilde{Y} \to (Y, A) \to Z_1^K$ forms a Markov chain,
\begin{align*}
    \Pr[Z_1^K = z_1^K \mid \tilde{Y} = 1, \; A = a] &= \Pr[Z_1^K = z_1^K \mid Y = 1, \; \tilde{Y} = 1, \; A = a] \\
    &= \Pr[Z_1^K = z_1^K \mid Y = 1, \; A = a]
\end{align*}
and so
\begin{align*}
    \Pr[\tilde{Y} = 1 \mid A = a, \; Z_1^K = z_1^K] &= \frac{\Pr[Z_1^K = z_1^K \mid Y = 1, \; A = a] \Pr[Y = 1 \mid A = a] / K}{\Pr[Z_1^K = z_1^K \mid A = a]} \\
    &= \Pr[Y = 1 \mid A = a, \; Z_1^K = z_1^K] / K.
\end{align*}
Since the same argument works for arbitrary $j$ and $k$, we have shown \eqref{eqn:thm2_7}.
Finally, substituting back into \eqref{eqn:thm2_6} gives
\[
    \Pr[Y \in \{ X^{(1)}, \ldots, X^{(K)} \} \mid Y = j, \; A = a, \; Z_1^K = z_1^K] \geq \sum_{k=1}^{K} \left( K + \frac{q_j(a)}{p_j(z_k)} \right)^{-1}
\]
\end{proof}

\subsection{Proof of proposition \ref{prop:coding_error_probability}}\label{sec:coding_error_probability_proof}
\begingroup
\def\theproposition{\ref{prop:coding_error_probability}}
\begin{proposition}
    For the coding scheme in \cref{sec:coding_scheme}, the error probability is bounded above as
    \begin{equation*}
        \Pr[Y \notin \{ X^{(1)}, \ldots, X^{(K)} \}] \leq 1 - \E_{A, W, T} \left[ \left( 1 + \frac{2^{i(W; A \mid T)}}{KL_{\mathrm{max}}} \right)^{-1} \right] \tag{\ref{eqn:error_prob}}
    \end{equation*}
    where $i_{W, A \mid T}(w; a \mid t) = \log(p_{W \mid A}(w \mid a) / p_{W \mid T}(w \mid t))$ is the conditional information density.
\end{proposition}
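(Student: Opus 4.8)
The plan is to apply the conditional LML (\cref{thm:conditional_lm_lemma}) to the specific coding scheme of \cref{sec:coding_scheme}, identify the relevant conditional distributions, and then average over the source and side information. First I would recall the setup: the encoder's target is $q_{Y\mid A}$ with $q_j(a) = p_{W\mid A}(j\mid a)$, while decoder $k$'s target is $p_{X\mid Z}$ with $p_j(z_k) = p_{W\mid T}(j\mid t_k)\,\mathds{1}\{\ell_j = \ell_{j}\}$; since the encoder transmits $M = \ell_Y$, the indicator is automatically $1$ for the index $Y$ actually chosen, so that $p_Y(z_k) = p_{W\mid T}(Y \mid t_k)$. The chain $\{\{S_i^{(k)}\}\} \to (Y, A) \to Z_1^K$ required by \cref{thm:conditional_lm_lemma} holds here because the $S_i^{(k)}$'s are drawn independently of everything, and $Z_k = (T_k, \ell_Y)$ depends on the randomness only through $Y$ together with $A$ (via $T_k \sim p_{T\mid A}$) --- so the theorem applies verbatim.

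Next I would invoke \cref{thm:conditional_lm_lemma} with $j = Y$: conditioned on $Y = j$, $A = a$, $Z_1^K = z_1^K$,
\[
    \Pr[Y \in \{X^{(1)}, \ldots, X^{(K)}\} \mid Y = j, A = a, Z_1^K = z_1^K] \geq \sum_{k=1}^K \left(K + \frac{q_j(a)}{p_j(z_k)}\right)^{-1} = \sum_{k=1}^K \left(K + \frac{p_{W\mid A}(j \mid a)}{p_{W\mid T}(j \mid t_k)}\right)^{-1}.
\]
Recognizing $p_{W\mid A}(j\mid a)/p_{W\mid T}(j\mid t_k) = 2^{i(j; a \mid t_k)}$ with the stated conditional information density, the right-hand side becomes $\sum_{k=1}^K (K + 2^{i(j; a\mid t_k)})^{-1}$. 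To get a clean expression, I would lower-bound the sum over $k$ by $K$ times its smallest term --- no, more simply, since all $K$ decoders see i.i.d.\ side information $T_k \sim p_{T\mid A}$, I would take the expectation over $T_1, \ldots, T_K$ and use linearity to reduce the $K$-fold sum to $K$ copies of a single-decoder expectation, obtaining $K \cdot \E_T[(K + 2^{i(j;a\mid T)})^{-1}] = \E_T[(1 + 2^{i(j;a\mid T)}/K)^{-1}]$ when one factors $K$ inside. Wait --- the $L_{\mathrm{max}}$ factor still needs to enter, so I would be careful here: actually $B = (W,\ell)$ lives on a space of size $N$ where only a $1/L_{\mathrm{max}}$ fraction of indices share the transmitted label, which means the effective comparison in \cref{thm:conditional_lm_lemma} must be carried through with $p_{B\mid A}$ and $p_{B\mid Z}$ rather than the collapsed marginals, and the indicator $\mathds{1}\{\ell_i = \ell_j\}$ in the denominator sum of the conditional LML contributes an extra factor of $L_{\mathrm{max}}$ in the $q_i(a)/q_1(a)$ terms (only a fraction $1/L_{\mathrm{max}}$ of the $i$'s with the right label survive in the $X^{(k)}$ argmin, effectively dividing $q_j(a)$ by $L_{\mathrm{max}}$ in the ratio). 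Tracking this gives denominators of the form $K + 2^{i(j;a\mid t_k)}/L_{\mathrm{max}}$.

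Finally I would assemble the pieces: starting from $\Pr[Y \notin \{X^{(1)},\ldots,X^{(K)}\}] = 1 - \E[\Pr[Y \in \{\cdots\} \mid Y, A, Z_1^K]]$, substitute the per-$(j,a,z_1^K)$ bound, take the outer expectation over $A$, $W = Y$ (which has the right joint law $p_{A,W}$ by \cref{prop:distribution}), and the side informations $T_1, \ldots, T_K$, then collapse the $K$-fold i.i.d.\ sum over decoders into $K$ times a single expectation over $T \sim p_{T\mid A}$, which cancels the leading $K$ and produces $\E_{A,W,T}[(1 + 2^{i(W;A\mid T)}/(KL_{\mathrm{max}}))^{-1}]$. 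The main obstacle I anticipate is precisely the bookkeeping around $L_{\mathrm{max}}$ and the label indicator --- verifying that the augmented alphabet construction of \cref{thm:conditional_lm_lemma} interacts correctly with the uniform labels $\ell_i$ so that the $1/L_{\mathrm{max}}$ factor lands in exactly the right place inside the information density ratio, rather than, say, multiplying the whole bound. Everything else is a direct specialization plus an i.i.d.\ averaging argument.
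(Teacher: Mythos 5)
Your overall route is the same as the paper's: specialize \cref{thm:conditional_lm_lemma} to the scheme of \cref{sec:coding_scheme} (checking the Markov chain through $(Y,A)$), rewrite the ratio as the conditional information density, average over $(A,W)$ and the i.i.d.\ side informations so that the sum over $k$ collapses to $K$ times a single-decoder expectation and cancels the leading $K$, then take complements. The final expression you arrive at is correct. The problem is that the one step you yourself flag as the main obstacle --- where the $1/L_{\mathrm{max}}$ enters --- is handled incorrectly as written, and it is the only non-mechanical part of the specialization.

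The factor does not come from the indicator $\mathds{1}\{\ell_i=\ell_j\}$ injecting ``an extra factor of $L_{\mathrm{max}}$ in the $q_i(a)/q_1(a)$ terms'': those ratios are insensitive to any common factor, so that mechanism cannot produce it, and your opening identification $q_j(a)=p_{W\mid A}(j\mid a)$ is the actual source of the discrepancy. The paper applies \cref{thm:conditional_lm_lemma} verbatim at the level of the tuples $B_i=(i,\ell_i)$, taking $q_{Y\mid A}(i\mid a)=p_{B\mid A}(B_i\mid a)=p_{W\mid A}(i\mid a)/L_{\mathrm{max}}$ (the label is uniform and independent under the encoder's target, and although the constant $1/L_{\mathrm{max}}$ is invisible in the encoder's $\argmin$, it must be retained in the distribution fed to the lemma) and $p_{X\mid Z}(i\mid t_k,\ell_j)=p_{B\mid Z}(B_i\mid t_k,\ell_j)=p_{W\mid T}(i\mid t_k)\mathds{1}\{\ell_i=\ell_j\}$ (a point mass on the transmitted label, hence no $1/L_{\mathrm{max}}$ on the decoder side). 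With these identifications the ratio at the selected index is immediately $q_j(a)/p_j(z_k)=2^{i(j;a\mid t_k)}/L_{\mathrm{max}}$, none of the internal sums of the LML need to be reopened, and the remainder of your argument goes through unchanged. By contrast, the route you sketch --- carrying the indicator through the LML's internal sums with the collapsed marginals --- would leave you with the random quantity $\sum_{i:\ell_i=\ell_j}p_{W\mid T}(i\mid t_k)$ in place of $1/L_{\mathrm{max}}$, requiring an additional expectation over the labels and a convexity step that your proposal does not supply; the tuple-level application is precisely what avoids this.
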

\addtocounter{proposition}{-1}
\endgroup
\begin{proof}
To obtain the desired result, it is first necessary to identify the target distributions used at the encoder and decoder in the coding scheme from \cref{sec:coding_scheme}, then match these to $q_{Y \mid A}$ and $p_{X \mid Z}$ in \cref{thm:conditional_lm_lemma}.
Doing this, we have $q_{Y \mid A}(i \mid a) = p_{B \mid A}(B_i \mid a)$ and $p_{X \mid Z}(i \mid t, \ell) = p_{B \mid Z}(B_i \mid t, \ell)$.
Recall that we defined $Z_k = (T_k, \ell_j)$ to encapsulate the side information and message available to decoder $k$ when the selected index is $Y = j$.
We also defined $B_i = (i, \ell_i)$.
Given $T_k = t_k$, each $X^{(k)}$ is then sampled using the target distribution $p_{X \mid Z}(\cdot \mid t_k, \ell_j)$, and our sampling process generates $X^{(1)}, \ldots, X^{(K)}$.
Applying \cref{thm:conditional_lm_lemma}, we get
\begin{align*}
    &\Pr[Y \in \{ X^{(1)}, \ldots, X^{(K)} \} \mid Y = j, \; A = a, \; Z_1^K = \{ (t_k, \ell_j) \}_{k=1}^{K}] \\
    &\qquad \geq \sum_{k=1}^{K} \left( K + \frac{q_{Y \mid A}(j \mid a)}{p_{X \mid Z}(j \mid t_k, \ell_j)} \right)^{-1} \\
    &\qquad = \sum_{k=1}^{K} \left( K + \frac{p_{B \mid A}(j, \ell_j \mid a)}{p_{B \mid Z}(j, \ell_j \mid t_k, \ell_j)} \right)^{-1} \\
    &\qquad = \sum_{k=1}^{K} \left( K + \frac{p_{W \mid A}(j \mid a) / L_{\mathrm{max}}}{p_{W \mid T}(j \mid t_k)} \right)^{-1} \\
    &\qquad = \sum_{k=1}^{K} \bigl( K + 2^{i_{W, A \mid T}(j; a \mid t_k)} / L_{\mathrm{max}} \bigr)^{-1}
\end{align*}
where $i_{W, A \mid T}(j; a \mid t_k) = \log(p_{W \mid A}(j \mid a) / p_{W \mid T}(j \mid t_k))$ is the conditional information density.
After removing the conditioning, we get
\begin{align*}
    \Pr[Y \in \{ X^{(1)}, \ldots, X^{(K)} \}] &\geq \E_{A, W, T} \left[ \sum_{k=1}^{K}( K + 2^{i(W; A \mid T)} / L_{\mathrm{max}})^{-1} \right] \\
    &= K \E_{A, W, T} \bigl[ (K + 2^{i(W; A \mid T)} / L_{\mathrm{max}})^{-1} \bigr] \\
    &= \E_{A, W, T} \left[ \left( 1 + \frac{2^{i(W; A \mid T)}}{KL_{\mathrm{max}}} \right)^{-1} \right].
\end{align*}
By taking the complement we finally get a bound on the error probability,
\[
    \Pr[Y \notin \{ X^{(1)}, \ldots, X^{(K)} \}] \leq 1 - \E_{A, W, T} \left[ \left( 1 + \frac{2^{i(W; A \mid T)}}{KL_{\mathrm{max}}} \right)^{-1} \right].
\]
\end{proof}

\subsection{Proof of remark \ref{remark:lml_tightness}}\label{sec:lml_tightness_proof}
\paragraph{Degenerate proposal distribution.}
First consider the case where the distribution $p_X$ has all its mass on one element.
Without loss of generality, let us assume $p_1 = 1$ and $p_j = 0$ for all $j \neq 1$.
On the left-hand side of \eqref{eqn:thm1_unconditional}, since $X^{(1)} = X^{(2)} = \cdots = X^{(K)} = 1$ for every outcome in the sample space, we get
\[
  \Pr[Y \in \{ X^{(1)}, \ldots, X^{(K)} \}] = \Pr[Y = 1].
\]
Now evaluate the right-hand side of \eqref{eqn:thm1_unconditional}.
Since $q_i / q_1 \geq p_i / p_1$ for all $1 \leq i \leq N$, the $j = 1$ term in the outer sum evaluates to
\[
  \frac{K}{\sum_{i=1}^{N}[\max\{ q_i / q_1, p_i / p_1 \} + (K - 1) q_i / q_1]} = \frac{K}{\sum_{i=1}^{N}[q_i / q_1 + (K - 1) q_i / q_1]} = \frac{K}{K / q_1} = q_1.
\]
We will evaluate the remaining terms more carefully by taking limits.
For $j \neq 1$,
\begin{align*}
  &\lim_{p_2,\ldots,p_N \to 0} \frac{K}{\sum_{i=1}^{N}[\max\{ q_i / q_j, p_i / p_j \} + (K - 1) q_i / q_j]} \\
  &\qquad = \lim_{p_2,\ldots,p_N \to 0} \frac{K p_j}{\sum_{i=1}^{N}[\max\{ q_i p_j / q_j, p_i \} + (K - 1) q_i p_j / q_j]} \\
  &\qquad = \frac{\lim_{p_2,\ldots,p_N \to 0} [K p_j]}{\lim_{p_2,\ldots,p_N \to 0}  \sum_{i=1}^{N}[\max\{ q_i p_j / q_j, p_i \} + (K - 1) q_i p_j / q_j]}.
\end{align*}
We can now directly evaluate the numerator and the denominator separately.
The numerator becomes zero, and as for the denominator we get
\begin{align*}
  &\lim_{p_2,\ldots,p_N \to 0} \sum_{i=1}^{N}\left[ \max\left\{ \frac{q_i p_j}{q_j}, p_i \right\} + (K - 1) \frac{q_i p_j}{q_j} \right] \\
  &\qquad = (K - 1) \frac{p_j}{q_j} + \max\left\{ \frac{q_1 p_j}{q_j}, p_1 \right\} + \sum_{i=2}^{N} \max\left\{ \frac{q_i p_j}{q_j}, p_i \right\} \bigg\rvert_{p_2,\ldots,p_N = 0} \\
  &\qquad = p_1 = 1.
\end{align*}
Therefore, $j \neq 1$ terms are all zero.
The right-hand side of \eqref{eqn:thm1_unconditional} becomes $q_1 = \Pr[Y = 1]$, verifying that the bound is exact for this class of distributions.

\paragraph{The case of $K = 1$.}
Next, we will work out the matching probability for $K = 1$ explicitly as a special case to confirm that our bound recovers the exact expression.
When $K = 1$, the matching probability is
\[
  \Pr[Y = X^{(1)}] = \sum_{j=1}^{N} \Pr[Y = j, \: X^{(1)} = j].
\]
For convenience, we will drop the superscript on $X^{(1)}$ and also on the $S_i^{(1)}$'s during the analysis.
With $K = 1$, the GLS procedure can therefore be written more simply as
\begin{enumerate}
  \item Select $Y = \argmin_{1 \leq i \leq N} S_i / q_i$ to generate a sample from $q_Y$.
  \item Select $X = \argmin_{1 \leq i \leq N} S_i / p_i$ to generate a sample from $p_X$.
\end{enumerate}
Without loss of generality assume $j = 1$.
We can then focus on finding the probability of the event
\begin{align*}
  &Y = 1 \text{ and } X = 1 \\
  &\qquad\implies \frac{S_1}{q_1} \leq \min_{2 \leq i \leq N} \frac{S_i}{q_i} \text{ and } \frac{S_1}{p_1} \leq \min_{2 \leq i \leq N} \frac{S_i}{p_i} \\
  &\qquad\implies S_1 \leq \min_{2 \leq i \leq N} \frac{S_i}{\max\{ q_i / q_1, p_i / p_1 \}}.
\end{align*}
The RHS is an exponential random variable independent of the LHS.
If its parameter is $\lambda$, then we get
\[
  1 + \lambda = \sum_{i=1}^{N} \max\left\{ \frac{q_i}{q_1}, \frac{p_i}{p_1} \right\}.
\]
Therefore, by the properties of independent exponential random variables,
\[
  \Pr[Y = 1, \: X = 1] = \frac{1}{\sum_{i=1}^{N} \max\{ q_i / q_1, p_i / p_1 \}}.
\]
Therefore after generalizing to arbitrary $j$ we get
\[
  \Pr[Y = X] = \sum_{j=1}^{N} \frac{1}{\sum_{i=1}^{N} \max\{ q_i / q_j, p_i / p_j \}}.
\]
This is exactly the RHS of \eqref{eqn:thm1_unconditional} evaluated for $K = 1$.

\paragraph{Identical draft and proposal distributions.}
Next, we examine the case when $p_X$ and $q_Y$ are identical, i.e.\ $p_i = q_i$ for all $1 \leq i \leq N$.
From our proof of \cref{prop:distribution} in \cref{sec:distribution_proof} recall that
\[
  Y = j \implies \frac{S_j^\ast}{q_j} \leq \frac{S_i^\ast}{q_i} \ \forall \ i \neq j
\]
where $S^\ast_j = \min_{1 \leq k \leq K} S_j^{(k)}$.
Therefore, choosing $Y = j$ implies that there exists some $k$ such that
\[
  \frac{S_j^{(k)}}{q_j} \leq \frac{S_i^{(l)}}{q_i} \ \forall \ (i,l) \neq (j,k) \implies \frac{S_j^{(k)}}{q_j} \leq \frac{S_i^{(k)}}{q_i} \ \forall \ i \neq j.
\]
But, since $p_i = q_i$ for all $1 \leq i \leq N$,
\[
  \frac{S_j^{(k)}}{p_j} \leq \frac{S_i^{(k)}}{p_i} \ \forall \ i \neq j \implies X^{(k)} = j.
\]
In summary, if $Y = j$ then there exists some $k$ such that $X^{(k)} = j$ also, and hence $\Pr[Y \in \{ X^{(1)}, \ldots, X^{(K)} \}] = 1$.
And under our assumption that $p_i = q_i$ for all $1 \leq i \leq N$, the right-hand side of \eqref{eqn:thm1_unconditional} simplifies to
\begin{align*}
  \sum_{j=1}^{N} \frac{K}{\sum_{i=1}^{N}[\max\{ q_i / q_j, p_i / p_j \} + (K - 1) q_i / q_j]} &= \sum_{j=1}^{N} \frac{K}{\sum_{i=1}^{N}[q_i / q_j + (K - 1) q_i / q_j]} \\
  &= \sum_{i=1}^{N} q_j = 1.
\end{align*}

\section{Connection to the notion of drafter invariance from \citet{daliri25}}\label{sec:strong_invariant}
In \citet{daliri25}, an intuitive notion of drafter invariance is proposed for single-draft speculative decoding with the following interpretation: given fixed random numbers, the output of the speculative decoding algorithm is a function of the context and target model weights only.
Our notion as stated in \cref{def:weak_drafter_invariance} is somewhat weaker, since we allow the output to depend also on the draft sequences, yet we still require that a given set of draft sequences always produce the same conditional output distribution.
To formalize the notion of \citet{daliri25} and extend it to the multi-draft case, we propose the following more restrictive definition, which we call \emph{strong drafter invariance}.
\begin{definition}[Strong drafter invariance]\label{def:strong_drafter_invariance}
    Let $\mathcal{R}$ be the source of randomness used to draw samples, e.g.\ the output of a random number generator.
    A multi-draft speculative decoding algorithm is strongly drafter invariant if, for all $1 \leq j \leq \tau$,
    \[
        \Pr[Y_{1:j} = y_{1:j} \mid \mathcal{R}, \; \V{c}, \; X_{1:L}^{(1)} = x_{1:L}^{(1)}, \ldots, X_{1:L}^{(K)} = x_{1:L}^{(K)}] = \Pr[Y_{1:j} = y_{1:j} \mid \mathcal{R}, \; \V{c}].
    \]
\end{definition}
Unfortunately, satisfying \cref{def:strong_drafter_invariance} incurs a performance penalty in multi-draft implementations, which can be seen empirically from the extended results in \cref{sec:speculative_decoding_details}.
To see why, note that the choice of $Y_j$ at each step $j$ in \cref{alg:multi_draft_invariant} depends on the current set of active drafts $\mathcal{S}$, which itself is a function of the preceding draft tokens.
To completely remove any dependence on the draft sequences as required by \cref{def:strong_drafter_invariance}, we would need to keep $\mathcal{S}$ fixed throughout the procedure, wastefully coupling the target model's output to draft tokens that have already been rejected.
Regardless, we now show how our method as described in \cref{alg:multi_draft_invariant} can be modified to support strong drafter invariance by way of the following proposition.

\begin{proposition}\label{prop:strong_drafter_invariance}
    If the minimum is taken over all $k \in \{ 1, \ldots, K \}$ in lines 9 and 13 of \cref{alg:multi_draft_invariant} instead of over $k \in \mathcal{S}$, strong drafter invariance holds in the sense of \cref{def:strong_drafter_invariance}.
\end{proposition}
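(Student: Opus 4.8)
The plan is to argue that, once lines 9 and 13 of \cref{alg:multi_draft_invariant} take the minimum over all $k \in \{1, \ldots, K\}$ rather than over $k \in \mathcal{S}$, the whole sequence of tokens $Y_1, \ldots, Y_{L+1}$ selected by the algorithm becomes a deterministic function of the shared randomness $\mathcal{R} = \{ U_i^{(j,k)} \}_{i,j,k}$ and the context $\V{c}$ only, carrying no dependence on the draft sequences $X_{1:L}^{(1)}, \ldots, X_{1:L}^{(K)}$ (and hence none on the draft models that produced them). Granting this, both sides of the identity in \cref{def:strong_drafter_invariance} reduce, for each $1 \leq j \leq \tau$, to the same indicator of the event that this deterministic map sends $(\mathcal{R}, \V{c})$ to $y_{1:j}$, which is precisely strong drafter invariance; the cut-off $\tau$ still depends on the drafts through the unchanged rejection bookkeeping in lines 11--12 and 14, but it only controls how many of the (drafter-independent) tokens $Y_j$ are emitted.

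Writing $S_i^{(j,k)} = -\ln U_i^{(j,k)} \sim \operatorname{Exp}(1)$, I would prove the deterministic-function claim by induction on $j$, reusing the fact --- established in the proof of \cref{prop:sequence_level_correctness} --- that every draft still active before step $j$ satisfies $X_{1:(j-1)}^{(k)} = Y_{1:(j-1)}$. For the base case $j = 1$ the distributions $q_i^{(1,k)} = \mathcal{M}_b(i \mid \V{c})$ coincide for all $k$, so
\[
    Y_1 = \adjustlimits\argmin_{1 \leq i \leq N} \min_{1 \leq k \leq K} \frac{S_i^{(1,k)}}{\mathcal{M}_b(i \mid \V{c})} = \argmin_{1 \leq i \leq N} \frac{\min_{1 \leq k \leq K} S_i^{(1,k)}}{\mathcal{M}_b(i \mid \V{c})},
\]
which is determined by $(\mathcal{R}, \V{c})$ alone. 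For the inductive step, assume $Y_{1:(j-1)}$ is a deterministic function of $(\mathcal{R}, \V{c})$ and $j \leq \tau$, so at least one draft $k^\ast$ is still active before step $j$ (otherwise the procedure would already have returned). Since $X_{1:(j-1)}^{(k^\ast)} = Y_{1:(j-1)}$, the coupled step at position $j$ is run against the continuation distribution $\mathcal{M}_b(\cdot \mid Y_{1:(j-1)}, \V{c})$ --- which is exactly the value of $q_i^{(j,k)}$ for any draft $k$ consistent with $Y_{1:(j-1)}$ --- and, taking the minimum over all $k$,
\[
    Y_j = \argmin_{1 \leq i \leq N} \frac{\min_{1 \leq k \leq K} S_i^{(j,k)}}{\mathcal{M}_b(i \mid Y_{1:(j-1)}, \V{c})},
\]
a deterministic function of $\mathcal{R}$, $\V{c}$ and $Y_{1:(j-1)}$, hence of $(\mathcal{R}, \V{c})$ by the induction hypothesis.

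The step I expect to need the most care is exactly this last one: in \cref{alg:multi_draft_invariant} the minimum ranges only over the active set $\mathcal{S}$, so enlarging it to all of $\{1, \ldots, K\}$ must not reintroduce a dependence on the draft tokens through the per-draft distributions $q_i^{(j,k)}$. The resolution is that all drafts still consistent with the accepted prefix share the continuation context $Y_{1:(j-1)}$ (by the invariant from \cref{prop:sequence_level_correctness}), and this common distribution $\mathcal{M}_b(\cdot \mid Y_{1:(j-1)}, \V{c})$ is precisely the one a GLS step should be coupled against at position $j$ --- so \cref{prop:distribution}, and therefore sequence-level correctness, is left intact --- while enlarging the minimum to the remaining drafts only contributes extra shared exponentials, not draft-dependent quantities. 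Having closed the induction, I would finish by observing that for every $1 \leq j \leq \tau$ both $\Pr[Y_{1:j} = y_{1:j} \mid \mathcal{R}, \V{c}, X_{1:L}^{(1)} = x_{1:L}^{(1)}, \ldots, X_{1:L}^{(K)} = x_{1:L}^{(K)}]$ and $\Pr[Y_{1:j} = y_{1:j} \mid \mathcal{R}, \V{c}]$ equal the indicator that the $(\mathcal{R}, \V{c})$-determined value of $Y_{1:j}$ equals $y_{1:j}$, which is the statement of \cref{def:strong_drafter_invariance}.
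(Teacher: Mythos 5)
Your proposal is correct and follows essentially the same route as the paper's: in both arguments the key observation is that, after the modification, the selection rule for each token reads $Y_{j+1} = \argmin_{1 \leq i \leq N} \min_{1 \leq k \leq K} S_i^{(j+1,k)} / \mathcal{M}_b(i \mid Y_{1:j}, \V{c})$, so it involves only $\mathcal{R}$, $\V{c}$ and the previously accepted tokens and never the draft tokens (under the same reading of the modified lines 9 and 13 that the paper itself adopts). The paper packages the induction as equality of conditional distributions combined with the chain rule, whereas you phrase it as $Y_{1:j}$ being a deterministic function of $(\mathcal{R}, \V{c})$ so that both conditional probabilities collapse to the same indicator --- a cosmetic rather than substantive difference.
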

\begin{proof}
We extend the proof of conditional drafter invariance set out in \cref{sec:sequence_level_proof}.
There, we showed that
\[
    \Pr[Y_1 = y_1 \mid \mathcal{R}, \; \V{c}, \; \{ X_{1:L}^{(k)} \}_{k=1}^{K} = \{ x_{1:L}^{(k)} \}_{k=1}^{K}] = \Pr[Y_1 = y_1 \mid \mathcal{R}, \; \V{c}]
\]
which is enough to prove strong drafter invariance for $Y_1$.
As a result, we only need to modify the inductive step of the proof.
Take some $1 \leq j < \tau$.
With the modification that the minimum is taken over all $k \in \{ 1, \ldots, K \}$ instead of over $k \in \mathcal{S}$ in lines 9 and 13 of \cref{alg:multi_draft_invariant}, $Y_{j+1}$ is selected as
\[
    Y_{j+1} = \adjustlimits\argmin_{1 \leq i \leq N} \min_{1 \leq k \leq K} \frac{S_i^{(j+1, k)}}{\mathcal{M}_b(i \mid Y_{1:j}, \V{c})}.
\]
From this, given $\mathcal{R}$, $\V{c}$ and $Y_{1:j}$, the draft tokens are not involved, either directly or indirectly, in the choice of $Y_{j+1}$.
More precisely,
\begin{multline*}
    \Pr[Y_{j+1} = y_{j+1} \mid Y_{1:j} = y_{1:j}, \;  \mathcal{R}, \; \V{c}, \; \{ X_{1:L}(\mathcal{M}_s^{(k)}) \}_{k=1}^{K} = \{ x_{1:L}^{(k)} \}_{k=1}^{K}] \\
    = \Pr[Y_{j+1} = y_{j+1} \mid Y_{1:j} = y_{1:j}, \;  \mathcal{R}, \; \V{c}].
\end{multline*}
Now assume that $Y_{1:j}$ satisfies strong drafter invariance.
As a result,
\begin{align*}
    &\Pr[Y_{1:(j+1)} = y_{1:(j+1)} \mid \mathcal{R}, \; \V{c}, \; \{ X_{1:L}(\mathcal{M}_s^{(k)}) \}_{k=1}^{K} = \{ x_{1:L}^{(k)} \}_{k=1}^{K}] \\
    &\qquad= \Pr[Y_{j+1} = y_{j+1} \mid Y_{1:j} = y_{1:j}, \;  \mathcal{R}, \; \V{c}, \; \{ X_{1:L}(\mathcal{M}_s^{(k)}) \}_{k=1}^{K} = \{ x_{1:L}^{(k)} \}_{k=1}^{K}] \\
    &\qquad\qquad\times \Pr[Y_{1:j} = y_{1:j} \mid \mathcal{R}, \; \V{c}, \; \{ X_{1:L}(\mathcal{M}_s^{(k)}) \}_{k=1}^{K} = \{ x_{1:L}^{(k)} \}_{k=1}^{K}] \\
    &\qquad= \Pr[Y_{j+1} = y_{j+1} \mid Y_{1:j} = y_{1:j}, \;  \mathcal{R}, \; \V{c}] \Pr[Y_{1:j} = y_{1:j} \mid \mathcal{R}, \; \V{c}] \\
    &\qquad= \Pr[Y_{1:(j+1)} = y_{1:(j+1)} \mid \mathcal{R}, \; \V{c}]
\end{align*}
Therefore, strong drafter invariance holds for all sequences $Y_{1:j}$, where $1 \leq j \leq \tau$.
\end{proof}

We can also obtain a lower bound on the token-level acceptance probability of the strongly drafter-invariant scheme.
Assume the drafts are i.i.d.\ from the same model $\mathcal{M}_s$, thus using the same setting as \cref{prop:acceptance_probability}. 
Reexamining the steps in the proof of \cref{thm:lm_lemma} found in \cref{sec:matching_probability_proof}, we see that if the number of active drafts at the current step is $J \leq K$, one of these $J$ candidates must match the target for a token to be accepted.
On the other hand, rejection now occurs if the target matches any of the remaining $K - J$ drafts, or none at all.
Specifically, assuming without loss of generality that the active drafts are $X^{(1)}, \ldots, X^{(J)}$, \eqref{eqn:thm1_intermediate_inequality_1} becomes
\[
    \Pr[Y \in \{ X^{(1)}, \ldots, X^{(J)} \}] \geq \sum_{j = 1}^{N} J \Pr[\tilde{Y} = j, \; X^{(1)} = j].
\]
However, the augmented target distribution described in \eqref{eqn:thm1_augmented_dist} remains unchanged, because all $K$ drafts remain coupled through the common randomness.
This includes the $K-J$ drafts that have already been rejected during previous steps.
Following the same analysis as in \cref{sec:matching_probability_proof} then shows that the probability of accepting at least one token at the current step with context $\V{c}$ is bounded below as
\[
    \Pr[\mathrm{accept}] \geq \sum_{j=1}^{N} \frac{J}{\sum_{i=1}^{N} [\max\{ q_i / q_j, p_i / p_j \} + (K - 1) q_i / q_j]}.
\]
where $p_X \coloneq \mathcal{M}_s(\cdot \mid \V{c})$ and $q_Y \coloneq \mathcal{M}_b(\cdot \mid \V{c})$.
Conversely, with $J$ active drafts \cref{prop:acceptance_probability} gives
\[
    \Pr[\mathrm{accept}] \geq \sum_{j=1}^{N} \frac{J}{\sum_{i=1}^{N} [\max\{ q_i / q_j, p_i / p_j \} + (J - 1) q_i / q_j]}.
\]
for the original conditionally drafter-invariant algorithm.
Since $J \leq K$, we can see that requiring strong drafter invariance reduces the lower bound, providing some theoretical insight into the poor performance observed in \cref{sec:speculative_decoding_details} when using this scheme for LLM inference.

\section{Extension to continuous distributions via importance sampling}\label{sec:importance_sampling}
It is not possible to enumerate the entire sample space when dealing with continuous distributions.
At first glance, this appears to preclude the use of GLS in such settings.
However, we can obtain approximate samples through importance sampling as described in \citet{phan24}.
For clarity, we skip the general case and instead proceed directly to the source coding application set out in \cref{sec:coding_scheme} of the main paper, removing the assumption that $W$ is discrete.
To start, we choose some sufficiently large $N$ and generate $N$ i.i.d.\ samples of $W$ following $p_W$.
Let these samples be $U_1^N \coloneq \{ U_i \}_{i=1}^{N}$.
Recall that $p_W$ is the marginal target distribution for the decoder's output.
Again, let $\ell_1, \ldots, \ell_N$ be uniform random integers selected from $\{ 1, \ldots, L_{\mathrm{max}} \}$.
We then have the tuples $B_i = (U_i, \ell_i)$ forming part of the common randomness, the difference from \cref{sec:coding_scheme} being that the $U_i$'s are now also random, whereas in the discrete case they were fixed to enumerate the whole sample space.

At the encoder, we want to sample $Y$ approximately from $p_{B \mid A}$ given the observation $A = a$.
Therefore, we introduce the \emph{unnormalized} importance weights
\[
    \tilde{\lambda}_{q,i}(U_i) = \frac{p_{B \mid A}(B_i \mid a)}{p_B(B_i)} = \frac{p_{W \mid A}(U_i \mid a)}{p_W(U_i)}.
\]
Similarly, the decoders should use the target distribution $p_{B \mid Z}$, where decoder $k$ has access to $Z_k = (T_k, \ell_j)$.
Here, $j$ is the selected index at the encoder and $\ell_j$ is the associated random integer.
Given $T_k = t_k$, we define
\[
    \tilde{\lambda}_{p,i}^{(k)}(U_i) = \frac{p_{B \mid Z}(B_i \mid t_k, \ell_j)}{p_B(B_i)} = \frac{p_{W \mid T}(U_i \mid t_k) \mathds{1} \{ \ell_i = \ell_j \}}{p_W(U_i) / L_{\mathrm{max}}}.
\]
The final normalized importance weights are
\[
    \lambda_{q,i}(U_1^N) = \frac{\tilde{\lambda}_{q,i}(U_i)}{\sum_{j=1}^{N} \tilde{\lambda}_{q,j}(U_j)} \text{ and } \lambda_{p,i}^{(k)}(U_1^N) = \frac{\tilde{\lambda}_{p,i}^{(k)}(U_i)}{\sum_{j=1}^{N} \tilde{\lambda}_{p,j}^{(k)}(U_j)}.
\]
The index selection at the encoder and decoder simply proceeds as
\[
    Y = \adjustlimits\argmin_{1 \leq i \leq N} \min_{1 \leq k \leq K} \frac{S_i^{(k)}}{\lambda_{q,i}(U_1^N)} \text{ and } X^{(k)} = \argmin_{1 \leq i \leq N} \frac{S_i^{(k)}}{\lambda_{p,i}^{(k)}(U_1^N)}.
\]
The output of decoder $k$ is then taken to be $W_k = U_{X^{(k)}}$.
Conditional on $U_1^N$, the bound on the index matching probability stated in \cref{prop:coding_error_probability} holds.
However, we need a bound that is conditional only on $U_j$.
To start, we write down the bound when conditioning on $U_1^N$, drawing from the development in \cref{sec:coding_error_probability_proof}.
\begin{align*}
    &\Pr[Y \in \{ X^{(1)}, \ldots, X^{(K)} \} \mid Y = j, \; A = a, \; Z_1^K = z_1^K, \; U_1^N = u_1^N] \\
    &\qquad \geq \sum_{k=1}^{K} \left( K + \frac{\lambda_{q,j}(u_1^N)}{\lambda_{p,j}^{(k)}(u_1^N)} \right)^{-1} \\
    &\qquad = \sum_{k=1}^{K} \left( K + \frac{\tilde{\lambda}_{q,j}(u_j)}{\tilde{\lambda}_{p,j}^{(k)}(u_j)} \frac{\sum_{i=1}^{N} \tilde{\lambda}_{p,i}^{(k)}(u_1^N)}{\sum_{i=1}^{N} \tilde{\lambda}_{q,i}(u_1^N)} \right)^{-1}.
\end{align*}
For simplicity, let us assume for now without loss of generality that $j = 1$.
We want to remove the conditioning on $U_2, \ldots, U_N$, which can be done by taking the expectation to get
\begin{align*}
    &\Pr[Y \in \{ X^{(1)}, \ldots, X^{(K)} \} \mid Y = 1, \; A = a, \; Z_1^K = z_1^K, \; U_1 = u_1] \\
    &\quad \geq \E_{U_2^N} \left[ \sum_{k=1}^{K} \left( K + \frac{\tilde{\lambda}_{q,1}(u_1)}{\tilde{\lambda}_{p,1}^{(k)}(u_1)} \frac{\sum_{i=1}^{N} \tilde{\lambda}_{p,i}^{(k)}(U_1^N)}{\sum_{i=1}^{N} \tilde{\lambda}_{q,i}(U_1^N)} \right)^{-1} \; \middle| \; Y = 1, \; A = a, \; Z_1^K = z_1^K, \; U_1 = u_1 \right] \\
    &\quad = \sum_{k=1}^{K} \E_{U_2^N} \left[ \left( K + \frac{\tilde{\lambda}_{q,1}(u_1)}{\tilde{\lambda}_{p,1}^{(k)}(u_1)} \frac{\sum_{i=1}^{N} \tilde{\lambda}_{p,i}^{(k)}(U_1^N)}{\sum_{i=1}^{N} \tilde{\lambda}_{q,i}(U_1^N)} \right)^{-1} \; \middle| \; Y = 1, \; A = a, \; Z_1^K = z_1^K, \; U_1 = u_1 \right] \\
    &\quad \geq \sum_{k=1}^{K} \left( K + \frac{\tilde{\lambda}_{q,1}(u_1)}{\tilde{\lambda}_{p,1}^{(k)}(u_1)} \E_{U_2^N} \left[ \frac{\sum_{i=1}^{N} \tilde{\lambda}_{p,i}^{(k)}(U_1^N)}{\sum_{i=1}^{N} \tilde{\lambda}_{q,i}(U_1^N)} \; \middle| \; Y = 1, \; A = a, \; Z_1^K = z_1^K, \; U_1 = u_1 \right] \right)^{-1}
\end{align*}
where the last step uses Jensen's inequality.
To simplify the inner expectation, we use the following lemma, stated here without proof, which extracted from the proof of theorem 3 in \citet{phan24}.
\begin{lemma}[{{\cite[p.~23]{phan24}}}]\label{lm:a2_lemma}
    We have that
    \begin{equation}\label{eqn:a2_lemma}
        \E_{U_2^N} \left[ \frac{\sum_{i=1}^{N} \tilde{\lambda}_{p,i}^{(k)}(U_1^N)}{\sum_{i=1}^{N} \tilde{\lambda}_{q,i}(U_1^N)} \; \middle| \; Y = 1, \; A = a, \; Z_1^K = z_1^K, \; U_1 = u_1 \right] \leq \mu_k(N, u_1)
    \end{equation}
    where
    \[
        \mu_k(N, u_1) = \frac{\tilde{\lambda}_{p,1}^{(k)}(u_1) + \bar{N}}{\tilde{\lambda}_{q,1}(u_1) + \bar{N}} + \frac{K(\bar{N})}{\bar{N}} \left( 1 + \frac{\tilde{\lambda}_{q,1}(u_1)}{\bar{N}}\right) + \frac{2 \omega L(\bar{N})}{\bar{N}} \left( 1 + \frac{\tilde{\lambda}_{q,1}(u_1)}{\bar{N}} \right).
    \]
    In the equation above, we define $\bar{N} = N - 1$ and
    \begin{align*}
        K(\bar{N}) &= \frac{4(\omega - 1)}{(1 + \tilde{\lambda}_{q,1}(u_1) / \bar{N})^2} \left( 1 + \frac{(N+1) \omega}{N} \right) \\
        &\qquad \times \sqrt{2 + 4\left( \frac{\tilde{\lambda}_{p,1}^{(k)}(u_1) + \bar{N}}{2 \tilde{\lambda}_{q,1}(u_1) + \bar{N}} \right)^2 \left[ \left( 1 + \frac{(N+1) \omega}{\bar{N}} \right)^2 + \frac{\omega - 1}{\bar{N}} \right]} \\
        L(\bar{N}) &= \sqrt{\omega - 1} \sqrt{d_5(p_W \, \Vert \, p_{W \mid A}(\cdot \mid a)) - d_3(p_W \, \Vert \, p_{W \mid A}(\cdot \mid a))^2} \\
        &\qquad + (\omega - 1) d_3(p_W \, \Vert \, p_{W \mid A}(\cdot \mid a))
    \end{align*}
    where, for all $m \geq 1$,
    \[
        d_{m+1}(p_W, p_{W \mid A}(\cdot \mid a)) = \E_{W \sim p_W} \left[ \frac{p_W(W)^m}{p_{W \mid A}(W \mid a)^m} \right]
    \]
    and $\omega$ is chosen so that, for all $a$ and $w$, $p_{W \mid A}(w \mid a) / p_W(w) \leq \omega$.
\end{lemma}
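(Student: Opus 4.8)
The plan is to reconstruct the portion of the proof of Theorem~3 in \citet{phan24} that yields \eqref{eqn:a2_lemma}, working throughout with $j = 1$ as in the statement. Abbreviate $S_q = \sum_{i=2}^{N} \tilde\lambda_{q,i}(U_i)$ and $S_p = \sum_{i=2}^{N} \tilde\lambda_{p,i}^{(k)}(U_i)$, so that the random variable whose conditional expectation we must bound is $R = (\tilde\lambda_{p,1}^{(k)}(u_1) + S_p)/(\tilde\lambda_{q,1}(u_1) + S_q)$, with $U_2, \ldots, U_N$ i.i.d.\ from $p_W$ and the auxiliary integers $\ell_2, \ldots, \ell_N$ drawn uniformly as part of the common randomness.

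First I would remove the conditioning on $\{Y = 1\}$ by Bayes' rule. Since the exponential clocks $\{S_i^{(k)}\}$ are independent of the samples and of $A$, a Gumbel-max computation gives $\Pr[Y = 1 \mid U_1^N = u_1^N,\, A = a] = \tilde\lambda_{q,1}(u_1)/(\tilde\lambda_{q,1}(u_1) + S_q)$, and conditioning additionally on $Z_1^K$ and $A$ does not further constrain $U_2^N$; writing $\mathcal{E}$ for the event $\{Y = 1,\, A = a,\, Z_1^K = z_1^K,\, U_1 = u_1\}$, we then obtain
\[
    \E[R \mid \mathcal{E}] = \frac{\E\bigl[(\tilde\lambda_{p,1}^{(k)}(u_1) + S_p)\,(\tilde\lambda_{q,1}(u_1) + S_q)^{-2}\bigr]}{\E\bigl[(\tilde\lambda_{q,1}(u_1) + S_q)^{-1}\bigr]},
\]
with all expectations on the right over i.i.d.\ $U_2, \ldots, U_N \sim p_W$ and the common factor $\tilde\lambda_{q,1}(u_1)$ cancelled. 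For the denominator, convexity of $t \mapsto (\tilde\lambda_{q,1}(u_1) + t)^{-1}$ and $\E[S_q] = \bar N$ (each $\tilde\lambda_{q,i}$ has mean $1$) give, by Jensen, the lower bound $(\tilde\lambda_{q,1}(u_1) + \bar N)^{-1}$.

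The substantive step is the upper bound on the numerator $\tilde\lambda_{p,1}^{(k)}(u_1)\,\E[(\tilde\lambda_{q,1}(u_1) + S_q)^{-2}] + \E[S_p\,(\tilde\lambda_{q,1}(u_1) + S_q)^{-2}]$. I would expand $(\tilde\lambda_{q,1}(u_1) + t)^{-2}$ to second order about $t = \bar N$ for the first term, controlling the remainder by $\mathrm{Var}(S_q) = \bar N\,\mathrm{Var}(\tilde\lambda_{q})$, and for the second term write $S_p = \bar N + (S_p - \bar N)$, so that the fluctuation piece $\E[(S_p - \bar N)\,(\tilde\lambda_{q,1}(u_1) + S_q)^{-2}]$ is handled by Cauchy--Schwarz against $\mathrm{Var}(S_p)$ and a fourth-moment bound on $(\tilde\lambda_{q,1}(u_1) + S_q)^{-1}$. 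The second- and fourth-moment estimates of the (reciprocal) $q$-weights are precisely what introduce $d_3(p_W \,\Vert\, p_{W\mid A}(\cdot\mid a))$ and $d_5(p_W \,\Vert\, p_{W\mid A}(\cdot\mid a))$, while the indicator $\mathds{1}\{\ell_i = \ell_j\}$ inside the $p$-weights together with the uniform bound $p_{W\mid A}/p_W \le \omega$ produces the $L_{\mathrm{max}}$-dependent factors $K(\bar N)$ and $L(\bar N)$. Dividing the resulting numerator bound by the denominator bound and collecting terms gives exactly $\mu_k(N, u_1)$.

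The main obstacle is that the numerator is the expectation of a function that is not convex in the correlated pair $(S_p, S_q)$ --- the two sums share the samples $U_2, \ldots, U_N$ --- so Jensen cannot be used directly and one must instead carry out a careful perturbation analysis around the common mean $\bar N$, bounding the remainder by second and fourth moments of the importance weights. Keeping every constant sharp enough to reproduce the precise form of $\mu_k$, in particular its joint dependence on $\omega$, $L_{\mathrm{max}}$, and the divergences $d_3$ and $d_5$, is where the bookkeeping becomes delicate, and is the reason the original authors isolated this estimate as a self-contained lemma.
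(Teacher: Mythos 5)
There is an important mismatch of expectations here: the paper does not prove this lemma at all. It is explicitly stated ``without proof,'' imported verbatim from the proof of Theorem~3 in \citet{phan24} (p.~23), and its role in \cref{sec:importance_sampling} is purely that of a black-box estimate for the ratio of importance-weight sums. So there is no in-paper argument to compare yours against; the only question is whether your reconstruction actually establishes the stated bound.

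Your setup is sound and matches the standard IML analysis: removing the conditioning on $\{Y=1\}$ via Bayes' rule using $\Pr[Y=1 \mid U_1^N, A=a] = \tilde\lambda_{q,1}(u_1)/(\tilde\lambda_{q,1}(u_1)+S_q)$ (with the Markov-chain argument ensuring $Z_1^K$ adds no further tilt on $U_2^N$), cancelling the constant $\tilde\lambda_{q,1}(u_1)$, and lower-bounding the denominator $\E[(\tilde\lambda_{q,1}(u_1)+S_q)^{-1}] \geq (\tilde\lambda_{q,1}(u_1)+\bar N)^{-1}$ by Jensen are all correct and in the right direction. The genuine gap is that the substantive half of the lemma --- the upper bound on $\E[(\tilde\lambda_{p,1}^{(k)}(u_1)+S_p)(\tilde\lambda_{q,1}(u_1)+S_q)^{-2}]$ with the \emph{specific} constants $K(\bar N)$ and $L(\bar N)$ --- is only gestured at. You describe a second-order expansion around $\bar N$ plus Cauchy--Schwarz and moment bounds, and then assert that ``collecting terms gives exactly $\mu_k(N,u_1)$,'' but the entire content of the lemma is precisely those explicit expressions: the $(1+(N+1)\omega/N)$ and square-root factors in $K(\bar N)$, the appearance of $d_3$ and $d_5$ (second and fourth moments of the reciprocal weights) in $L(\bar N)$, and the $2\omega L(\bar N)/\bar N$ coefficient. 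None of this is derived, so the inequality \eqref{eqn:a2_lemma} as stated is not proved by your argument. A smaller inaccuracy: you attribute ``$L_{\mathrm{max}}$-dependent factors'' to $K(\bar N)$ and $L(\bar N)$, but as defined they depend on $\omega$, $N$ and the divergences $d_3, d_5$, not on $L_{\mathrm{max}}$; the indicator $\mathds{1}\{\ell_i=\ell_1\}$ enters only through the $p$-weights having mean $1$ after the $L_{\mathrm{max}}$ normalization. If you intend to supply a self-contained proof rather than cite \citet{phan24} as the paper does, you must carry out the perturbation/moment bookkeeping in full; otherwise the honest course is to do what the paper does and quote the result.
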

Using \cref{lm:a2_lemma}, or more specifically \eqref{eqn:a2_lemma}, and generalizing to arbitrary $j$, we see that
\begin{align*}
    &\Pr[Y \in \{ X^{(1)}, \ldots, X^{(K)} \} \mid Y = j, \; A = a, \; Z_1^K = z_1^K, \; U_j = u_j] \\
    &\qquad\geq \sum_{k=1}^{K} \left( K + \mu_k(N, u_j) \frac{\tilde{\lambda}_{q,j}(u_j)}{\tilde{\lambda}_{p,j}^{(k)}(u_j)} \right)^{-1} \\
    &\qquad\geq \sum_{k=1}^{K} \left( K + \hat{\mu}(N, u_j) \frac{\tilde{\lambda}_{q,j}(u_j)}{\tilde{\lambda}_{p,j}^{(k)}(u_j)} \right)^{-1}
\end{align*}
where $\hat{\mu}(N, u_j) \coloneq \max_{1 \leq k \leq K} \mu_k(N, u_j)$.
Using the definitions of $\tilde{\lambda}_{q,j}(u_j)$ and $\tilde{\lambda}_{p,j}^{(k)}(u_j)$,
\begin{align*}
    &\Pr[Y \in \{ X^{(1)}, \ldots, X^{(K)} \} \mid Y = j, \; A = a, \; Z_1^K = z_1^K, \; U_j = u_j] \\
    &\qquad\geq \sum_{k=1}^{K} \left( K + \hat{\mu}(N, u_j) \frac{p_{W \mid A}(u_j \mid a) / L_{\mathrm{max}}}{p_{W \mid T}(u_j \mid t_k)} \right)^{-1} \\
    &\qquad= \sum_{k=1}^{K} \bigl( K + \hat{\mu}(N, u_j) 2^{i_{W, A \mid T}(u_j; a \mid t_k)} / L_{\mathrm{max}} \bigr)^{-1}.
\end{align*}
To proceed and find a counterpart to the coding theorem in \cref{prop:coding_error_probability}, we use the fact that for any $\varepsilon > 0$, there exists some $M_k$ such that $\mu_k(N, u_j) \leq 1 + \varepsilon$ for all $N \geq M_k$ and $1 \leq j \leq N$ \cite[p.~24]{phan24}.
Hence, with $M = \max_{1 \leq k \leq K} M_k$, we see that $\hat{\mu}(N, u_j) \leq 1 + \varepsilon$ for all $N \geq M$ and $1 \leq j \leq N$.
As a result, after removing the conditioning and following the steps used in the proof of \cref{prop:coding_error_probability} in \cref{sec:coding_error_probability_proof}, we get
\[
    \Pr[Y \in \{ X^{(1)}, \ldots, X^{(K)} \}] \geq \E_{A,W,T} \left[ \left(1 + (1 + \varepsilon) \frac{2^{i(W; A \mid T)}}{K L_{\mathrm{max}}}\right)^{-1} \right]
\]
and the associated error probability bound is
\[
    \Pr[Y \notin \{ X^{(1)}, \ldots, X^{(K)} \}] \leq 1 - \E_{A,W,T} \left[ \left(1 + (1 + \varepsilon) \frac{2^{i(W; A \mid T)}}{K L_{\mathrm{max}}}\right)^{-1} \right].
\]

\section{Additional experimental details}\label{sec:experiment_details}
\subsection{Multi-draft speculative decoding}\label{sec:speculative_decoding_details}

\paragraph{Proof of concept on toy distributions.}
As a simple demonstration of our method with arbitrary discrete distributions, we generate $100$ random instances of $p_X$ and $q_Y$ each containing $N = 10$ elements, while the number of proposals is varied between 1 and 20.
Results are shown in \cref{fig:toy_results}.
As well as showing the token-level matching rate achieved by SpecTr \cite{sun23}, SpecInfer \cite{miao24} and our algorithm, we also plot the optimal multi-draft acceptance rate \emph{with} communication, which can be computed via a linear programming approach \cite{sun23}, at least for distributions on small alphabets.
Note that while this calculation provides a useful upper bound, there is currently no multi-draft token selection scheme that can achieve the optimum in practice.  
Despite involving no communication between the drafter and the target, our algorithm is competitive with state-of-the-art methods, especially when the number of drafts is large.

\paragraph{LLM inference.}
Implementations of SpecInfer \cite{miao24}, SpecTr \cite{sun23} and our drafter-invariant schemes can be found in the provided code.
To obtain performance measurements, each speculative decoding configuration is tested on 200 prompts from the GSM8K \cite{cobbe21}, NaturalReasoning \cite{yuan25}, MBPP  \cite{austin21} and DROP \cite{dua19} datasets, and 164 prompts from HumanEval \cite{chen21}.
Our full set of results is given in \cref{tab:full_results_iid,tab:full_results_non_iid}; please note that not all datasets and configurations are reported in the main paper.
We also include results for the strongly drafter-invariant scheme described in \cref{sec:strong_invariant}.
The target model is Qwen 2.5-\SI{7}{B} \cite{yang25} while the drafter is Qwen 2.5-\SI{0.5}{B}, and we use top-K sampling with $K = 50$.
For our experiments with i.i.d.\ drafts in \cref{tab:full_results_iid}, the temperature is 1.0 throughout and the maximum draft length is $L = 4$.
When we use diverse drafts in \cref{tab:full_results_non_iid}, the target temperature is $2.0$, the two draft temperatures are varied and $L = 5$.

As described in the main text, the block efficiency is equal to the average number of tokens accepted during each iteration of the speculative decoding algorithm, while token rates are calculated from wall-clock measurements and reported as percentage speedups relative to single-draft speculative decoding with the same draft length.
We compute the mean across all prompts for each dataset, then repeat the experiments $5$ times with different random seeds.
For a configuration-dataset pair, this gives us five measurements each for the average block efficiency and token rate.
Let these be $\mathrm{BE}_1, \ldots, \mathrm{BE}_5$ and $\mathrm{TR}_1, \ldots, \mathrm{TR}_5$.
As our final result, we report
\[
    \mathrm{BE} = \operatorname{mean}(\mathrm{BE}_1, \ldots, \mathrm{BE}_5), \ \mathrm{TR} = \operatorname{mean}(\mathrm{TR}_1, \ldots, \mathrm{TR}_5)
\]
while the error bars show one standard error of the mean,
\[
    \sigma_{\mathrm{BE}} = \operatorname{std}(\mathrm{BE}_1, \ldots, \mathrm{BE}_5) / \sqrt{5}, \ \sigma_{\mathrm{TR}} = \operatorname{std}(\mathrm{TR}_1, \ldots, \mathrm{TR}_5) / \sqrt{5}
\]
where the $\operatorname{std}$ operator is the usual sample standard deviation formula.
If $x$ is a vector of $M$ independent measurements in $\mathbb{R}$, this is
\[
    \operatorname{std}(x) = \sqrt{\frac{\sum_{i=1}^{M} (x_i - \bar{x})^2}{M-1}}, \ \bar{x} = \operatorname{mean}(x).
\]
On a NVIDIA RTX6000 Ada GPU with \SI{48}{\giga\byte} of memory, each configuration can be tested on a single dataset in around 1 hour.

\begin{figure}[t]
    \centering
    \includegraphics[scale=0.8]{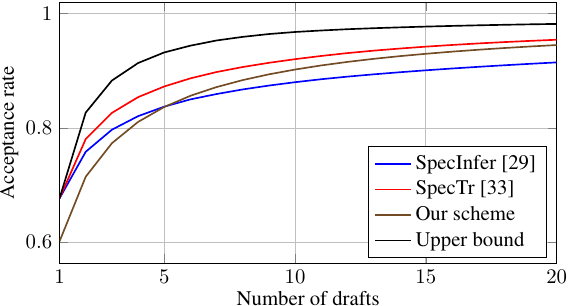}
    \caption{Proof of concept on random toy distributions.}
    \label{fig:toy_results}
\end{figure}

\begin{sidewaystable}[p]
    \centering
    \footnotesize
    \setlength\tabcolsep{5 pt}
    \begin{tabular}{@{}lr
        S[table-alignment-mode=format, table-format=1.2(1), table-text-alignment=center, table-number-alignment=right, uncertainty-mode=separate]
        S[table-alignment-mode=format, table-format=1.2(1), table-text-alignment=center, table-number-alignment=right, uncertainty-mode=separate]
        S[table-alignment-mode=format, table-format=1.2(1), table-text-alignment=center, table-number-alignment=right, uncertainty-mode=separate]
        S[table-alignment-mode=format, table-format=1.2(1), table-text-alignment=center, table-number-alignment=right, uncertainty-mode=separate]
        S[table-alignment-mode=format, table-format=1.2(1), table-text-alignment=center, table-number-alignment=right, uncertainty-mode=separate]
        S[table-alignment-mode=format, table-format=+1.2(1), table-text-alignment=center, table-number-alignment=right, uncertainty-mode=separate]
        S[table-alignment-mode=format, table-format=1.2(1), table-text-alignment=center, table-number-alignment=right, uncertainty-mode=separate]
        S[table-alignment-mode=format, table-format=+1.2(1), table-text-alignment=center, table-number-alignment=right, uncertainty-mode=separate]
        S[table-alignment-mode=format, table-format=1.2(1), table-text-alignment=center, table-number-alignment=right, uncertainty-mode=separate]
        S[table-alignment-mode=format, table-format=+1.2(1), table-text-alignment=center, table-number-alignment=right, uncertainty-mode=separate]
    @{}}
    \toprule
    & & \multicolumn{2}{c}{GSM8K} & \multicolumn{2}{c}{HumanEval} & \multicolumn{2}{c}{NaturalReasoning} & \multicolumn{2}{c}{MBPP} & \multicolumn{2}{c}{DROP} \\
    Strategy & $K$ & {BE} & {TR (\%)} & {BE} & {TR (\%)} & {BE} & {TR (\%)} & {BE} & {TR (\%)} & {BE} & {TR (\%)} \\ 
    \midrule
    SpecInfer \cite{miao24} & 2 & 4.47(1) & 3.75(21) & 4.11(2) & 6.07(53) & 3.75(1) & 6.26(26) & 4.04(1) & 6.51(51) & 3.33(1) & 8.08(61) \\ 
    & 4 & 4.64(1) & 5.20(23) & 4.35(1) & 8.69(27) & 3.99(1) & 10.63(32) & 4.30(1) & 10.92(53) & 3.58(0) & 10.46(19) \\ 
    & 6 & 4.72(0) & 5.39(23) & 4.46(1) & 8.60(92) & 4.12(1) & 12.35(31) & 4.41(1) & 12.65(45) & 3.72(1) & 10.98(41) \\ 
    & 8 & 4.75(0) & 4.56(20) & 4.54(1) & 7.89(85) & 4.19(1) & 12.49(56) & 4.49(1) & 13.81(47) & 3.82(1) & 10.60(24) \\
    \midrule
    SpecTr \cite{sun23} & 2 & 4.46(1) & 2.27(29) & 4.13(1) & 4.80(104) & 3.76(1) & 5.44(54) & 4.04(1) & 5.21(43) & 3.31(1) & 6.35(34) \\ 
    & 4 & 4.66(1) & 4.08(13) & 4.36(1) & 7.39(96) & 4.02(1) & 10.00(33) & 4.32(1) & 10.07(43) & 3.60(2) & 10.03(55) \\ 
    & 6 & 4.74(1) & 4.30(35) & 4.48(1) & 7.70(78) & 4.13(1) & 11.51(43) & 4.43(0) & 11.77(29) & 3.72(1) & 9.97(34) \\ 
    & 8 & 4.78(0) & 3.75(17) & 4.56(1) & 7.09(77) & 4.22(1) & 12.89(96) & 4.50(1) & 12.81(50) & 3.79(1) & 8.82(32) \\ 
    \midrule
    Our & 2 & 4.47(0) & 3.02(28) & 4.08(1) & 4.70(78) & 3.68(2) & 4.52(142) & 4.02(0) & 5.46(27) & 3.27(1) & 5.74(57) \\ 
    scheme & 4 & 4.66(1) & 5.18(26) & 4.37(1) & 8.52(77) & 3.96(1) & 9.86(96) & 4.30(1) & 10.72(28) & 3.56(0) & 9.98(30) \\ 
    & 6 & 4.74(1) & 5.33(25) & 4.47(1) & 8.54(102) & 4.10(1) & 12.14(85) & 4.43(1) & 12.79(46) & 3.73(1) & 10.90(26) \\ 
    & 8 & 4.78(0) & 4.83(24) & 4.55(1) & 8.03(97) & 4.18(0) & 12.75(114) & 4.51(1) & 13.54(41) & 3.82(1) & 10.60(34) \\
    \midrule
    Strongly & 2 & 4.45(1) & 3.02(41) & 4.03(1) & 3.18(87) & 3.65(1) & 3.56(110) & 4.00(2) & 4.29(60) & 3.25(2) & 3.74(24) \\ 
    invariant & 4 & 4.63(0) & 4.45(57) & 4.28(1) & 6.23(76) & 3.91(1) & 8.25(101) & 4.26(1) & 8.71(17) & 3.53(1) & 7.87(44) \\ 
    & 6 & 4.72(0) & 5.23(47) & 4.40(1) & 6.69(55) & 4.02(1) & 9.84(108) & 4.38(1) & 10.85(44) & 3.67(0) & 7.92(66) \\ 
    & 8 & 4.76(1) & 4.49(44) & 4.46(1) & 5.99(69) & 4.09(1) & 10.32(113) & 4.44(1) & 11.46(48) & 3.75(2) & 7.40(99) \\ 
    \midrule
    \citet{daliri25} & 1 & 4.16(1) & 0.63(24) & 3.69(1) & 0.04(30) & 3.32(0) & -2.23(29) & 3.61(1) & -0.69(33) & 2.94(1) & 0.10(46) \\ 
    \bottomrule
\end{tabular}
\vspace{1ex}
\caption{LLM inference with i.i.d.\ drafts; we use $L = 4$. Token rates (TR) are shown as percentage speedups relative to single-draft speculative decoding, which has mean block efficiency (BE) $4.18$, $3.75$, $3.43$, $3.68$, $3.00$ and token rate $58.83$, $53.08$, $48.30$, $52.54$, $42.44$ \si{tokens/\second} on GSM8K, HumanEval, NaturalReasoning, MBPP and DROP respectively.}
\label{tab:full_results_iid}
\end{sidewaystable}

\begin{sidewaystable}[p]
    \centering
    \footnotesize
    \setlength\tabcolsep{3.7 pt}
    \begin{tabular}{@{}lr
        S[table-alignment-mode=format, table-format=1.2(1), table-text-alignment=center, table-number-alignment=right, uncertainty-mode=separate]
        S[table-alignment-mode=format, table-format=+1.2(1), table-text-alignment=center, table-number-alignment=right, uncertainty-mode=separate]
        S[table-alignment-mode=format, table-format=1.2(1), table-text-alignment=center, table-number-alignment=right, uncertainty-mode=separate]
        S[table-alignment-mode=format, table-format=+1.2(1), table-text-alignment=center, table-number-alignment=right, uncertainty-mode=separate]
        S[table-alignment-mode=format, table-format=1.2(1), table-text-alignment=center, table-number-alignment=right, uncertainty-mode=separate]
        S[table-alignment-mode=format, table-format=+2.2(1), table-text-alignment=center, table-number-alignment=right, uncertainty-mode = separate]
        S[table-alignment-mode=format, table-format=1.2(1), table-text-alignment=center, table-number-alignment=right, uncertainty-mode=separate]
        S[table-alignment-mode=format, table-format=+1.2(1), table-text-alignment=center, table-number-alignment=right, uncertainty-mode=separate]
        S[table-alignment-mode=format, table-format=1.2(1), table-text-alignment=center, table-number-alignment=right, uncertainty-mode=separate]
        S[table-alignment-mode=format, table-format=+1.2(1), table-text-alignment=center, table-number-alignment=right, uncertainty-mode = separate]
    @{}}
    \toprule
    & & \multicolumn{2}{c}{GSM8K} & \multicolumn{2}{c}{HumanEval} & \multicolumn{2}{c}{NaturalReasoning} & \multicolumn{2}{c}{MBPP} & \multicolumn{2}{c}{DROP} \\
    Strategy & Temp. & {BE} & {TR (\%)} & {BE} & {TR (\%)} & {BE} & {TR (\%)} & {BE} & {TR (\%)} & {BE} & {TR (\%)} \\ 
    \midrule
    SpecInfer \cite{miao24} & $0.5/1.0$ & 4.26(2) & 0.06(102) & 3.57(2) & -1.96(67) & 3.04(2) & -6.55(61) & 3.66(1) & -1.87(79) & 3.21(21) & 4.22(99) \\ 
    & $1.0/0.5$ & 4.44(3) & 4.57(180) & 3.80(3) & 4.13(160) & 3.29(2) & 1.12(99) & 3.90(2) & 4.77(55) & 3.31(2) & 7.83(108) \\ 
    & $1.5/1.0$ & 4.63(2) & 8.75(170) & 3.99(3) & 9.67(126) & 3.60(2) & 11.28(88) & 4.05(1) & 10.70(59) & 3.31(2) & 9.11(74) \\ 
    & $1.0/1.5$ & 4.57(3) & 7.43(160) & 3.95(2) & 8.19(82) & 3.44(1) & 6.06(37) & 3.98(2) & 8.48(91) & 3.30(2) & 9.15(83) \\
    & $2.0/1.0$ & 4.53(1) & 6.38(143) & 3.88(1) & 6.75(72) & 3.51(2) & 9.12(98) & 3.91(1) & 6.83(63) & 3.19(1) & 5.50(44) \\
    & $1.0/2.0$ & 4.55(2) & 6.53(179) & 3.82(2) & 4.73(122) & 3.42(2) & 5.71(73) & 3.91(1) & 6.89(33) & 3.21(2) & 6.28(79) \\
    & $1.0/1.0$ & 4.51(2) & 6.02(135) & 3.87(2) & 6.32(36) & 3.36(1) & 3.37(44) & 3.95(2) & 5.17(113) & 3.30(2) & 9.36(93) \\
    \midrule
    Our & $0.5/1.0$ & 4.75(2) & 11.50(178) & 4.00(1) & 9.80(82) & 3.27(1) & -0.37(15) & 3.94(2) & 5.64(66) & 3.21(1) & 4.67(66) \\ 
    scheme & $1.0/0.5$ & 4.75(2) & 11.40(158) & 3.96(2) & 8.77(99) & 3.25(1) & -0.94(57) & 3.96(2) & 5.99(101) & 3.23(1) & 4.79(51) \\ 
    & $1.5/1.0$ & 4.77(2) & 11.37(181) & 4.03(1) & 10.42(41) & 3.41(1) & 4.58(36) & 4.02(2) & 9.59(83) & 3.22(1) & 6.26(51) \\ 
    & $1.0/1.5$ & 4.77(2) & 11.56(157) & 3.99(3) & 9.17(98) & 3.42(2) & 4.71(88) & 4.00(1) & 9.00(63) & 3.23(2) & 6.19(43) \\
    & $2.0/1.0$ & 4.63(2) & 8.15(116) & 3.87(2) & 5.79(77) & 3.31(1) & 1.74(69) & 3.86(3) & 4.95(64) & 3.12(1) & 2.80(36) \\
    & $1.0/2.0$ & 4.62(3) & 8.25(129) & 3.87(1) & 6.08(72) & 3.32(1) & 1.67(56) & 3.88(2) & 5.74(43) & 3.13(1) & 3.17(76) \\
    & $1.0/1.0$ & 4.83(2) & 13.68(167) & 4.08(2) & 12.15(83) & 0.00(1) & 4.79(91) & 4.08(1) & 8.57(60) & 3.27(1) & 7.67(29) \\
    \midrule
    Strongly & $0.5/1.0$ & 4.22(3) & -1.17(121) & 3.49(4) & -4.74(1.34) & 2.93(2) & -10.09(96) & 3.60(3) & -3.92(127) & 3.05(2) & -1.12(38) \\ 
    invariant & $1.0/0.5$ & 4.27(3) & -0.07(100) & 3.48(3) & -5.20(131) & 2.95(1) & -9.19(76) & 3.57(3) & -4.49(75) & 3.06(2) & -0.58(96) \\ 
    & $1.5/1.0$ & 4.44(2) & 3.98(91) & 3.56(3) & -2.39(60) & 3.14(1) & -3.31(81) & 3.69(2) & -0.83(134) & 3.05(0) & -0.90(71) \\ 
    & $1.0/1.5$ & 4.35(2) & 1.71(61) & 3.63(3) & 0.85(1.15) & 3.10(1) & -4.35(116) & 3.71(1) & -0.63(88) & 3.08(2) & 0.22(86) \\
    & $2.0/1.0$ & 4.31(2) & 0.68(70) & 3.47(3) & -5.00(97) & 3.02(2) & -6.89(107) & 3.57(1) & -4.24(95) & 2.97(2) & -3.72(68) \\
    & $1.0/2.0$ & 4.29(1) & 0.73(73) & 3.50(2) & -4.17(118) & 3.02(2) & -6.78(77) & 3.55(2) & -4.55(121) & 2.98(1) & -3.25(68) \\
    & $1.0/1.0$ & 4.34(2) & 1.56(68) & 3.61(1) & -1.30(94) & 3.08(1) & -4.82(114) & 3.73(1) & 0.06(88) & 3.08(2) & 0.39(56) \\
    \bottomrule
\end{tabular}
\vspace{1ex}
\caption{LLM inference with diverse drafts; we use $L = 5$, $K = 2$ and the target temperature is $2.0$. The temperatures of drafters 1 and 2 vary and are reported in the second column. Token rates (TR) are shown as percentage speedups relative to single-draft speculative decoding with drafter temperature $1.0$, which has mean block efficiency (BE) $4.28$, $3.65$, $3.19$, $3.71$, $3.06$ and token rate $52.00$, $44.15$, $38.61$, $44.70$, $36.42$ \si{tokens/\second} on GSM8K, HumanEval, NaturalReasoning, MBPP and DROP respectively.}
\label{tab:full_results_non_iid}
\end{sidewaystable}

\paragraph{Results on Llama models.}
To illustrate our speculative decoding framework's effectiveness on other popular families of LLMs, we run a smaller set of experiments on open-source Llama models, mirroring those shown in \cref{tab:llm_inference_iid,tab:llm_inference_non_iid} in the main paper.
As in those results, we set $K = 8$.
In \cref{tab:llm_inference_iid_llama}, we use a lightweight Llama 2-\SI{68}{M} drafter distilled by \citet{miao24} for SpecInfer coupled with a Llama 2-\SI{7}{B} target, and set $K = 8$.
Since the small \SI{68}{M} drafter performs badly with a temperature mismatch, we use larger models from the Llama 3 series in \cref{tab:llm_inference_non_iid_llama} with a \SI{1}{B} drafter and \SI{8}{B} target.
There, we use $K = 2$.

\begin{table}
    \centering
    \setlength\tabcolsep{5pt}
    \footnotesize
    \begin{tabular}{@{}l
            S[table-alignment-mode=format, table-format=1.2(1), table-text-alignment=center, table-number-alignment=right, uncertainty-mode=separate]
            S[table-alignment-mode=format, table-format=2.2(1), table-text-alignment=center, table-number-alignment=right, uncertainty-mode=separate]
            S[table-alignment-mode=format, table-format=1.2(1), table-text-alignment=center, table-number-alignment=right, uncertainty-mode=separate]
            S[table-alignment-mode=format, table-format=+1.2(1), table-text-alignment=center, table-number-alignment=right, uncertainty-mode=separate]
            S[table-alignment-mode=format, table-format=1.2(1), table-text-alignment=center, table-number-alignment=right, uncertainty-mode=separate]
            S[table-alignment-mode=format, table-format=2.2(1), table-text-alignment=center, table-number-alignment=right, uncertainty-mode = separate]
        @{}}
        \toprule
        & \multicolumn{2}{c}{GSM8K} & \multicolumn{2}{c}{HumanEval} & \multicolumn{2}{c}{NaturalReasoning} \\
        Strategy & {BE} & {TR (\%)} & {BE} & {TR (\%)} & {BE} & {TR (\%)} \\ 
        \midrule
        SpecInfer \cite{miao24} & 2.35(0) & 14.16(155) & 1.98(1) & -0.29(66) & 2.14(1) & 11.44(73) \\ 
        SpecTr \cite{sun23} & 2.35(1) & 13.41(210) & 1.99(1) & -0.39(101) & 2.14(0) & 11.32(48) \\ 
        Our scheme & 2.34(1) & \color{red}14.65(190) & 2.00(1) & 1.58(74) & 2.13(0) & \color{red}12.08(38) \\
        \citet{daliri25} & 1.64(0) & 1.39(30) & 1.48(0) & \color{red}2.69(49) & 1.54(0) & 2.83(66) \\ 
        \bottomrule
    \end{tabular}
    \vspace{1ex}
    \caption{LLM inference with i.i.d.\ drafts using $L = 4$; the drafter is Llama 2-\SI{68}{M} and the target is Llama 2-\SI{7}{B}. Token rates (TR) are shown as percentage speedups relative to single-draft speculative decoding, which has mean block efficiency (BE) $1.65$, $1.48$ and $1.55$ on GSM8K, HumanEval and NaturalReasoning respectively.}
    \label{tab:llm_inference_iid_llama}
\end{table}

\begin{table}
    \centering
    \setlength\tabcolsep{2.5pt}
    \footnotesize
    \begin{tabular}{@{}lr
            S[table-alignment-mode=format, table-format=1.2(1), table-text-alignment=center, table-number-alignment=right, uncertainty-mode=separate]
            S[table-alignment-mode=format, table-format=+2.2(1), table-text-alignment=center, table-number-alignment=right, uncertainty-mode=separate]
            S[table-alignment-mode=format, table-format=1.2(1), table-text-alignment=center, table-number-alignment=right, uncertainty-mode=separate]
            S[table-alignment-mode=format, table-format=+1.2(1), table-text-alignment=center, table-number-alignment=right, uncertainty-mode=separate]
            S[table-alignment-mode=format, table-format=1.2(1), table-text-alignment=center, table-number-alignment=right, uncertainty-mode=separate]
            S[table-alignment-mode=format, table-format=+1.2(1), table-text-alignment=center, table-number-alignment=right, uncertainty-mode = separate]
        @{}}
        \toprule
        & & \multicolumn{2}{c}{GSM8K} & \multicolumn{2}{c}{HumanEval} & \multicolumn{2}{c}{MBPP} \\
        Strategy & Tmp. 1/2 & {BE} & {TR (\%)} & {BE} & {TR (\%)} & {BE} & {TR (\%)} \\ 
        \midrule
        SpecInfer & $0.5/1.0$ & 2.21(1) & -12.51(109) & 2.61(4) & -4.73(167) & 2.46(3) & -5.99(114) \\ 
        \cite{miao24} & $1.0/0.5$ & 2.44(2) & -2.47(88) & 2.79(3) & 3.02(117) & 2.65(2) & 1.09(185) \\
        & $1.0/1.0$ & 2.57(1) & 2.11(73) & 2.94(2) & 7.50(104) & 2.79(1) & 6.86(123) \\
        \midrule
        Our & $0.5/1.0$ & 2.67(3) & \color{red}5.88(146) & 3.63(2) & \color{red}33.85(111) & 3.33(2) & \color{red}25.71(181) \\ 
        scheme & $1.0/0.5$ & 2.69(2) & \color{red}6.33(149) & 3.67(4) & \color{red}34.67(163) & 3.31(3) & \color{red}23.87(147) \\
        & $1.0/1.0$ & 2.88(1) & \color{red}13.00(112) & 3.72(3) & \color{red}35.90(140) & 3.51(3) & \color{red}32.20(237) \\
        \bottomrule
    \end{tabular}
    \vspace{1ex}
    \caption{LLM inference with diverse drafts using $L=5$ and a target temperature of $2.0$; the drafter is Llama 3-\SI{1}{B} and the target is Llama 3-\SI{8}{B}. Token rates (TR) are shown as percentage speedups relative to single-draft speculative decoding with drafter temperature $1.0$, which has mean block efficiency (BE) $2.42$, $2.73$ and $2.61$ on GSM8K, HumanEval and MBPP respectively.}
    \label{tab:llm_inference_non_iid_llama}
\end{table}

\paragraph{Empirical matching probability, comparison to the LML bound and scaling with $K$.}
We further provide some numerical results in \cref{tab:emp_matching_prob} comparing the empirical average matching probability, using the target and proposal distributions for the next token seen during our speculative decoding experiments, to our lower bound in \cref{thm:lm_lemma}.
These demonstrate that the matching probability and lower bound both increase monotonically with $K$, and that the LML bound is quite close to the true probability for practical distributions of interest.

For completeness, we also show below the same empirical matching probabilities for SpecInfer \cite{miao24} and SpecTr \cite{sun23} in \cref{tab:emp_matching_prob_stsi}.
Our approach performs similarly to or in many cases better than these methods, even though they do not offer any drafter-invariance property.

\begin{table}
    \centering
    \footnotesize
    \begin{tabular}{@{}lr
        S[table-alignment-mode=format, table-format=1.3, table-text-alignment=center, table-number-alignment=right]
        S[table-alignment-mode=format, table-format=1.3, table-text-alignment=center, table-number-alignment=right]
        S[table-alignment-mode=format, table-format=1.3, table-text-alignment=center, table-number-alignment=right]
        S[table-alignment-mode=format, table-format=1.3, table-text-alignment=center, table-number-alignment=right]
        S[table-alignment-mode=format, table-format=1.3, table-text-alignment=center, table-number-alignment=right]
        S[table-alignment-mode=format, table-format=1.3, table-text-alignment=center, table-number-alignment=right]
        S[table-alignment-mode=format, table-format=1.3, table-text-alignment=center, table-number-alignment=right]
    @{}}
        \toprule
        {Dataset} & \multicolumn{2}{c}{$K=2$} & \multicolumn{2}{c}{$K=4$} & \multicolumn{2}{c}{$K=6$} & \multicolumn{2}{c}{$K=8$} \\
        & {Emp.} & {LML} & {Emp.} & {LML} & {Emp.} & {LML} & {Emp.} & {LML} \\
        \midrule
        GSM8K & 0.949 & 0.936 & 0.972 & 0.956 & 0.979 & 0.965 & 0.982 & 0.971 \\
        HumanEval & 0.919 & 0.903 & 0.950 & 0.931 & 0.962 & 0.945 & 0.971 & 0.953 \\
        NaturalReasoning & 0.862 & 0.844 & 0.905 & 0.884 & 0.926 & 0.904 & 0.937 & 0.916 \\
        \bottomrule
    \end{tabular}
    \vspace{1ex}
    \caption{Empirical (Emp.) matching probability during speculative decoding compared to the LML bound.}
    \label{tab:emp_matching_prob}
\end{table}

\begin{table}
    \centering
    \footnotesize
    \begin{tabular}{@{}lr
        S[table-alignment-mode=format, table-format=1.3, table-text-alignment=center, table-number-alignment=right]
        S[table-alignment-mode=format, table-format=1.3, table-text-alignment=center, table-number-alignment=right]
        S[table-alignment-mode=format, table-format=1.3, table-text-alignment=center, table-number-alignment=right]
        S[table-alignment-mode=format, table-format=1.3, table-text-alignment=center, table-number-alignment=right]
        S[table-alignment-mode=format, table-format=1.3, table-text-alignment=center, table-number-alignment=right]
        S[table-alignment-mode=format, table-format=1.3, table-text-alignment=center, table-number-alignment=right]
        S[table-alignment-mode=format, table-format=1.3, table-text-alignment=center, table-number-alignment=right]
    @{}}
        \toprule
        Dataset & \multicolumn{2}{c}{$K=2$} & \multicolumn{2}{c}{$K=4$} & \multicolumn{2}{c}{$K=6$} & \multicolumn{2}{c}{$K=8$} \\
        & {SI} & {ST} & {SI} & {ST} & {SI} & {ST} & {SI} & {ST} \\
        \midrule
        GSM8K & 0.948 & 0.927 & 0.969 & 0.938 & 0.976 & 0.944 & 0.981 & 0.948 \\
        HumanEval & 0.919 & 0.891 & 0.948 & 0.908 & 0.960 & 0.916 & 0.967 & 0.922 \\
        NaturalReasoning & 0.864 & 0.837 & 0.904 & 0.864 & 0.921 & 0.878 & 0.932 & 0.887 \\
        \bottomrule
    \end{tabular}
    \vspace{1ex}
    \caption{Empirical matching probability for both SpecInfer (SI) \cite{miao24} and SpecTr (ST) \cite{sun23}.}
    \label{tab:emp_matching_prob_stsi}
\end{table}

\paragraph{Absolute token rate measurements.}
Since concrete performance measurements are tightly coupled to our specific hardware configuration and unrelated optimizations enabled by the LLM inference library, in our case Huggingface Transformers \cite{wolf20}, we focus on relative speedups to provide a fair comparison between different speculative decoding algorithms.
Nevertheless, to provide context around our reported gains, it may be necessary to convert to throughput in tokens per second.
To facilitate this, we provide the throughput for single-draft speculative decoding in the captions of \cref{tab:full_results_iid,tab:full_results_non_iid}.
Since all other numbers are relative to this baseline, they may be converted to absolute quantities as needed.

\subsection{Ablation and empirical analysis of drafter invariance}\label{sec:drafter_invariance_ablation}
Here, we further justify our definition of drafter invariance by way of an ablation study that sets out to show that adopting drafter-invariant speculative decoding has a positive impact on decoding consistency across runs in practice when the draft model is changed.
We also provide some salient examples to demonstrate the property's robustness.

To quantify the effect of drafter invariance on decoding consistency, we run \num{200} prompts from the GSM8K \cite{cobbe21}, HumanEval \cite{chen21} and NaturalReasoning \cite{yuan25} datasets using $K=2$ drafts.
Each prompt is run twice using the same random seed.
The drafts are i.i.d., but the drafter model temperature is changed from \num{1.0} in the first run to \num{0.5} in the second.
We compute ROUGE scores \cite{lin04} to measure the similarity of the two generated output sequences; perfect decoding consistency would mean a score of \num{1.0} for all of the ROUGE-1, ROUGE-2 and ROUGE-L metrics.
We let the draft length be $L=4$ and average over 4 random seeds.
Results are given in \cref{tab:rouge_scores}.
Our conditionally drafter-invariant scheme givens higher similarity scores in each case compared to the non-invariant SpecInfer baseline, confirming that drafter invariance provides more consistent outputs when faced with changes to the draft model, even though our notion is not strong enough to guarantee that the outputs always remain exactly the same.
Moreover, in \cref{sec:strong_invariant} we demonstrate a slightly modified scheme that provides what we refer to as \emph{strong drafter invariance}, which provably gives perfect decoding consistency.
However, this comes at a cost to the block efficiency, as seen in \cref{tab:full_results_iid,tab:full_results_non_iid}.

\begin{table}
    \centering
    \footnotesize
    \begin{tabular}{@{}ll
        S[table-alignment-mode=format, table-format=1.3(1), table-text-alignment=center, table-number-alignment=right, uncertainty-mode=separate]
        S[table-alignment-mode=format, table-format=1.3(1), table-text-alignment=center, table-number-alignment=right, uncertainty-mode=separate]
        S[table-alignment-mode=format, table-format=1.3(1), table-text-alignment=center, table-number-alignment=right, uncertainty-mode=separate]
    @{}}
        \toprule
        Dataset & Algorithm & {ROUGE-1} & {ROUGE-2} & {ROUGE-L} \\
        \midrule
        GSM8K & SpecInfer & 0.737(4) & 0.592(7) & 0.647(8) \\
        & Our scheme & 0.801(5) & 0.701(11) & 0.745(9) \\
        \midrule
        HumanEval & SpecInfer & 0.656(3) & 0.466(6) & 0.545(5) \\
        & Our scheme & 0.710(1) & 0.555(3) & 0.628(2) \\
        \midrule
        MBPP & SpecInfer & 0.676(4) & 0.497(3) & 0.580(4) \\
        & Our scheme & 0.728(6) & 0.588(11) & 0.659(10) \\
        \bottomrule
    \end{tabular}
    \vspace{1ex}
    \caption{ROUGE scores measuring decoding consistency for our scheme and the non-invariant SpecInfer \cite{miao24}.}
    \label{tab:rouge_scores}
\end{table}

While the ROUGE scores from our ablation study provide a more comprehensive way to examine the real-world impact of drafter invariance, we also give a concrete example from GSM8K where non-invariant speculative decoding causes the output to diverge significantly from the original after changing the draft model, all while retaining the same random seed.
We adopt the same setting as before with $K=2$ i.i.d.\ drafts and the draft model temperature changing from \num{1.0} to \num{0.5} between runs, comparing against SpecInfer \cite{miao24}.
It should be emphasized that all forms of speculative decoding preserve the output distribution exactly; therefore, it is not the quality of outputs that is degraded but rather their stability or consistency faced with changes to the drafter.
In this particular example, our scheme achieves perfect consistency between the two trials; we do not guarantee this in general, but the ROUGE scores from our ablation study demonstrate that we do perform better in this sense on average.

\emph{Prompt:}
\begin{minted}[fontsize=\footnotesize,breaklines,breaksymbol={}]{text}
Weng earns $12 an hour for babysitting. Yesterday, she just did 50 minutes of babysitting. How much did she earn?
\end{minted}

\emph{SpecInfer (not invariant), draft temperature 1.0:}
\begin{minted}[fontsize=\footnotesize,breaklines,breaksymbol={}]{text}
To determine how much Weng earned for babysitting for 50 minutes, we first need to convert the time from minutes to hours. We know that there are 60 minutes in an hour, so we can convert 50 minutes to hours by dividing 50 by 60:
\[
\text{Time in hours} = \frac{50}{60} = \frac{5}{6} \text{ hours}
\]
Next, we calculate the earnings by multiplying the time in hours by her hourly rate of $12:
\[
\text{Earnings} = 12 \times \
\end{minted}

\emph{SpecInfer (not invariant), draft temperature 0.5:}
\begin{minted}[fontsize=\footnotesize,breaklines,breaksymbol={}]{text}
To determine how much Weng earned for her 50 minutes of babysitting, we need to follow these steps:
1. Identify Weng's hourly wage.
2. Convert the babysitting time from minutes to hours.
3. Calculate the earnings based on the time worked.
First, Weng's hourly wage is $12. Since she babysat for 50 minutes, we need to convert 50 minutes into hours. There are 60 minutes in an hour, so we can convert 50 minutes to hours by dividing 50 by 60:
\[ \text{Time in hours} = \frac{
\end{minted}

\emph{Our scheme (invariant), draft temperature 1.0:}
\begin{minted}[fontsize=\footnotesize,breaklines,breaksymbol={}]{text}
To determine how much Weng earned for 50 minutes of babysitting at a rate of $12 per hour, we need to follow these steps:
1. Convert the time from minutes to hours.
2. Multiply the time in hours by the hourly rate.
First, we convert 50 minutes to hours. Since there are 60 minutes in an hour, we divide 50 by 60:
\[
\frac{50}{60} = \frac{5}{6} \text{ hours}
\]
Next, we multiply the time in hours by the hourly rate of $12:
\end{minted}

\emph{Our scheme (invariant), draft temperature 0.5:}
\begin{minted}[fontsize=\footnotesize,breaklines,breaksymbol={}]{text}
To determine how much Weng earned for 50 minutes of babysitting at a rate of $12 per hour, we need to follow these steps:
1. Convert the time from minutes to hours.
2. Multiply the time in hours by the hourly rate.
First, we convert 50 minutes to hours. Since there are 60 minutes in an hour, we divide 50 by 60:
\[
\frac{50}{60} = \frac{5}{6} \text{ hours}
\]
Next, we multiply the time in hours by the hourly rate of $12:
\end{minted}

\subsection{Synthetic Gaussian source}\label{sec:gaussian_source_details}
In this section, we provide some key derivations and details of our experimental procedure for evaluating GLS on the synthetic Gaussian source, and give more numerical results.

\paragraph{Decoder target distribution.}
To determine the decoder target distribution $p_{W \mid T}$, we first recapitulate the problem setting and the random variables involved.
The Gaussian source is $A \sim \mathcal{N}(0, 1)$ while the target distribution at the encoder given $A = a$ is $p_{W \mid A}(\cdot \mid a) = \mathcal{N}(a, \sigma^2_{W \mid A})$.
Meanwhile, the side information at decoder $k$ is  $T_k = A + \zeta_k$, where $\zeta_k \sim \mathcal{N}(0, \sigma^2_{T \mid A})$.
Since we are only analyzing one decoder individually, we will drop the $k$ subscript in what follows and more simply write $T$ and $\zeta$.
To summarize, we have
\[
    W = A + \eta \text{ and } T = A + \zeta
\]
where $\eta$ and $\zeta$ are independent zero-mean Gaussians with variances $\sigma^2_\eta = \sigma^2_{W \mid A}$ and $\sigma^2_\zeta = \sigma^2_{T \mid A}$ respectively.
From this, we see that $W$ and $T$ are jointly distributed as
\begin{equation}\label{eqn:sigma_wt}
    \begin{bmatrix} W \\ T \end{bmatrix} \sim \mathcal{N}\left( 0, \Sigma_{W,T} \right), \text{ where } \Sigma_{W,T} = \begin{bmatrix} \E[W^2] & \E[W T] \\ \E[T W] & \E[T^2] \end{bmatrix} = \begin{bmatrix} 1 + \sigma^2_\eta & 1 \\ 1 & 1 + \sigma^2_\zeta \end{bmatrix}.
\end{equation}
The variance of $W$ is $\sigma^2_W = 1 + \sigma^2_\eta$ and that of $T$ is $\sigma^2_T = 1 + \sigma^2_\zeta$.
We then know that $p_{W \mid T}$ is a Gaussian distribution with mean and variance
\[
    \mu_{W \mid T} = \frac{T}{1 + \sigma^2_\zeta} = \frac{T}{\sigma^2_T}, \ \sigma^2_{W \mid T} = 1 + \sigma^2_\eta - \frac{1}{1 + \sigma^2_\zeta} = \sigma^2_W - \frac{1}{\sigma^2_T}.
\]
That is, $p_{W \mid T}(\cdot \mid t) = \mathcal{N}(t / \sigma^2_T, \sigma^2_W - 1 / \sigma^2_T)$ as asserted in the main paper.

\paragraph{MMSE estimator.}
We now derive the MMSE estimator for the synthetic Gaussian source when side information is available at the decoder.
To find the estimator, we assume that the encoder and decoder indices match, i.e.\ $W_k = W$.
Recall that \cref{prop:coding_error_probability} in the main paper gives a lower bound for the probability of this event.
We proceed by finding the joint distribution of $A$, $W$ and $T$.
In fact,
\[
    \begin{bmatrix} A \\ W \\ T \end{bmatrix} \sim \mathcal{N} \left( 0, \begin{bmatrix} 1 & \Sigma_{A,(W,T)} \\ \Sigma_{(W,T),A} & \Sigma_{W,T} \end{bmatrix} \right), \text{ where } \Sigma_{(W,T),A} = \Sigma_{A,(W,T)}^T = \begin{bmatrix} \E[A W] \\ \E[A T] \end{bmatrix} = \begin{bmatrix} 1 \\ 1 \end{bmatrix}
\]
and $\Sigma_{W,T}$ was found earlier in \eqref{eqn:sigma_wt}.
Then, the MMSE estimate is
\[
    \hat{A} = \E[A \mid W, \; T] = \Sigma_{A,(W,T)} \Sigma_{W,T}^{-1} \begin{bmatrix} W \\ T \end{bmatrix} = \frac{\sigma^2_\zeta W + \sigma^2_\eta T}{\sigma^2_\eta + \sigma^2_\zeta + \sigma^2_\eta \sigma^2_\zeta}.
\]
To conclude, given $T_k = t_k$ and $W_k = w_k$, the reconstruction output by decoder $k$ is given by
\[
    g(w_k, t_k) =  \frac{\sigma^2_\zeta w_k + \sigma^2_\eta t_k}{\sigma^2_\eta + \sigma^2_\zeta + \sigma^2_\eta \sigma^2_\zeta}.
\]

\paragraph{Experiment parameters and further results.}
First, we briefly outline the parameters used for the experiment.
The number of samples from the prior is $N = 2^{15}$ for all tests and the source, as mentioned in the main paper, is $A \sim \mathcal{N}(0, 1)$.
The conditional variance of the side information given $A$ is fixed at $\sigma^2_{T \mid A} = 0.5$ throughout.
We control the rate by varying $L_{\mathrm{max}}$, considering $L_{\mathrm{max}} \in \{ 2^1, 2^2, 2^3, 2^4, 2^5, 2^6 \}$.
For each $L_{\mathrm{max}}$, the resulting distortion is minimized over the encoder's target distribution by exploring different values of $\sigma^2_{W \mid A}$, selecting from $\sigma^2_{W \mid A} \in \{ 0.01, 0.008, 0.006, 0.005, 0.003, 0.002, 0.001 \}$ and choosing the best across $10^4$ trials.
The distortion incurred by the best configuration is then further evaluated on $10^5$ trials.
This procedure is carried out for $K \in \{1, 2, 3, 4\}$, where $K$ is the number of decoders.
Finally, the entire experiment is repeated 10 times and the results are averaged to obtain those reported in \cref{tab:gaussian_results_lml}.
The error bars show one standard error of the mean, calculated as in \cref{sec:speculative_decoding_details} using all 10 trials.

Running one full repetition of the experiment takes around 4 hours when performing the calculations on a Nvidia Tesla T4 GPU with \SI{16}{\giga\byte} of memory.
The exact same procedure is used to generate results for the baseline scheme described in the main paper, and these are shown in \cref{tab:gaussian_results_baseline}.
We also show the value of $\sigma^2_{W \mid A}$ that most often minimizes the distortion in each case.

\begin{table}[b]
    \centering
    \footnotesize
    \begin{subtable}[b]{0.48\linewidth}
        \centering
        \begin{tabular}{@{}ll
                S[table-alignment-mode=format, table-format=1.3, table-text-alignment=right, table-number-alignment=right]
                S[table-alignment-mode=format, table-format=+2.4(1), table-text-alignment=right, table-number-alignment=right, uncertainty-mode=separate]
            @{}}
            \toprule
            $K$ & $L_{\mathrm{max}}$ & {$\sigma^2_{W \mid A}$} & {Distortion (\si{\deci\bel})} \\
            \midrule
            1 & $2^1$ & 0.008 & -9.7032(193) \\
            & $2^2$ & 0.010 & -12.7474(226) \\
            & $2^3$ & 0.010 & -16.0116(369) \\
            & $2^4$ & 0.003 & -19.5491(237) \\
            & $2^5$ & 0.002 & -23.4012(132) \\
            & $2^6$ & 0.001 & -27.3470(183) \\
            \midrule
            2 & $2^1$ & 0.010 & -15.2069(148) \\
            & $2^2$ & 0.005 & -18.3377(164) \\
            & $2^3$ & 0.002 & -21.7032(101) \\
            & $2^4$ & 0.001 & -25.3886(104) \\
            & $2^5$ & 0.001 & -28.8619(169) \\
            & $2^6$ & 0.001 & -31.4737(152) \\
            \bottomrule
        \end{tabular}
        \caption*{\vspace{-1em}}
    \end{subtable}
    \hfill
    \begin{subtable}[b]{0.48\linewidth}
        \centering
        \sisetup{table-text-alignment=right}
        \begin{tabular}{@{}ll
                S[table-alignment-mode=format, table-format=1.3, table-text-alignment=right, table-number-alignment=right]
                S[table-alignment-mode=format, table-format=+2.4(1), table-text-alignment=right, table-number-alignment=right, uncertainty-mode=separate]
            @{}}
            \toprule
            $K$ & $L_{\mathrm{max}}$ & {$\sigma^2_{W \mid A}$} & {Distortion (\si{\deci\bel})} \\
            \midrule
            3 & $2^1$ & 0.005 & -18.3884(163) \\
            & $2^2$ & 0.003 & -21.6187(164) \\
            & $2^3$ & 0.001 & -25.0515(213) \\
            & $2^4$ & 0.001 & -28.5329(128) \\
            & $2^5$ & 0.001 & -31.2575(161) \\
            & $2^6$ & 0.001 & -33.1515(108) \\
            \midrule
            4 & $2^1$ & 0.005 & -20.6834(176) \\
            & $2^2$ & 0.001 & -23.9418(197) \\
            & $2^3$ & 0.001 & -27.4313(106) \\
            & $2^4$ & 0.001 & -30.4379(188) \\
            & $2^5$ & 0.001 & -32.6616(106) \\
            & $2^6$ & 0.001 & -34.1082(102) \\
            \bottomrule
        \end{tabular}
        \caption*{\vspace{-1em}}
    \end{subtable}
    \vspace{-2ex}
    \caption{Results using GLS with a Gaussian source.}
    \label{tab:gaussian_results_lml}
\end{table}

\begin{table}[t]
    \centering
    \footnotesize
    \begin{subtable}[b]{0.48\linewidth}
        \centering
        \begin{tabular}{@{}ll
                S[table-alignment-mode=format, table-format=1.3, table-text-alignment=right, table-number-alignment=right]
                S[table-alignment-mode=format, table-format=+2.4(1), table-text-alignment=right, table-number-alignment=right, uncertainty-mode=separate]
            @{}}
            \toprule
            $K$ & $L_{\mathrm{max}}$ & {$\sigma^2_{W \mid A}$} & {Distortion (\si{\deci\bel})} \\
            \midrule
            1 & $2^1$ & 0.010 & -9.7163(195) \\
            & $2^2$ & 0.008 & -12.6968(273) \\
            & $2^3$ & 0.008 & -16.0124(153) \\
            & $2^4$ & 0.003 & -19.5518(270) \\
            & $2^5$ & 0.001 & -23.3905(105) \\
            & $2^6$ & 0.001 & -27.3705(135) \\
            \midrule
            2 & $2^1$ & 0.010 & -12.5143(356) \\
            & $2^2$ & 0.010 & -15.9916(205) \\
            & $2^3$ & 0.008 & -19.4843(137) \\
            & $2^4$ & 0.003 & -23.1495(240) \\
            & $2^5$ & 0.001 & -27.1988(92) \\
            & $2^6$ & 0.001 & -30.7772(169) \\
            \bottomrule
        \end{tabular}
        \caption*{\vspace{-1em}}
    \end{subtable}
    \hfill
    \begin{subtable}[b]{0.48\linewidth}
        \centering
        \begin{tabular}{@{}ll
                S[table-alignment-mode=format, table-format=1.3, table-text-alignment=right, table-number-alignment=right]
                S[table-alignment-mode=format, table-format=+2.4(1), table-text-alignment=right, table-number-alignment=right, uncertainty-mode=separate]
            @{}}
            \toprule
            $K$ & $L_{\mathrm{max}}$ & {$\sigma^2_{W \mid A}$} & {Distortion (\si{\deci\bel})} \\
            \midrule
            3 & $2^1$ & 0.010 & -13.8197(368) \\
            & $2^2$ & 0.010 & -17.4640(211) \\
            & $2^3$ & 0.010 & -21.0096(185) \\
            & $2^4$ & 0.003 & -24.6996(126) \\
            & $2^5$ & 0.001 & -28.6864(123) \\
            & $2^6$ & 0.001 & -32.0109(156) \\
            \midrule
            4 & $2^1$ & 0.010 & -14.6125(272) \\
            & $2^2$ & 0.010 & -18.3350(157) \\
            & $2^3$ & 0.008 & -21.9300(238) \\
            & $2^4$ & 0.002 & -25.5269(210) \\
            & $2^5$ & 0.001 & -29.4994(100) \\
            & $2^6$ & 0.001 & -32.7141(208) \\
            \bottomrule
        \end{tabular}
        \caption*{\vspace{-1em}}
    \end{subtable}
    \vspace{-2ex}
    \caption{Results using the baseline decoding scheme with a Gaussian source.}
    \label{tab:gaussian_results_baseline}
\end{table}

\subsection{Distributed image compression}\label{sec:image_compression_details}
We now give details on our distributed image compression experiments.

\paragraph{Neural network architectures.}
The following notations are used to denote the different layers in our networks:
\begin{enumerate}
    \item $\operatorname{conv}(a, b, c, d, e)$: A convolution layer with $a$ input features, $b$ output features, kernel size $c$, stride $d$ and input padding $e$.
    \item $\operatorname{upconv}(a, b, c, d, e, f)$: A transposed convolution layer with $a$ input features, $b$ output features, kernel size $c$, stride $d$, input padding $e$ and output padding $f$.
    \item $\operatorname{fc}(a, b)$: A fully-connected layer with input size $a$ and output size $b$.
    \item $\operatorname{do}(p)$: A dropout layer with dropout probability $p$.
    \item \raggedright $\operatorname{cat}(a, b)$: Concatenates two tensors of shapes $a$ and $b$.
\end{enumerate}
The network layers are enumerated in \cref{tab:network_lists} and follow the network constructions in \citet{phan24}.
The encoder's target distribution $p_{W \mid A}$ is taken to be a four-dimensional Gaussian with uncorrelated components, where the mean and variance of each component are generated by the encoder network from an input image.
More precisely, if we let the image be $a$, the encoder network produces two embeddings $e_1(a)$ and $e_2(a)$, each in $\mathbb{R}^4$.
Then, $p_{W \mid A}(\cdot \mid a) = \mathcal{N}(e_1(a), \operatorname{diag}(e_2(a)))$, and we arbitrarily choose $W \sim \mathcal{N}(0, 1)$ as the marginal distribution, which is also the $\beta$-VAE's prior.
On the other hand, decoder $k$ is tasked with generating a reconstruction given the side information $t_k$ and an embedding $w_k \in \mathbb{R}^4$, which is selected depending on the message sent by the encoder.
Rather than using the $14 \times 14$ side information image directly, we employ a projection network to extract a length-128 feature vector $e(t_k)$ before feeding this representation into the decoder network along with $w_k$ to get $\hat{a}^{(k)} = g(w_k, e(t_k))$ for $1 \leq k \leq K$.
The final estimate $\hat{a}$ is chosen from among the $\hat{a}^{(k)}$'s such that the distortion is minimized.

The estimator network is another important component of our compression protocol, since it acts as a proxy for $p_{W \mid T}$.
Recall from \cref{sec:coding_scheme} and its extension in \cref{sec:importance_sampling} that this distribution is used to select the index at the decoder; using this index, decoder $k$ picks $W_k$ from the shared list of samples taken from the prior.
In practice, the estimator network takes a $14 \times 14$ side information image as its input and extracts 128-dimensional features, which are then concatenated with a sample $w \in \mathbb{R}^4$.
The final part of the network is classifies whether this joint embedding comes from the joint distribution $p_{W, T}$ or the product of the marginals $p_W p_T$.
Its output therefore stands in for $p_{W \mid T}$.

\begin{table}[t]
    \centering
    \footnotesize
    \begin{subtable}[c]{0.42\linewidth}
        \centering
        \begin{tabular}{@{}lr@{}}
            \toprule
            0 & Input $(1 \times 28 \times 28)$ \\
            1 & $\operatorname{conv}(1, 128, 3, 1, 1)$, ReLU \\
            2 & $\operatorname{conv}(128, 128, 3, 2, 1)$, ReLU \\
            3 & $\operatorname{conv}(128, 128, 3, 2, 1)$, ReLU \\
            4 & $\operatorname{fc}(6272, 512)$, ReLU \\
            5 & $\operatorname{fc}(512, 8)$ \\
            \bottomrule
        \end{tabular}
        \caption{Encoder}
    \end{subtable}
    \hfill
    \begin{subtable}[c]{0.57\linewidth}
        \centering
        \begin{tabular}{@{}lr@{}}
            \toprule
            0 & Input $(132)$ \\
            1 & $\operatorname{fc}(132, 512)$, ReLU \\
            2 & $\operatorname{fc}(132, 6272)$, ReLU \\
            3 & $\operatorname{upconv}(128, 64, 3, 2, 1, 1)$, ReLU \\
            4 & $\operatorname{upconv}(64, 32, 3, 2, 1, 1)$, $\operatorname{do}(0.5)$, ReLU \\
            5 & $\operatorname{upconv}(32, 1, 3, 1, 1, 0)$, $\tanh$ \\
            \bottomrule
        \end{tabular}
        \caption{Decoder}
    \end{subtable}

    \begin{subtable}[c]{0.42\linewidth}
        \centering
        \begin{tabular}{@{}lr@{}}
            \toprule
            0 & Input $(1 \times 14 \times 14)$ \\
            1 & $\operatorname{conv}(1, 32, 3, 1, 1)$, ReLU \\
            2 & $\operatorname{conv}(32, 64, 3, 2, 1)$, ReLU \\
            3 & $\operatorname{conv}(64, 128, 3, 2, 1)$, ReLU \\
            4 & $\operatorname{fc}(2048, 512)$, ReLU \\
            5 & $\operatorname{fc}(512, 128)$ \\
            \bottomrule
        \end{tabular}
        \caption{Projection}
    \end{subtable}
    \hfill
    \begin{subtable}[c]{0.57\linewidth}
        \centering
        \begin{tabular}{@{}lr@{}}
            \toprule
            0 & Input $(1 \times 14 \times 14)$ \\
            1 & $\operatorname{conv}(1, 32, 3, 1, 1)$, ReLU \\
            2 & $\operatorname{conv}(32, 64, 3, 2, 1)$, ReLU \\
            3 & $\operatorname{conv}(64, 128, 3, 2, 1)$, ReLU \\
            4 & $\operatorname{fc}(2048, 512)$, ReLU \\
            5 & $\operatorname{fc}(512, 128)$, $\operatorname{cat}(128, 4)$ \\
            6 & $\operatorname{fc}(132, 128)$, LeakyReLU \\
            7 & $\operatorname{fc}(128, 128)$, LeakyReLU \\
            8 & $\operatorname{fc}(128, 128)$, LeakyReLU \\
            9 & $\operatorname{fc}(128, 128)$, LeakyReLU \\
            10 & $\operatorname{fc}(128, 1)$, Sigmoid \\
            \bottomrule
        \end{tabular}
        \caption{Estimator}
    \end{subtable}
    \vspace{-1.5ex}
    \caption{Neural network architectures.}
    \label{tab:network_lists}
\end{table}

\paragraph{Network loss functions.}
The $\beta$-VAE is trained using the rate-distortion loss function
\[
    \mathcal{L}_{\mathrm{VAE}}(a, \hat{a}) = \beta (a - \hat{a}) - D_{\mathrm{KL}} [ p_{W \mid A}(\cdot \mid a) \, \Vert \, p_W].
\]
Note that since the marginal and conditional distributions $p_W$ and $p_{W \mid A}$ are both Gaussian, $D_{\mathrm{KL}}$ has a closed form.
The neural estimator uses the binary cross-entropy (BCE) loss function.
If we let its output as a function of side information $t$ and given sample $w$ be $h(w, t)$, the loss is
\[
    \mathcal{L}_{\mathrm{estimator}}(w, t) = \operatorname{BCE}(h(w, t), \mathds{1} \{ w \text{ was sampled from } p_{W \mid T}(\cdot \mid t) \})
\]
where $\mathds{1}$ is the indicator function.

\paragraph{Training and evaluation procedure.}
We use the MNIST dataset \cite{lecun98} with the usual train-test split of \num{60000} and \num{10000} images respectively and batch size $64$.
All models are trained for 30 epochs on a Nvidia Tesla T4 GPU with \SI{16}{\giga\byte} of memory using the Adam optimizer \cite{kingma15}.
The learning rate is $10^{-3}$ and we set $\beta_1 = 0.9$, $\beta_2 = 0.99$.
The encoder, decoder, projection and estimator networks are trained jointly in an end-to-end manner, and we create sets of models for $\beta \in \{ 0.15, 0.35, 0.55, 0.75, 0.95 \}$ to cover a broad range of rate-distortion tradeoffs at the encoder side.
Jointly training the networks takes around 45 minutes for each $\beta$.

At test time, we vary $L_{\mathrm{max}}$ to control the rate, considering $L_{\mathrm{max}} \in \{ 2^2, 2^3, 2^4, 2^5, 2^6 \}$.
For each configuration, we additionally optimize over $N$, which is the number of samples from the prior, and the VAE parameter $\beta$ using a grid search where $N \in \{ 2^7, 2^8, 2^9, 2^{10}, 2^{11}, 2^{12} \}$ and $\beta \in \{ 0.15, 0.35, 0.55, 0.75, 0.95 \}$.
The experiment is repeated 5 times and the results averaged, with the same procedure also being followed for the baseline scheme described in the main paper.
We provide error bars showing one standard error of the mean, which is again calculated as in \cref{sec:speculative_decoding_details} with the number of trials now being 5.
Each instance of the full experiment takes approximately 6 hours to run, and this is done for $K \in \{1, 2, 3, 4 \}$.
Complete results are given in \cref{tab:mnist_results_lml,tab:mnist_results_baseline}, where we also give the values of $N$ and $\beta$ that are most often optimal in each case.

\begin{table}[hbtp]
    \centering
    \footnotesize
    \begin{subtable}[b]{0.48\linewidth}
        \centering
        \begin{tabular}{@{}lll
                S[table-alignment-mode=format, table-format=1.2, table-text-alignment=right, table-number-alignment=right]
                S[table-alignment-mode=format, table-format=1.4(1), table-text-alignment=right, table-number-alignment=right, uncertainty-mode=separate]
            @{}}
            \toprule
            $K$ & $L_{\mathrm{max}}$ & $N$ & {$\beta$} & {MSE} \\
            \midrule
            1 & $2^2$ & $2^7$ & 0.15 & 0.1027(2) \\
            & $2^3$ & $2^7$ & 0.15 & 0.0942(1) \\
            & $2^4$ & $2^7$ & 0.15 & 0.0852(2) \\
            & $2^5$ & $2^7$ & 0.15 & 0.0766(1) \\
            & $2^6$ & $2^7$ & 0.15 & 0.0693(2) \\
            \midrule
            2 & $2^2$ & $2^9$ & 0.15 & 0.0860(1) \\
            & $2^3$ & $2^7$ & 0.15 & 0.0792(1) \\
            & $2^4$ & $2^7$ & 0.15 & 0.0734(1) \\
            & $2^5$ & $2^7$ & 0.15 & 0.0687(2) \\
            & $2^6$ & $2^7$ & 0.35 & 0.0636(2) \\
            \bottomrule
        \end{tabular}
        \caption*{\vspace{-1em}}
    \end{subtable}
    \hfill
    \begin{subtable}[b]{0.48\linewidth}
        \centering
        \begin{tabular}{@{}lll
                S[table-alignment-mode=format, table-format=1.2, table-text-alignment=right, table-number-alignment=right]
                S[table-alignment-mode=format, table-format=1.4(1), table-text-alignment=right, table-number-alignment=right, uncertainty-mode=separate]
            @{}}
            \toprule
            $K$ & $L_{\mathrm{max}}$ & $N$ & {$\beta$} & {MSE} \\
            \midrule
            3 & $2^2$ & $2^9$ & 0.15 & 0.0791(0) \\
            & $2^3$ & $2^7$ & 0.15 & 0.0738(2) \\
            & $2^4$ & $2^7$ & 0.15 & 0.0694(1) \\
            & $2^5$ & $2^7$ & 0.15 & 0.0660(1) \\
            & $2^6$ & $2^8$ & 0.35 & 0.0599(2) \\
            \midrule
            4 & $2^2$ & $2^7$ & 0.15 & 0.0751(1) \\
            & $2^3$ & $2^7$ & 0.15 & 0.0710(1) \\
            & $2^4$ & $2^7$ & 0.15 & 0.0671(1) \\
            & $2^5$ & $2^8$ & 0.35 & 0.0635(1) \\
            & $2^6$ & $2^8$ & 0.35 & 0.0564(1) \\
            \bottomrule
        \end{tabular}
        \caption*{\vspace{-1em}}
    \end{subtable}
    \vspace{-2ex}
    \caption{Results using GLS for distributed image compression on MNIST.}
    \label{tab:mnist_results_lml}
\end{table}

\begin{table}[hbtp]
    \centering
    \footnotesize
    \begin{subtable}[b]{0.48\linewidth}
        \centering
        \begin{tabular}{@{}lll
                S[table-alignment-mode=format, table-format=1.2, table-text-alignment=right, table-number-alignment=right]
                S[table-alignment-mode=format, table-format=1.4(1), table-text-alignment=right, table-number-alignment=right, uncertainty-mode=separate]
            @{}}
            \toprule
            $K$ & $L_{\mathrm{max}}$ & $N$ & {$\beta$} & {MSE} \\
            \midrule
            1 & $2^2$ & $2^7$ & 0.15 & 0.1025(1) \\
            & $2^3$ & $2^7$ & 0.15 & 0.0937(2) \\
            & $2^4$ & $2^7$ & 0.15 & 0.0850(2) \\
            & $2^5$ & $2^7$ & 0.15 & 0.0764(2) \\
            & $2^6$ & $2^7$ & 0.15 & 0.0693(2) \\
            \midrule
            2 & $2^2$ & $2^7$ & 0.15 & 0.0941(2) \\
            & $2^3$ & $2^7$ & 0.15 & 0.0865(2) \\
            & $2^4$ & $2^7$ & 0.15 & 0.0783(2) \\
            & $2^5$ & $2^7$ & 0.15 & 0.0718(2) \\
            & $2^6$ &$2^7$  & 0.15 & 0.0669(1) \\
            \bottomrule
        \end{tabular}
        \caption*{\vspace{-1em}}
    \end{subtable}
    \hfill
    \begin{subtable}[b]{0.48\linewidth}
        \centering
        \begin{tabular}{@{}lll
                S[table-alignment-mode=format, table-format=1.2, table-text-alignment=right, table-number-alignment=right]
                S[table-alignment-mode=format, table-format=1.4(2), table-text-alignment=right, table-number-alignment=right, uncertainty-mode=separate]
            @{}}
            \toprule
            $K$ & $L_{\mathrm{max}}$ & $N$ & {$\beta$} & {MSE} \\
            \midrule
            3 & $2^2$ & $2^7$ & 0.15 & 0.0906(2) \\
            & $2^3$ & $2^7$ & 0.15 & 0.0832(3) \\
            & $2^4$ & $2^7$ & 0.15 & 0.0757(2) \\
            & $2^5$ & $2^7$ & 0.15 & 0.0704(1) \\
            & $2^6$ & $2^7$ & 0.35 & 0.0653(1) \\
            \midrule
            4 & $2^2$ & $2^9$ & 0.15 & 0.0886(1) \\
            & $2^3$ & $2^7$ & 0.15 & 0.0815(1) \\
            & $2^4$ & $2^7$ & 0.15 & 0.0747(2) \\
            & $2^5$ & $2^7$ & 0.15 & 0.0694(2) \\
            & $2^6$ & $2^7$ & 0.35 & 0.0639(2) \\
            \bottomrule
        \end{tabular}
        \caption*{\vspace{-1em}}
    \end{subtable}
    \vspace{-2ex}
    \caption{Results using the baseline decoding scheme for distributed image compression on MNIST.}
    \label{tab:mnist_results_baseline}
\end{table}

\paragraph{More compression experiments on CIFAR-10.}
To show that GLS generalizes to more complex image datasets, we provide some further results on CIFAR-10 \cite{krizhevsky09} under the same setup as our MNIST experiments.
The side information samples are now $16 \times 16$ patches uniformly selected from the left half of the 3-channel, $32 \times 32$ image.
Results for the MSE distortion are shown in \cref{tab:cifar_results_lml,tab:cifar_results_baseline}, where we vary both $K$ and $L_{\mathrm{max}}$.
Note that the baseline and GLS compression schemes are functionally identical for $K = 1$, but as is the case on MNIST, GLS offers gains for $K > 1$.
For each $K, L_{\mathrm{max}}$ pair, setting $N = 2^7$ and $\beta = 0.15$ gave the strongest results. 

\begin{table}[hbtp]
    \centering
    \footnotesize
    \begin{subtable}[b]{0.48\linewidth}
        \centering
        \begin{tabular}{@{}lll
                S[table-alignment-mode=format, table-format=1.2, table-text-alignment=right, table-number-alignment=right]
                S[table-alignment-mode=format, table-format=1.4(1), table-text-alignment=right, table-number-alignment=right, uncertainty-mode=separate]
            @{}}
            \toprule
            $K$ & $L_{\mathrm{max}}$ & $N$ & {$\beta$} & {MSE} \\
            \midrule
            1 & $2^2$ & $2^7$ & 0.15 & 0.1119(3) \\
            & $2^3$ & $2^7$ & 0.15 & 0.1054(2) \\
            & $2^4$ & $2^7$ & 0.15 & 0.0963(1) \\
            & $2^5$ & $2^7$ & 0.15 & 0.0846(3) \\
            & $2^6$ & $2^7$ & 0.15 & 0.0710(1) \\
            \midrule
            2 & $2^2$ & $2^7$ & 0.15 & 0.0925(1) \\
            & $2^3$ & $2^7$ & 0.15 & 0.0860(2) \\
            & $2^4$ & $2^7$ & 0.15 & 0.0787(3) \\
            & $2^5$ & $2^7$ & 0.15 & 0.0698(3) \\
            & $2^6$ & $2^7$ & 0.15 & 0.0616(2) \\
            \bottomrule
        \end{tabular}
        \caption*{\vspace{-1em}}
    \end{subtable}
    \hfill
    \begin{subtable}[b]{0.48\linewidth}
        \centering
        \begin{tabular}{@{}lll
                S[table-alignment-mode=format, table-format=1.2, table-text-alignment=right, table-number-alignment=right]
                S[table-alignment-mode=format, table-format=1.4(1), table-text-alignment=right, table-number-alignment=right, uncertainty-mode=separate]
            @{}}
            \toprule
            $K$ & $L_{\mathrm{max}}$ & $N$ & {$\beta$} & {MSE} \\
            \midrule
            3 & $2^2$ & $2^7$ & 0.15 & 0.0839(1) \\
            & $2^3$ & $2^7$ & 0.15 & 0.0778(3) \\
            & $2^4$ & $2^7$ & 0.15 & 0.0710(1) \\
            & $2^5$ & $2^7$ & 0.15 & 0.0641(0) \\
            & $2^6$ & $2^7$ & 0.15 & 0.0590(1) \\
            \midrule
            4 & $2^2$ & $2^7$ & 0.15 & 0.0788(1) \\
            & $2^3$ & $2^7$ & 0.15 & 0.0733(1) \\
            & $2^4$ & $2^7$ & 0.15 & 0.0673(2) \\
            & $2^5$ & $2^7$ & 0.15 & 0.0618(1) \\
            & $2^6$ & $2^7$ & 0.15 & 0.0579(1) \\
            \bottomrule
        \end{tabular}
        \caption*{\vspace{-1em}}
    \end{subtable}
    \vspace{-2ex}
    \caption{Results using GLS for distributed image compression on CIFAR-10.}
    \label{tab:cifar_results_lml}
\end{table}

\begin{table}[hbtp]
    \centering
    \footnotesize
    \begin{subtable}[b]{0.48\linewidth}
        \centering
        \begin{tabular}{@{}lll
                S[table-alignment-mode=format, table-format=1.2, table-text-alignment=right, table-number-alignment=right]
                S[table-alignment-mode=format, table-format=1.4(1), table-text-alignment=right, table-number-alignment=right, uncertainty-mode=separate]
            @{}}
            \toprule
            $K$ & $L_{\mathrm{max}}$ & $N$ & {$\beta$} & {MSE} \\
            \midrule
            1 & $2^2$ & $2^7$ & 0.15 & 0.1113(1) \\
            & $2^3$ & $2^7$ & 0.15 & 0.1048(2) \\
            & $2^4$ & $2^7$ & 0.15 & 0.0959(3) \\
            & $2^5$ & $2^7$ & 0.15 & 0.0846(3) \\
            & $2^6$ & $2^7$ & 0.15 & 0.0710(2) \\
            \midrule
            2 & $2^2$ & $2^7$ & 0.15 & 0.1019(3) \\
            & $2^3$ & $2^7$ & 0.15 & 0.0958(2) \\
            & $2^4$ & $2^7$ & 0.15 & 0.0867(3) \\
            & $2^5$ & $2^7$ & 0.15 & 0.0767(1) \\
            & $2^6$ & $2^7$ & 0.15 & 0.0659(2) \\
            \bottomrule
        \end{tabular}
        \caption*{\vspace{-1em}}
    \end{subtable}
    \hfill
    \begin{subtable}[b]{0.48\linewidth}
        \centering
        \begin{tabular}{@{}lll
                S[table-alignment-mode=format, table-format=1.2, table-text-alignment=right, table-number-alignment=right]
                S[table-alignment-mode=format, table-format=1.4(2), table-text-alignment=right, table-number-alignment=right, uncertainty-mode=separate]
            @{}}
            \toprule
            $K$ & $L_{\mathrm{max}}$ & $N$ & {$\beta$} & {MSE} \\
            \midrule
            3 & $2^2$ & $2^7$ & 0.15 & 0.0982(2) \\
            & $2^3$ & $2^7$ & 0.15 & 0.0914(2) \\
            & $2^4$ & $2^7$ & 0.15 & 0.0831(2) \\
            & $2^5$ & $2^7$ & 0.15 & 0.0734(4) \\
            & $2^6$ & $2^7$ & 0.15 & 0.0639(1) \\
            \midrule
            4 & $2^2$ & $2^7$ & 0.15 & 0.0956(2) \\
            & $2^3$ & $2^7$ & 0.15 & 0.0888(2) \\
            & $2^4$ & $2^7$ & 0.15 & 0.0812(1) \\
            & $2^5$ & $2^7$ & 0.15 & 0.0718(3) \\
            & $2^6$ & $2^7$ & 0.15 & 0.0629(1) \\
            \bottomrule
        \end{tabular}
        \caption*{\vspace{-1em}}
    \end{subtable}
    \vspace{-2ex}
    \caption{Results using the baseline decoding scheme for distributed image compression on CIFAR-10.}
    \label{tab:cifar_results_baseline}
\end{table}

\end{document}